\documentclass{article} %
\usepackage{iclr2024_conference,times}

\usepackage{amsmath,amsfonts,bm}

\def\eqref#1{equation~\ref{#1}}

\def\1{\bm{1}}

\DeclareMathAlphabet{\mathsfit}{\encodingdefault}{\sfdefault}{m}{sl}
\SetMathAlphabet{\mathsfit}{bold}{\encodingdefault}{\sfdefault}{bx}{n}

\newcommand{\indep}{\perp \!\!\! \perp}

\newcommand{\E}{\mathbb{E}}

\newcommand{\R}{\mathbb{R}}

\newcommand{\Var}{\mathrm{Var}}

\newcommand{\Cov}{\mathrm{Cov}}

\DeclareMathOperator*{\argmin}{arg\,min}

\usepackage[protrusion=true,expansion=true]{}

\usepackage{amsmath}
\usepackage{amsthm}
\usepackage{amssymb}
\usepackage{bbm}
\usepackage{mathtools}
\usepackage{mathrsfs}
\newcommand\pref[1]{(\ref{#1})}

\newtheorem{thm}{Theorem}[]

\newtheorem{lemma}{Lemma}[]
\newtheorem{prop}{Property}[]
\newtheorem{ass}{Assumption}[]
\newtheorem{cor}{Corollary}[]
\newtheorem{rem}{Remark}[]

\usepackage{courier} %

\usepackage{lipsum} %

\usepackage{graphicx}
\usepackage{subcaption}
\usepackage[space]{grffile} %

\usepackage{natbib}

\usepackage[pagebackref=true]{hyperref}
\renewcommand*\backref[1]{\ifx#1\relax \else $^\text{[Page(s): #1]}$\fi}

\usepackage{theoremref}
\usepackage{graphicx}
\usepackage{wrapfig}
\usepackage{xcolor}
\definecolor{dark-red}{rgb}{0.4,0.15,0.15}
\definecolor{dark-blue}{rgb}{0,0,0.7}
\hypersetup{
    colorlinks, linkcolor={dark-blue},
    citecolor={dark-blue}, urlcolor={dark-blue}
}

\usepackage{etoc}

\newcommand*{\vcenteredhbox}[1]{\begingroup
\setbox0=\hbox{#1}\parbox{\wd0}{\box0}\endgroup}

\usepackage{algpseudocode}
\usepackage[vlined,linesnumbered,ruled,algo2e]{algorithm2e}
\SetKwProg{Fn}{Function}{}{}
\let\oldnl\nl%
\newcommand{\nonl}{\renewcommand{\nl}{\let\nl\oldnl}}%

\usepackage{multicol}
\usepackage{enumitem}
\usepackage[utf8]{inputenc} %
\usepackage[T1]{fontenc}    %
\usepackage{hyperref}       %
\usepackage{url}            %
\usepackage{booktabs}       %
\usepackage{nicefrac}       %
\usepackage[parfill]{parskip}

\DeclareMathOperator*{\ind}{\perp\!\!\!\perp}

\DeclareMathOperator*{\cov}{\texttt{Cov}}
\DeclareMathOperator*{\bP}{\mathbb P}

\newcommand{\var}{{\operatorname{Var}}}
\newcommand{\mse}{{\operatorname{MSE}}}
\renewcommand{\Cov}{{\operatorname{Cov}}}

\DeclarePairedDelimiter\autobracket{(}{)}
\newcommand{\br}[1]{\autobracket*{#1}}

\DeclarePairedDelimiter\autobrackett{[}{]}
\newcommand{\brr}[1]{\autobrackett*{#1}}

\usepackage{multirow}
\usepackage{hyperref}
\usepackage{url}
\usepackage{xcolor}
\usepackage{color-edits}
\addauthor{vs}{red}

\newcommand{\bb}[1]{\left[#1\right]}

\title{Adaptive Instrument Design for \\ Indirect Experiments}

\author{Yash Chandak
\\
Stanford University 
\And
Shiv Shankar \\
UMass Amherst
\And
Vasilis Syrgkanis \\
Stanford University
\And
Emma Brunskill \\
Stanford University
}

\iclrfinalcopy %
\begin{document}

\maketitle

\begin{abstract}

Indirect experiments provide a valuable framework for estimating treatment effects in situations where conducting randomized control trials (RCTs) is impractical or unethical. Unlike RCTs, indirect experiments estimate treatment effects by leveraging (conditional) instrumental variables, enabling estimation through encouragement and recommendation rather than strict treatment assignment.  However, the sample efficiency of such estimators depends not only on the inherent variability in outcomes but also on the varying compliance levels of users with the instrumental variables and the choice of estimator being used, especially when dealing with numerous instrumental variables.  While adaptive experiment design has a rich literature for \textit{direct} experiments, in this paper we take the initial steps towards enhancing sample efficiency for \textit{indirect} experiments by adaptively designing a data collection policy over instrumental variables.  Our main contribution is a practical computational procedure that utilizes influence functions to search for an optimal data collection policy, minimizing the mean-squared error of the desired (non-linear) estimator. Through experiments conducted in various domains inspired by real-world applications, we showcase how our method can significantly improve the sample efficiency of indirect experiments. 
\end{abstract}

\section{Introduction}

Advances in machine learning, especially from large language models, are greatly expanding the potential of AI to augment and support humans in an enormous number of settings, such as helping direct teachers to students unproductively struggling while using math software~\citep{holstein2018student}, providing suggestions to novice customer support operators~\citep{brynjolfsson2023generative}, and helping peer volunteers learn to be more effective mental health supporters~\citep{hsu2023helping}.  AI-augmentation will often (importantly) provide autonomy to the human, who may consider information or suggestions provided by an AI, before ultimately making their own choice about how to proceed  (we provide examples of more use-cases in Appendix \ref{apx:usecases}). In such settings it will be highly informative to be able to separate estimating the strength of the intervention on humans' treatment choices, as well as the actual treatment effects (the outcomes if the human chooses to follow the treatment of interest), instead of estimating only intent-to-treat effects. This estimation problem becomes more challenging given the likely existence of latent confounders, that may influence both whether a person is likely to uptake a particular recommendation (such as following a large language model's recommendation of what to say to a customer), and their downstream outcomes (on customer satisfaction).  Fortunately, if the AI intervention satisfies the requirements of an instrumental variable \citep{hartford2017deep, syrgkanis2019machine}, it is well-known when and how consistent estimates of the treatment effect estimates can be obtained.

In this paper, we consider how to automatically learn data-efficient adaptive instrument-selection decision policies, in order to quickly estimate conditional average treatment effects. 
While adaptive experiment design is well-studied when direct experimentation is feasible \citep{murphy2005experimental, xiong2019optimal,chaloner1995bayesian,rainforth2023modern,che2023adaptive,hahn2011adaptive,kato2020efficient,song2023ace}, it remains largely unexplored in situations where direct assignment to interventions is  impractical, unethical or undesirable, such as for the real-time AI support of teachers, customer support agents, and volunteer mental health support trainees. Some work has considered adaptive data collection to minimize regret when using instrument variables \citep{Kallus2018InstrumentArmedB,dellavecchia2023online}, which also relates to work in the multi-armed bandit literature. Other work has considered when user compliance may be dynamic, and shaped over time~\citep{ngo2021incentivizing}. 
In contrast, our work focuses on estimating treatment effects through the use of adaptive allocation of instruments, but the impact of those instruments is assumed to be static, though unknown. As Figure~\ref{fig:ta} demonstrates, we will shortly show that it is possible to create adaptive instrumental strategies that far outperform standard uniform allocation. 
In particular, our paper address this challenge of adaptive instrument design  through two main avenues:

\begin{figure}[t]
    
  \begin{minipage}[c]{0.50\textwidth}
    \caption{ In randomized control trials (direct experiments), there is no unobserved confounding, i.e., the value of the treatment is the same as that of the instrument. Using adaptive experiment design strategy of RCTs naively for indirect experiments, i.e., when confounding does exists, can result in performance worse than the uniform sampling strategy. In contrast, the proposed algorithm for \underline{d}esigning \underline{i}nstruments \underline{a}daptively (DIA) is more effective for indirect experiments.   
    }
    \label{fig:ta}
  \end{minipage}\hfill
\begin{minipage}[c]{0.48\textwidth}
    \centering
    \vcenteredhbox{\includegraphics[width=0.65\textwidth]{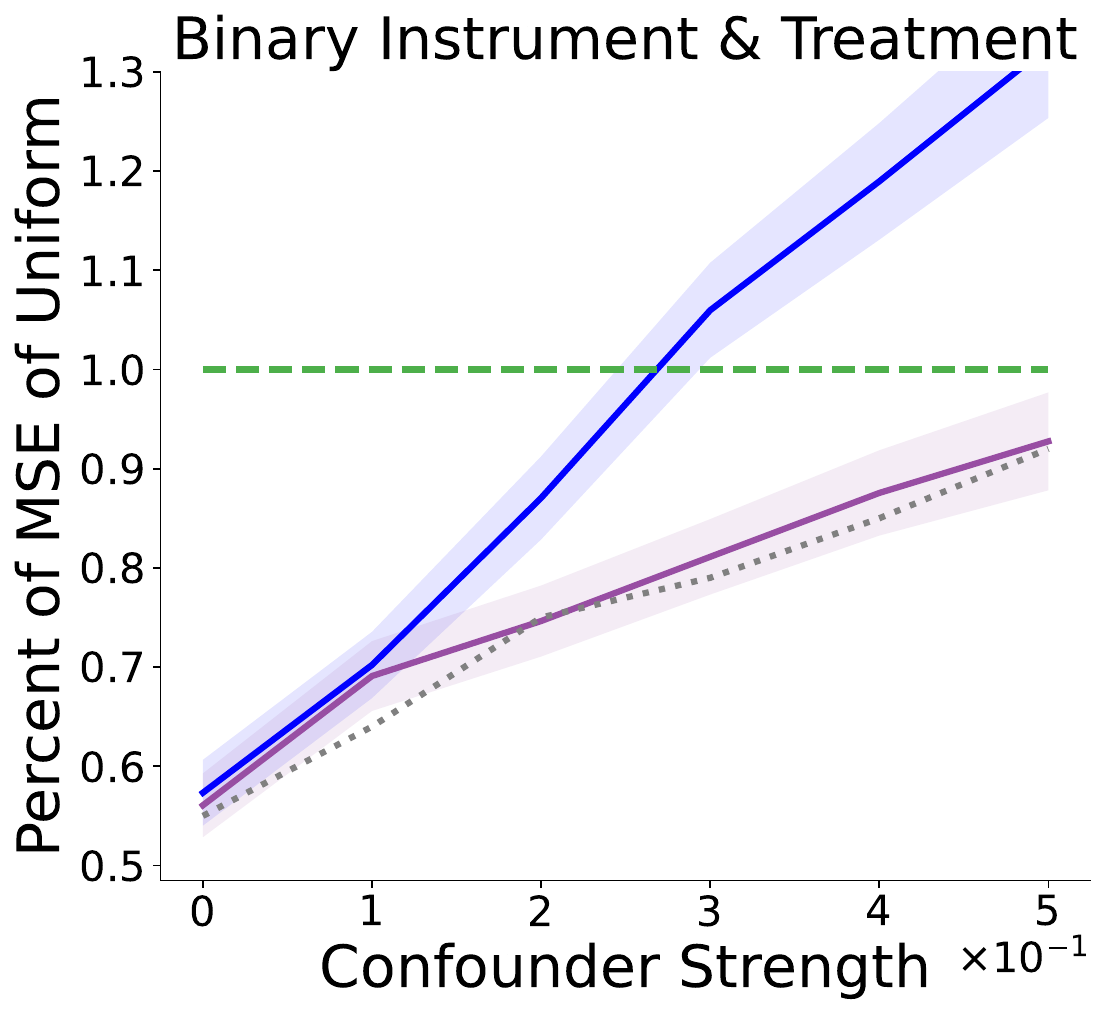}}
 \vcenteredhbox{\includegraphics[width=0.28\textwidth]{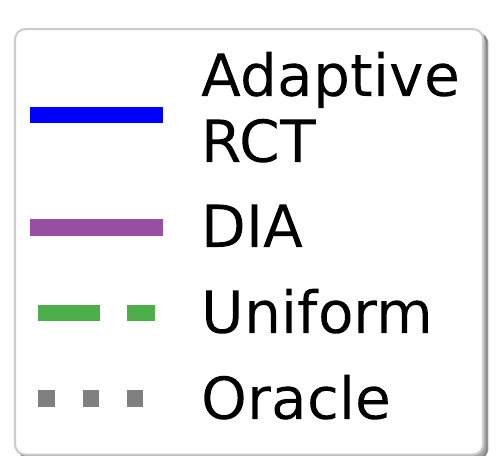}} 
  \end{minipage}
  \vspace{-15pt}
\end{figure}
 
\textbf{1. A general framework for adaptive indirect experiments: } We envision modern scenarios where available instruments could be high-dimensional, even encompassing natural language (e.g., personalised texts to encourage participation in treatment or control groups). This necessitates modeling the sampling procedure with rich function approximators. Taking the first steps towards this goal, we do restrict ourselves to relatively small synthetic and semi-synthetic experiments but prioritize understanding the fundamental challenges of creating such a flexible framework.

In Section \ref{sec:adapt} we introduce influence functions estimators for black-box double causal estimators and a multi-rejection sampler. These tools enable us to perform a gradient-based optimization of a data collection policy $\pi$. Importantly, the proposed method is flexible to support $\pi$ that is modeled using deep networks, and is applicable to various (non)-linear (conditional) IV estimators.

\textbf{2. Balancing sample complexity and computational complexity: } Experiment design is most valuable when sample efficiency is paramount as data is relatively costly in terms of time or resources compared to computation. Therefore, we advocate for leveraging computational resources to enhance the search for an optimal data collection strategy $\pi$. Our framework is designed to minimize the need for expert knowledge and can readily scale with computational capabilities.

Perhaps the most relevant is the work by \citet{gupta2021efficient} for enhancing sample efficiency for a specific (linear) IV estimator by learning a sampling distribution over a few \textit{data sources}, where each source provides different moment conditions. This is complementary to our work as we consider the problem of adaptive sampling, even within a single data source, and using potentially non-linear estimators. Therefore, a detailed related work discussion is deferred to Appendix \ref{apx:related}.

\begin{wrapfigure}{r}{0.3\textwidth}
\vspace{-35pt}
    \centering
    \includegraphics[width=0.28\textwidth]{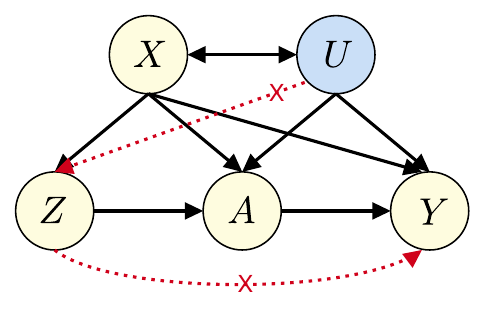}
  \caption{Causal graph.}
  \vspace{-10pt}
  \label{fig:dgp}
\end{wrapfigure}

\section{Background and Problem Setup}
\label{sec:notation}

 Let $X \in \mathcal X$ be the observable covariates,  $U \in \mathcal U$ be the unobserved confounding variables,  $Z \in \mathcal Z$ be the instruments, $A \in \mathcal A$ be the actions/treatments, and $Y \in \R$ be the outcomes. 
 Let $\pi: \mathcal X \rightarrow \Delta(\mathcal Z)$ be an instrument sampling policy, and $\Pi$ be the set of all such policies.
Let $D_n \coloneqq \{S_i\}_{i=1}^n$ be a dataset of size $n$, where  each data sample $S_i \coloneqq \{X_i, Z_i, A_i, Y_i\}$ is collected using (potentially different) policy $\pi_i \in \Pi$.
 With the model given in Figure \ref{fig:dgp}, 
we assume that the causal relationship can be represented with the following structural model:
\begin{align}
    Y &= g(X, A) + U, \quad \E\brr{U}=0, \quad \E\brr{U|A,X} \neq 0, \label{eqn:dgp}
\end{align}
where $g$ is an unknown (potentially non-linear) function. Importantly, note that the variable $U$ can affect not only the outcomes $Y$ but also $X$ and $A$. 
We consider treatment effects to be
homogenous across instruments, i.e., the CATE matches the conditional LATE \citep{angrist1996identification}. 

Following \citet{newey2003instrumental} and \citet{hartford2017deep}, we define the counterfactual prediction as $f(x,a) \coloneqq g(x,a) + \E\brr{U|x}$, where $\E\brr{U|x}$ corresponds to the fixed baseline effect across all treatments $a \in \mathcal A$, for a given covariate $x \in \mathcal X$. 
This definition of $f$ is useful, as it can be identified (discussed later) and can also be used to determine the effect of treatment $a_1 \in \mathcal A$ compared to another treatment $a_2 \in \mathcal A$ as $f(x,a_1) - f(x, a_2) = g(x,a_1) - g(x, a_2)$. Further, in the simpler setting, if $U \ind X$, then $f(x,a) = g(x,a)$. 

In the supervised learning setting (which assumes $U\ind A,X$), the prediction model estimates $\E\brr{Y|x,a}$, however, this can result in a biased (and inconsistent) treatment effect estimate as $\E\brr{Y|x,a} = g(x,a) + \E\brr{U|x,a} \neq f(x,a)$. 
This issue can often be alleviated using \textit{instruments}. For $Z$ to be a valid instrument,  $Z$ should satisfy the following assumptions  (red arrows in Fig \ref{fig:dgp}),
\begin{ass}
     \emph{\textbf{(a) Exclusive:}} $Z \ind Y | X,A$, i.e., instruments do not effect the outcomes directly. \emph{\textbf{(b) Relevance:} } $Z \not\!\!\! \ind A|X $, i.e., the action chosen can be influenced using instruments. \emph{\textbf{(c) Independence: 
     }} $Z \ind U | X$, i.e., the unobserved confounder does not affect the instrument.
     \thlabel{ass:iv}
\end{ass}
Under \thref{ass:iv}, an inverse problem can be designed for estimating $f$, with the help of $z$, 
\begin{align}
    \E\brr{Y|x,z} &= \E\brr{g(x,A)|x,z} + \E\brr{U|x} =  \E\brr{f(x,A)|x,z} = \int_a f(x,a) \mathrm d P(a|x,z). \label{eqn:inv}
\end{align}
However, identifiability of $f$ using \ref{eqn:inv} depends on specific assumptions. In the simple setting where $f$ is linear and $\E\brr{U|x,z} = 0$, \citet{angrist1996identification} discuss how two-stage least-squares estimator (2SLS) can be used to identify $f$.
In contrast, if $f$ is non-linear (or non-parametric), exact recovery of $f$ may not be viable as \eqref{eqn:inv} results in an ill-posed inverse problem \citep{newey2003instrumental,xu2020learning}. 
Different techniques have been proposed to mitigate this issue \citep{newey2003instrumental,xu2020learning, syrgkanis2019machine,frauen2022estimating}; see \cite{wu2022instrumental} for a survey. 
These methods often employ a two-stage procedure, where each stage solves a set of moment conditions. We formalize these below.

Let $f: \mathcal X \times \mathcal A \rightarrow \R$ be parametrized using $\theta \in\Theta$, where $\Theta$ is the set of all parameters. Let $\theta_0 \in \Theta$ be the true parameter for $f$. 
Let $q(S_i; \phi)$ and $m(S_i; \theta, \phi)$ be the moment condition for nuisance parameters $\phi$ and parameters of interest $\theta$, respectively, for each sample $S_i$. Let $M_n$ and $Q_n$ be the corresponding sample average moments such that,
\begin{align}
    M_n\left(\theta, \hat \phi\right) \coloneqq \frac{1}{n}\sum_{i=1}^n m(S_i; \theta, \hat \phi) , \quad\quad  Q_n\br{\phi} \coloneqq \frac{1}{n}\sum_{i=1}^n q(S_i; \phi),
    \label{eqn:moments}
\end{align}
where $\hat \phi$ is the solution to $Q_n(\phi) = \mathbf 0$, and the estimated parameters of interest $\hat \theta$ are obtained as a solution to $M_n(\theta, \hat \phi) =\mathbf 0$. For example, in the basic 2SLS setting without covariates $X_i$ \citep{pearl2009causality}, first stage moment $q(S_i; \phi) \coloneqq \nabla_\phi(Z_i\phi - A_i)^2$
and the second stage moment $m(S_i; \theta, \hat \phi) \coloneqq \nabla_\theta(\tilde A_i \theta - Y_i)^2$, where $\tilde A_i = Z_i \hat \phi$.
Similarly, to solve \ref{eqn:inv} in the non-linear setting with deep networks, \citet{hartford2017deep} use $q(S_i; \phi)  \coloneqq - \nabla_\phi \log \Pr(A_i|X_i, Z_i;\phi)$ to estimate density $\mathrm dP(A_i|X_i, Z_i; \phi)$ using a neural density estimator, and subsequently define $m(S_i; \theta, \hat \phi) \coloneqq \nabla_\theta (Y_i - \int f(X_i, a; \theta)\mathrm dP(a|X_i, Z_i; \hat \phi))^2$, where $f$ is also a neural network parametrized by $\theta$.

To make the dependency of the estimated parameters of $f$ on the data explicit, we denote the estimated parameter as $\theta(D_n)$. Further, we assume that the $\theta(D_n)$ is symmetric in $D_n$, i.e., $\theta(D_n)$ is invariant to the ordering of samples in the dataset $D_n$. This holds for 2SLS, DeepIV \citep{hartford2017deep}, and most other popular estimators \citep{syrgkanis2019machine,wu2022instrumental}. 
We also consider $\theta(D_n)$ to be deterministic for a given $D_n$. In Appendix \ref{apx:stochastic}, we discuss how the proposed algorithm can be used as-is for the setting where $\theta(D_n)$ is also stochastic given $D_n$.

\paragraph{Problem Statement: }
While these estimators mitigate the confounder bias in estimation of $f$, they are often subject to high variance \citep{imbens2014instrumental}.
In this work, we aim to develop an adaptive data collection policy that can improve sample-efficiency of the estimate of the counterfactual prediction $f$.
Importantly, we aim to develop an algorithmic framework that can work with general (non)-linear two-stage estimators.
Specifically, let  $X,A \sim \bP_{x,a}$ be samples from a \textit{fixed} distribution over which the expected mean-squared error is evaluated,
\begin{align}
  \mathcal L(\pi) \coloneqq \underset{D_n\sim \pi }{\E}\brr{ \mse(\theta(D_n))},  \quad \text{and} \,\,  \mse(\theta) \coloneqq \E_{X,A} \brr{\br{f(X,A;\theta_0) -  f(X,A;\theta)}^2}. \label{eqn:loss}
\end{align}
To begin, we aim to find a \textit{single} policy  $\pi^* \in \argmin_{\pi \in \Pi}\,\, \mathcal L\br{\pi}$ for the most sample-efficient estimation of $\theta$. In Section \ref{sec:algo}, we  will adapt this solution for sequential collection of data.

\section{Approach}
\label{sec:adapt}

As Figure~\ref{fig:ta} illustrates, selecting instruments strategically to generate a dataset can have a substantial impact on the efficiency of indirect experiments. Importantly, as $\mathcal L(\pi)$ is a function of both the data $D_n$ and the estimator $f(\cdot;\theta)$, an optimal $\pi^*$ not only depends on the data-generating process (e.g., heteroskedasticity in compliances $Z|X$ and outcomes $Y$, even given a specific covariate $X$) but also on the actual procedure used to compute an estimate of the parameter of interest $\theta$, such as 2SLS, etc. In Appendix \ref{apx:linear}, we present a simple illustrative case of linear models for the conditional average treatment effect (CATE) estimation, where each aspect can be observed explicitly.   

Now we provide the key insights for a general gradient-based optimization procedure to find an instrument selection decision policy $\pi^*$ that tackles all of these problems simultaneously. A formal analysis of the claims and the algorithmic approaches we make, and how they contrast with some other alternate choices, will be presented later in Section \ref{sec:theory}. %

As our main objective is to search for $\pi^*$, we propose computing the gradient of $\mathcal L(\pi)$, which we can express as the following using the popular REINFORCE approach \citep{williams1992simple},
\begin{align}
    \nabla \mathcal L(\pi) = \!\!\!\underset{D_n\sim \pi}{\E} \brr{\mse\br{\theta\br{D_n}} \frac{\partial \log \Pr(D_n; \pi)}{\partial \pi}} \overset{(a)}{=} \!\!\!\underset{D_n\sim \pi}{\E} \brr{\mse\br{\theta\br{D_n}} \sum_{i=1}^n  \frac{\partial \log \pi(Z_i|X_i)}{\partial \pi}} \label{eqn:reinforceq}
\end{align}
where $(a)$ follows because $\Pr(D_n;\pi)$ is the probability of observing the entire dataset $D_n$, 
\begin{align}
    \Pr(D_n; \pi) = \prod_{i=1}^n \Pr(X_i,U_i)\pi(Z_i|X_i)\Pr(A_i|X_i,  U_i, Z_i)\Pr(Y_i|X_i, U_i, A_i). \label{eqn:jointprob}
\end{align}
The gradient in \ref{eqn:reinforceq} is appealing as it provides an unbiased direction to update $\pi$, and encapsulates properties of both the data-generating process and the estimators as a black-box in $\mse\br{\theta\br{D_n}}$.
However, using $\nabla \mathcal L(\pi)$ for optimization is impractical due to two main challenges:
\begin{itemize}[leftmargin=*]
    \item \textbf{Challenge 1: }As we will state more formally in Section~\ref{sec:theory}, the variance of sample estimate of $\nabla \mathcal L(\pi) $ can be $\Theta (n)$, i.e., it can grow \textit{linearly} in the size of $D_n$. This can make the optimization process prohibitively inefficient as we collect more data. 
    \item \textbf{Challenge 2: } Optimization using \ref{eqn:reinforceq} will also require evaluating $\nabla \mathcal L(\pi)$ for a $\pi$ different from $\pi'$ that was used to collect the data.  If off-policy correction is done using importance weighting \citep{precup2000eligibility}, then the variance might grow \textit{exponentially} with $n$.
\end{itemize}
To address these two issues we now demonstrate how \textit{influence functions} and \textit{multi-rejection importance sampling} can be used to leverage the structure in our indirect experiment design setup to not only significantly reduce the variance of the resulting gradient estimate, but also do so in a computationally practical manner that is compatible with deep neural networks. 
The pseudocode for our algorithm approach is shown in Appendix \ref{apx:algo}.

\subsection{Influence Functions for Control Variates}
To address challenge 1, a common approach in reinforcement learning and other areas is to introduce control variates, that reduce the variance of estimation without introducing additional bias. In particular, we observe that for our setting, using the MSE computed using all other data points $D_{n\backslash i}$ (i.e., all of $D_n$, except the sample $S_i$ used in the $\log \pi(Z_i|X_i)$ term) serves as a control variate,
\begin{align}
     {\hat \nabla}^\texttt{CV} \mathcal L(\pi) \coloneqq \sum_{i=1}^n  \frac{\partial \log \pi(Z_i|X_i)}{\partial \pi} \br{\mse\br{\theta\br{D_n}} - \mse\br{\theta\br{D_{n\backslash i}}} }. \label{eqn:samplereinforcecv}
\end{align}
As we illustrate in Figure \ref{fig:gradvar} and formalize in  Section~\ref{sec:theory}, this immediately results in a substantial variance reduction over the originally proposed gradient (\ref{eqn:reinforceq}).  However, the computation required for estimating $  {\hat \nabla}^\texttt{CV} \mathcal L(\pi) $ can be prohibitively expensive as now for \textit{each}  $\nabla \log \pi(Z_i|X_i)$ in \ref{eqn:samplereinforcecv}, an entire separate re-training process is required to estimate the control variate $\mse(\theta(D_{n\backslash i})) $. 

To be efficient in terms of \textit{both} the variance and the computation, we now show that  $\mse\br{\theta\br{D_n}} - \mse\br{\theta\br{D_{n\backslash i}}} $ can be estimated using influence functions \citep{fisher2021visually}. As we discuss below, this requires only a \textit{single} optimization process for estimating $\mse\br{\theta\br{D_n}} - \mse(\theta(D_{n\backslash i})) $ across \textit{all} $i$'s, and thus preserves the best of both $  {\hat \nabla}^\texttt{CV} \mathcal L(\pi) $ and $  {\hat \nabla} \mathcal L(\pi) $. 

Let $\mathbb D$ be the distribution function induced by the dataset $D_n$, and with a slight overload of notation, we let $\theta(D_n) \equiv \theta(\mathbb D)$.
{Let $\mathbb D_{i,\epsilon} \coloneqq (1- \epsilon) \mathbb D + \epsilon \delta(S_i)$ be a distribution perturbed in the direction of $S_i$, where $\delta(\cdot)$ denotes the Dirac distribution.
Note that for $\epsilon=-1/n$, distribution $\mathbb D_{i,\epsilon}$ corresponds to the distribution function without the sample $S_i$, i.e., the distribution induced by $D_{n\backslash i}$.
From this perspective, $\mse(\theta(\cdot))$ corresponds to a \textit{functional} of the data distribution function $\mathbb D$, and thus we can do a Von Mises expansion  \citep{fernholz2012mises}, i.e., a distributional analog of the Taylor expansion for statistical functionals, 
to directly approximate $\mse(\theta(\mathbb D_{i,\epsilon}))$ in terms of $\mse(\theta(\mathbb D))$ as
\begin{align}
    \mse\br{\theta(\mathbb D_{i,\epsilon})} &= \mse\br{\theta\br{\mathbb D}} + \sum_{k=1}^\infty \frac{\epsilon^k}{k!} \,  \mathcal I^{(k)}_\mse(S_i), \label{eqn:LOOmain}
\end{align}
where $\mathcal I^{(k)}_\mse$ is the $k$-th order \textit{influence function} (see the work by \citet{fisher2021visually,kahn2015influence} for an accessible introduction to influence functions). 
Therefore, when all the higher order influence function exists, we can use \ref{eqn:LOOmain} to recover an estimate of $ \mse(\theta(D_n)) - \mse(\theta(D_{n\backslash i}))$ \textit{without} re-training.
In practice, approximating \ref{eqn:LOOmain} with a $K$-th order expansion often suffices, 
which  permits a gradient estimator $ {\hat \nabla}^\texttt{IF} \mathcal L(\pi)$ that avoids any re-computation of the mean-square-error,  
\begin{align}
     {\hat \nabla}^\texttt{IF} \mathcal L(\pi) &\coloneqq \sum_{i=1}^n  \frac{\partial \log \pi(Z_i|X_i)}{\partial \pi}  \mathcal I_\mse(S_i), &\text{where,   } \,\, \mathcal I_\mse(S_i) &\coloneqq  - \sum_{k=1}^K \frac{\epsilon^k}{k!} \,  \mathcal I^{(k)}_\mse(S_i).    \label{eqn:samplereinforceiv} 
\end{align}
In fact, as we illustrate in Figure \ref{fig:gradvar} and state formally in Section \ref{sec:theory}, when using finite $K$ terms, the bias of the gradient in \ref{eqn:samplereinforceiv} is of the order $n^{-K}$, while still providing significant variance reduction. This enables the bias to be neglected even when using $K=1$.
Intuitively, for $k=1$,  $\mathcal I^{(k)}_\mse$ operationalizes the concept of the first-order derivative for the functional $\mse(\theta(\cdot))$, i.e., it characterizes the effect on $\mse$ when the training sample $S_i$ is infinitesimaly up-weighted during optimization,
\begin{align}
  \mathcal I_\mse^{(1)}(S_i)&\coloneqq \frac{\mathrm d}{\mathrm d t}\mse\Big(\theta\Big((1-t)\mathbb D + t \delta(S_i)\Big)\Big) \Big|_{t=0}. \label{eqn:ifderivative}
\end{align}
A key contribution is to derive (\thref{thm:inf} in Appendix \ref{apx:sec:influence}) the following form for $\mathcal I_\mse^{(1)}(S_i)$, 
}
\begin{align}
   \mathcal I^{(1)}_\mse(S_i) &=   
    \frac{\mathrm d }{\mathrm d\theta} \mse(\hat \theta)\, \mathcal I^{(1)}_\theta(S_i),  \label{eqn:inff1}
    \\
  \text{where, }\quad   \mathcal I^{(1)}_\theta(S_i) &= - \frac{\partial M_n}{\partial \theta}^{-1}\!\!\!\!\!\!{(\hat \theta, \hat \phi)} \brr{m(S_i, \hat \theta, \hat \phi) - { \frac{\partial M_n}{\partial \phi}{(\hat \theta, \hat \phi)}\brr{\frac{\partial Q_n}{\partial \phi}^{-1}\!\!\!\!\!\!{(\hat \phi)} q(S_i, \hat \phi)}}}. \label{eqn:inff2}
\end{align}
Note that these expressions generalize the influence function for standard machine learning estimators (e.g., least-squares, classification) \citep{koh2017understanding} to more general estimators (e.g., IV estimator, GMM estimators, double machine-learning estimators) that involve black-box estimation of nuisance parameters $\phi$ alongside the parameters of interest $\theta$, as discussed in \ref{eqn:moments}. 

\begin{figure}
 \begin{minipage}[c]{0.38\textwidth}
    \caption{Bias-variance of different gradient estimators, for a given policy $\pi$, when using a 2SLS estimators under model-misspecification. \textbf{Observe the scale on the y-axes.}}
    \label{fig:gradvar}
  \end{minipage}\hfill
\begin{minipage}[c]{0.6\textwidth}
    \centering
     \includegraphics[width=0.9\textwidth]{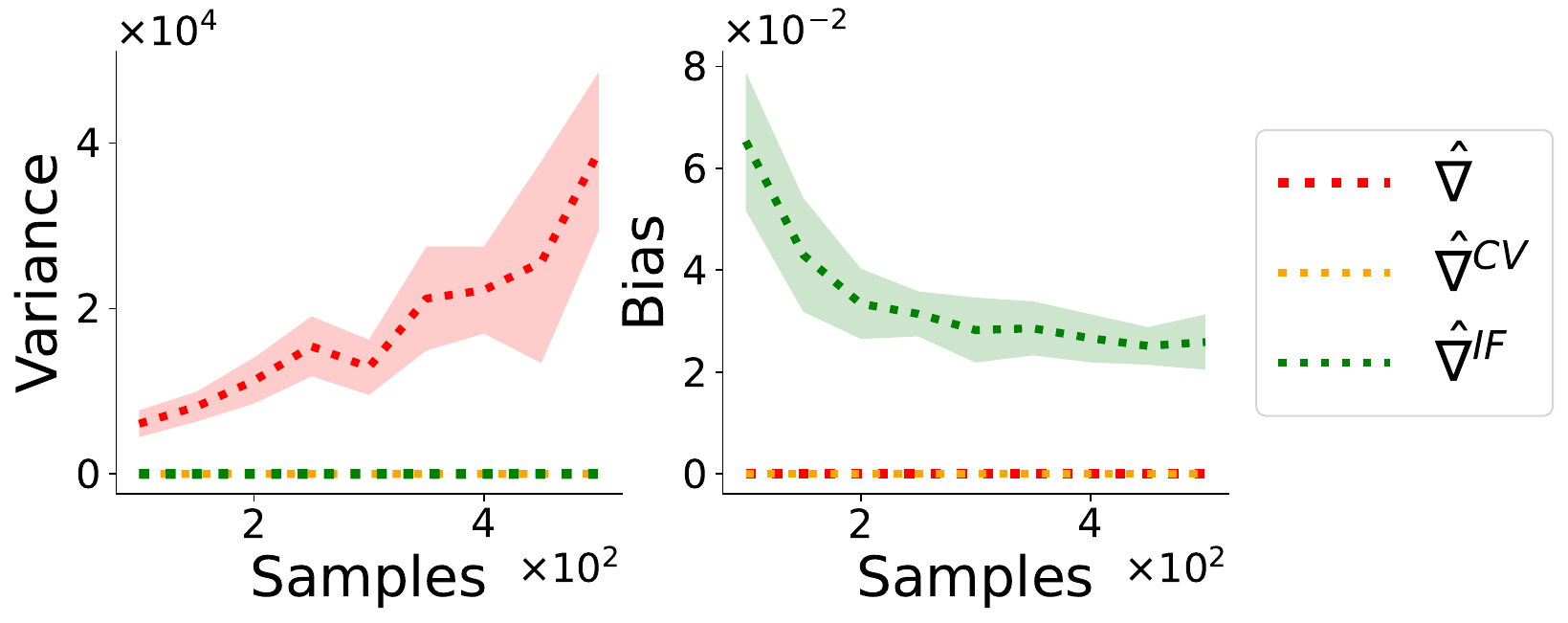}
  \end{minipage}
  \vspace{-15pt}
\end{figure}
In Appendix \ref{apx:algo} we show how  $\mathcal I^{(1)}_\theta(S_i)$ can be computed efficiently, without ever explicitly storing or inverting derivatives of $M_n$ or $Q_n$, using Hessian-vector products \citep{pearlmutter1994fast} readily available in auto-diff packages \citep{jax2018github,pytorch}.  Our approaches are inspired by techniques used to scale implicit gradients and influence function computation to models with millions \citep{lorraine2020optimizing} of parameters. %
In Appendix \ref{apx:algo}, we also discuss how $ \mathcal I^{(1)}_\theta(S_i)$ can be further re-used to \textit{partially} estimate $\mathcal I^{(2)}_\mse(S_i)$, 
thereby making $\mathcal I_\mse(S_i)$ more accurate. 

\subsection{Multi-Rejection Sampling for Distribution Correction}
\label{sec:mseshift}

\begin{figure}
 \begin{minipage}[c]{0.32\textwidth}
    \caption{
    \textbf{(Left)} %
    Change in $k$ %
    did not impact the $\pi$ optima 
    as %
    the orderings were preserved.   \textbf{(Right)} variance of rejection sampling (RS) vs importance sampling (IS). See Figure \ref{fig:logscale} for the plot with log-scale. \textbf{Observe the scale on the y-axes.}  }
    \label{fig:samplervar}
  \end{minipage}\hfill
\begin{minipage}[c]{0.65\textwidth}
    \centering
     \includegraphics[width=0.44\textwidth]{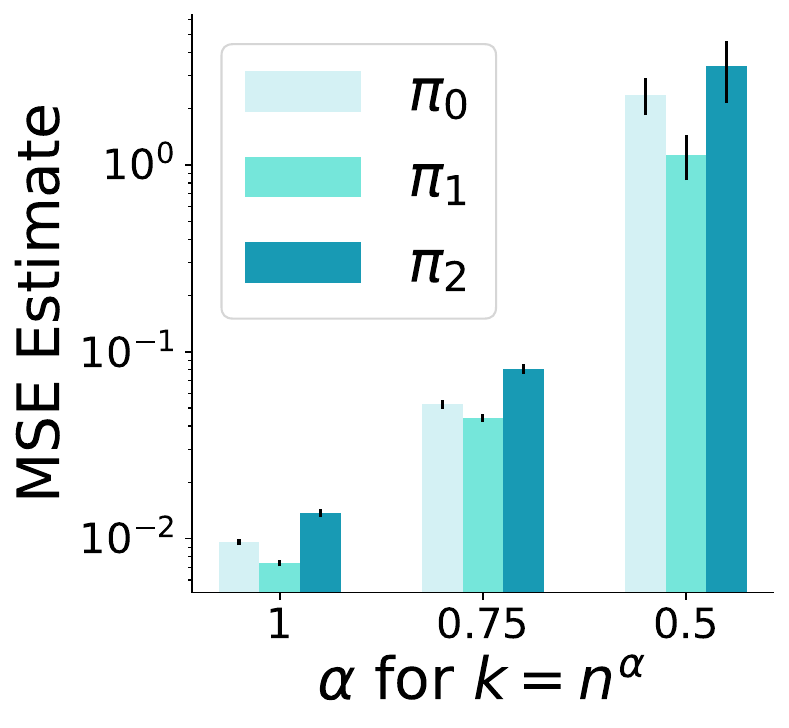}
     \includegraphics[width=0.55\textwidth]{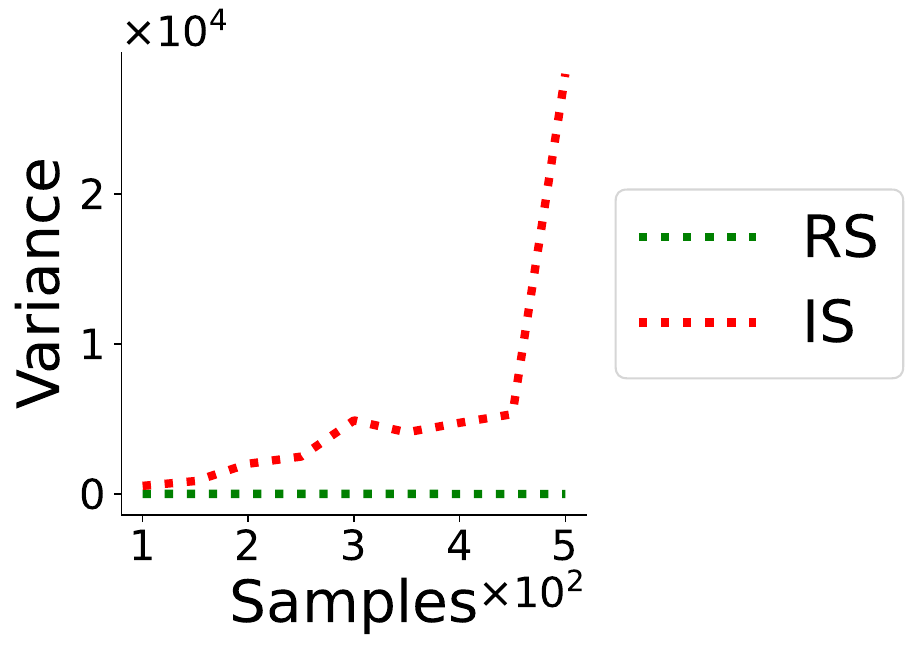}
  \end{minipage}
  \vspace{-15pt}
\end{figure}

Recall from Challenge 2 that gradient based search of $\pi^*$ would require evaluating gradient at $\pi'$ using data collected from $\pi \neq \pi'$. A common technique in RL to address this is by using importance sampling (IS) \citep{precup2000eligibility,thomas2015safe}. {
Unfortunately, the effective importance ratio, when using IS in our setting, will be a product of the importance ratios for all the $n$ data points (see Equation \ref{eqn:isform} in Appendix \ref{apx:sec:rejection}). This can result in the variance of gradient estimates being \textit{exponential} in the number of samples $n$, in the worst case. 
We illustrate this in Figure \ref{fig:samplervar} and discuss formally in Appendix \ref{apx:sec:rejection}.}

We now introduce a multi-rejection importance sampling approach to estimate the gradient under a decision policy different from that used to gather the historical data. In contrast,  unlike in RL, in our setting, the estimator $\theta(D_n)$ can be considered symmetric {for most of the popular estimators} as the order of samples in $D_n$ does not matter. Therefore, instead of performing importance sampling over the whole prior data, we propose creating a new dataset of size $k$ by employing \textit{rejection sampling}. Critically, the importance ratio used to select (or reject) each sample $S_i$ to evaluate a new policy $\pi'$ depends only on the importance ratio  $\rho(s)$ for that \textbf{single} sample $s$, instead of the product of those ratios over the entire dataset:
\begin{align}
    S_i &= \begin{cases}
        \text{Accept} & \text{if} \quad \xi_i \leq \rho(S_i)/\rho_{\max} 
        \\
        \text{Reject} & \text{Otherwise.} \label{eqn:reject}
    \end{cases}
    & \rho(S_i) \coloneqq \frac{\Pr(S_i;\pi')}{\Pr(S_i;\pi_i)} = \frac{\pi'(Z_i|X_i)}{\pi_i(Z_i|X_i)},
\end{align}
where $\xi_i \sim \texttt{Uniform}(0,1)$ and $\rho_{\max} \coloneqq \max_{s} \rho(s)$.
Note that \ref{eqn:reject} allows simulating samples from the data distribution governed by $\pi'$, without requiring any knowledge of the unobserved confounders $U_i$.
As we will prove in {Section~\ref{sec:theory}}, this ensures that the variance of the resulting dataset scales with the worst \textit{single} sample importance ratio, instead of \textit{exponentially} over the entire dataset size $n$. 
This process can therefore be used to generate a smaller $k$ sized dataset with the same distribution of data as sampling from the desired policy $\pi'$. 

As we illustrate in Figure \ref{fig:samplervar} and discuss formally in Appendix~\ref{app:policy-ordering}, it is reasonable to consider that instrument-selection policy \textit{ordering} (in terms of their MSE) is robust to the dataset size, i.e., ${\E}_\pi\brr{ \mse(\theta(D_n))} \geq  {\E}_{\pi'}\brr{ \mse(\theta(D_n))}$ implies that $ {\E}_\pi\brr{ \mse(\theta(D_k))} \geq  {\E}_{\pi'}\brr{ \mse(\theta(D_k))}$, where $k<n$. In addition, as we will state formally in {Property~\ref{prop:rejection}},  if $k=o(n)$, then the probability of the failure case, where the expected number of samples $n^{RS}$ accepted by rejection sampling is less than the desired $k$ samples, \textit{decays} \textit{exponentially}.

This insight is advantageous as we can now directly obtain datasets $D_k$ from $\pi'$, albeit of size $k<n$, to evaluate $\pi'$, as desired. We implement this by first running the rejection process for all samples, yielding a set of samples of size $n^{RS}$ that are drawn from $\pi$, and then selecting a random subset of size $k$ from the $n^{RS}$ accepted samples. For the sub-sample $D_k$, analogous to the true $\mse$ in \ref{eqn:loss}  we calculate an estimate of the MSE using $X_i$ and $A_i$ drawn from a held-out dataset with $M$ samples from  $\mathbb P_{x,a}$ as 
\begin{align}
    \widehat{\mse}(\theta(D_k)) &= \frac{1}{M}\sum_{i=1}^M \br{f(X_i,A_i;\theta(D_n)) -  f(X_i,A_i;\theta(D_k))}^2.\label{eqn:mseproxy}
\end{align}
Note that in general, our historical data may be a sequence of decision policies, as we adaptively deploy new instrument-selection decision policies. In this setting, a straightforward application of rejection sampling will require  the support assumption which necessitates $\pi_i(Z_i|X_i) >0$ if $\pi'(Z_i|X_i)>0$ for all $i \in \{1,...,n\}$.  This may restrict the class of new instrument-selection policies that can be deployed.  Instead we propose the following \textit{multi}-rejection sampling strategy:
\begin{align}
    S_i &= \begin{cases}
        \text{Accept} & \text{if} \quad \xi_i \leq \bar \rho(S_i)/\bar \rho_{\max}
        \\
        \text{Reject} & \text{Otherwise.} 
    \end{cases}
    &  \bar \rho(S_i) \coloneqq \frac{\pi'(Z_i|X_i)}{{\frac{1}{n}\sum_j \pi_j(Z_i|X_i)}} \label{eqn:multirss}
\end{align}
where $\bar \rho_{\max} = \max_s \bar \rho(s)$. As we show in Appendix~\ref{app:multi-rejection}, this relaxes the support assumption enforced for every $\pi_i$ to support assumption over the \textit{union} of supports of $\{\pi_i\}_{i=1}^n$, while still ensuring that the accepted samples follow the distribution specified by $\pi'$. For e.g., if the initial data collected or available used a uniform instrument-selection decision policy, then subsequent policies can be \textit{arbitrarily} deterministic and the multi-rejection procedure can still use all the data and remain valid. This makes it particularly well suited for our adaptive experiment design framework. In Appendix~\ref{app:multi-rejection}, we further prove that the expected number of samples $n^{\text{MRS}}$ accepted under multi-rejection will \textit{always} be more than or equal to $n^{RS}$ obtained using rejection sampling, \textit{irrespective} of $\{\pi_i\}_{i=1}^n$.

\subsection{DIA: Designing Instruments Adaptively}
\label{sec:algo}
Using influence functions and multi-rejection sampling we addressed both the challenges about high variance. Leveraging the flexibility offered by the proposed gradient-based procedure, we can now readily extend the idea for sequential collection of data.
To do so, we propose using a parameterization of $\pi$ to account for data already collected, which is important during adaptive instrument-selection policy design.

Let $n$ be the size of the dataset $D_n\!\! = \{S_i\}_{i=1}^n$ collected so far, and let $N$ be the budget for the total number of samples that can be collected. 
At each stage of data collection, DIA searches for a policy $\pi_\varphi$, parameterized using $\varphi$, such that when the remaining $D_{N-n}\!\!=\{S_i\}_{i=n+1}^{N}$  samples are collected using $\pi_\varphi$, the \textit{combined} distribution  $D_N \coloneqq D_n \cup D_{N-n}$ would approximate the optimal distribution 
for estimating $\theta$.
Let $\pi^\texttt{eff}_\varphi$ operationalize $D_N$, if $D_{N-n}$ were to be collected using $\pi_\varphi$,
\begin{align}
    \pi_\varphi^\texttt{eff}(z|x) \coloneqq \frac{n}{N} \br{ \frac{1}{n}\sum_{i=1}^n\pi_i(z|x)} + \br{1-\frac{n}{N}} \pi_\varphi(z|x). \label{eqn:pieff}
\end{align}
Let $ \mathcal I_{\widehat \mse}\br{S_i}$ be the influence function similar to \ref{eqn:samplereinforceiv}, where the $\mse$ is replaced with its proxy \ref{eqn:mseproxy}, and let $\{S_i\}_{i=1}^k$ in the following be the samples for $\pi^\texttt{eff}_\varphi$ obtained using multi-rejection sampling in \ref{eqn:multirss}.  
Then for the first batch, $\pi_\varphi$ is designed to be a uniform distribution, and for the later batches we leverage \ref{eqn:samplereinforceiv} and \ref{eqn:mseproxy} to update $\pi_\varphi$ using
\begin{align}
     {\hat \nabla}^\texttt{IF} {\mathcal L}(\pi_\varphi^\texttt{eff}) \coloneqq \sum_{i=1}^k  \frac{\partial \log \pi_\varphi^\texttt{eff}(Z_i|X_i)}{\partial \varphi} \mathcal I_{\widehat \mse}\br{S_i}. \label{eqn:samplereinforceproxyif}
\end{align}
We use $\pi_\varphi$ to collect the next batch of data and then update $\pi_\varphi$ again, as discussed above, to ensure that the final distribution $D_N = \{S_i\}_{i=1}^N$ approximates the optimal distribution, for estimating $\theta$, as closely as possible. 
Intuitively, this technique is similar to receding horizon control, %
 where the controller is planned for the entire remaining horizon of length $N-n$, but is updated and refined again periodically. %
A pseudo-code for the proposed algorithm is provided in Appendix \ref{apx:algo}.

\section{Theory}
\label{sec:theory}
We now provide {theoretical} statements about the impact of the algorithmic design choices, over alternatives, in terms of their benefit on the variance. Proofs {and more formal detailed statements of the theorems} are deferred to Appendix \ref{apx:sec:biasvar}.
We first state that variance of the naive gradient \label{eqn:reinforce} can scale poorly with the dataset size: 
\begin{prop}[Informal]
\thlabel{prop:naive}
    If $\mse(\theta(D_n))$ is leave-one-out stable and $n \|\mse\br{\theta(D_n)}\|_{L^2}^2=\Omega(\alpha(n))=\omega(1)$, we have $\var(\hat \nabla \mathcal L(\pi))=\Omega(\alpha(n))$ {(c.f. Theorem~\ref{thm:naive-reincorce} in Appendix).}
\end{prop}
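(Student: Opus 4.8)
The plan is to prove a lower bound on the variance of the REINFORCE gradient estimator $\hat \nabla \mathcal L(\pi)$ in \eqref{eqn:reinforceq} by exhibiting a coordinate direction of the parameter $\pi$ along which the variance is already $\Omega(\alpha(n))$; since variance of a vector-valued estimator can be measured, e.g., as the trace of the covariance, a lower bound in any single coordinate suffices. First I would fix a covariate value $x$ and an instrument value $z$ and look at the scalar estimator $G_n \coloneqq \mse(\theta(D_n)) \sum_{i=1}^n \frac{\partial \log \pi(Z_i|X_i)}{\partial \pi_{z,x}}$, where $\frac{\partial \log \pi(Z_i|X_i)}{\partial \pi_{z,x}}$ is the score with respect to the $(z,x)$ coordinate of $\pi$; call the per-sample score $\psi_i$. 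These scores are i.i.d.\ (conditionally on the $X_i$'s, which are themselves i.i.d.), mean-zero (the usual $\E[\nabla \log \pi] = 0$ identity), and have some fixed positive variance $\sigma_\psi^2 > 0$ as long as $\pi$ is not degenerate — this is where a mild non-degeneracy assumption on $\pi$ enters, which I would state explicitly.

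The core computation is then to lower-bound $\var(G_n)$ where $G_n = R_n S_n$ with $S_n \coloneqq \sum_{i=1}^n \psi_i$ and $R_n \coloneqq \mse(\theta(D_n))$. The key structural fact I would exploit is leave-one-out stability of $\mse(\theta(D_n))$: write $R_n = \bar R + \Delta_n$ where $\bar R$ is a "score-independent" surrogate (e.g., the MSE computed after symmetrizing out the contribution of the scores, or simply $\mse(\theta(D_{n}))$ with the dependence on each individual $S_i$ controlled by the stability bound), so that $\Delta_n$ is small — of order $o(1)$ in an appropriate norm — by leave-one-out stability. Then $G_n = \bar R \, S_n + \Delta_n S_n$, and since $S_n$ has variance $n \sigma_\psi^2$ and is (approximately) independent of $\bar R$, the first term contributes variance of order $n \cdot \|\mse(\theta(D_n))\|_{L^2}^2 \cdot \sigma_\psi^2$, while Cauchy–Schwarz together with the stability bound shows the cross term and the $\Delta_n S_n$ term are lower-order. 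Invoking the hypothesis $n \|\mse(\theta(D_n))\|_{L^2}^2 = \Omega(\alpha(n)) = \omega(1)$ gives $\var(G_n) = \Omega(\alpha(n))$, and since this is one coordinate of $\hat \nabla \mathcal L(\pi)$, the full variance (trace of covariance) is also $\Omega(\alpha(n))$.

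I expect the main obstacle to be making the decomposition $R_n = \bar R + \Delta_n$ rigorous while simultaneously controlling the correlation between $\bar R$ and $S_n$: the MSE $\mse(\theta(D_n))$ depends on the instruments $Z_i$, and the instruments are sampled according to $\pi$, so there is genuine dependence between the estimator error and the scores $\psi_i$. The leave-one-out stability assumption is precisely the lever that lets us argue this dependence is weak — perturbing one $S_i$ (hence one $Z_i$) changes $\mse(\theta(D_n))$ by only $O(1/n)$ — so that $\cov(R_n, \psi_i)$ is $O(1/n)$ per coordinate and the "signal" term $\bar R\, S_n$ dominates. Formalizing "weakly dependent" into a clean bound (e.g., via an Efron–Stein / bounded-differences argument, or by conditioning on $D_{n\backslash i}$ and using the tower rule) is the technical heart of the argument; everything else is bookkeeping with $L^2$ norms and the assumed growth rate $\alpha(n)$.
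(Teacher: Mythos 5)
Your overall strategy is the same as the paper's (Theorem~\ref{thm:naive-reincorce}): restrict attention to a single coordinate of the score, split the MSE into a piece independent of that sample's score plus a leave-one-out remainder, identify the main term of order $n\,\E[X_1^2]\,\E[Y_1^2]$ with $X_1=\mse(\theta(D_{n\backslash 1}))$ and $Y_i=\nabla\log\pi(S_i)$, and argue the remainder is lower order. The paper implements the split per index, writing $\mse(\theta(D_n))=X_i+\Delta_i$ with $X_i$ \emph{exactly} independent of $Y_i$, which is precisely the ``condition on $D_{n\backslash i}$ and use the tower rule'' variant you mention at the end; your alternative of a single global surrogate $\bar R$ can also be made to work but requires an Efron--Stein step to convert leave-one-out stability into $\var(\mse(\theta(D_n)))=O(1/n)$ first.

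The one place your sketch would fail if executed in the obvious way is the claim that ``Cauchy--Schwarz together with the stability bound shows the cross term \ldots\ [is] lower-order.'' The dangerous terms are $\sum_{i,j}\E[X_i\Delta_iY_iY_j]$. Bounding each pair separately by Cauchy--Schwarz gives $n^2 C^2\|\Delta_1\|_{L^2}\|X_1\|_{L^2}=O(n\|X_1\|_{L^2})$ under $1/n$-stability, and since typically $\|X_1\|_{L^2}\to 0$, this error term is \emph{larger} than the main term $c^2 n\|X_1\|_{L^2}^2$; the lower bound collapses. The paper's fix (flagged in a footnote of its proof) is to first sum over $j$ to form $\bar Y=\sum_j Y_j$, whose $L^2$ and $L^4$ norms are $O(\sqrt{n})$ (the $L^4$ control via the Marcinkiewicz--Zygmund inequality), and only then apply Cauchy--Schwarz; this saves a factor of $\sqrt{n}$, yielding $O(\sqrt{n}\,\|X_1\|_{L^2})$, which AM--GM then absorbs into half the main term plus $O(n^2\|\Delta_1\|_{L^2}^2)$. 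That step also requires a moment-equivalence condition $\|X_1\|_{L^4}\leq C_{2,4}\|X_1\|_{L^2}$ on the leave-one-out MSE, which your sketch does not list among its hypotheses. With the aggregation-before-Cauchy--Schwarz trick and that extra fourth-moment assumption added, your argument goes through and coincides with the paper's.
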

     In many cases, we would expect that the mean squared error decays at the rate of $n^{-1/2}$ (unless we can invoke a fast parametric rate of $1/n$, which holds under strong convexity assumptions); in such cases, we have that $\alpha(n)=\omega(1)$ and $\var(\hat \nabla \mathcal L(\pi))$ can grow with the sample size. Moreover, if the mean squared error does not converge to zero, due to persistent approximation error, or local optima in the optimization, or if \ref{eqn:inv} is ill-posed when dealing with non-parametric estimators, then the variance can grow linearly, i.e. $\alpha(n)=\Theta(n)$. See Figure \ref{fig:gradvar}.
In contrast, the variance of the gradient of the loss with MSE covariates 
\ref{eqn:samplereinforcecv} is often independent of the data set size:

\begin{prop}[Informal]
\thlabel{prop:cv}
   $\E [{\hat \nabla}^\texttt{CV} \mathcal L(\pi)] = \nabla \mathcal L(\pi)$. Further, if $\mse(\theta(D_n))$ is leave-one-out stable then $\var({\hat \nabla}^\texttt{CV} \mathcal L(\pi)) = \mathcal O(1)$ {(c.f. Theorem~\ref{thm:cv-reinforce} in Appendix).}
\end{prop}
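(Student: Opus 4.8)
The plan is to establish the two claims separately, reusing the REINFORCE identity \eqref{eqn:reinforceq} already derived. Write $g_i \coloneqq \partial \log \pi(Z_i|X_i)/\partial \pi$ and $\Delta_i \coloneqq \mse(\theta(D_n)) - \mse(\theta(D_{n\backslash i}))$, so that ${\hat\nabla}^\texttt{CV}\mathcal L(\pi) = \sum_{i=1}^n g_i\,\mse(\theta(D_n)) - \sum_{i=1}^n g_i\,\mse(\theta(D_{n\backslash i}))$. For unbiasedness I would show the control-variate correction $\sum_i g_i\,\mse(\theta(D_{n\backslash i}))$ is mean-zero term by term: fixing $i$ and conditioning on $D_{n\backslash i}$, the factor $\mse(\theta(D_{n\backslash i}))$ is $D_{n\backslash i}$-measurable and pulls out of the conditional expectation, leaving $\E[g_i \mid D_{n\backslash i}] = \E[g_i]$ because $S_i$ is drawn independently of $D_{n\backslash i}$; the score-function identity then gives $\E[g_i] = \E_{X_i}\!\big[\partial/\partial\pi\,\textstyle\sum_z \pi(z|X_i)\big] = \E_{X_i}\!\big[\partial/\partial\pi\,(1)\big] = 0$. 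Subtracting this zero from \eqref{eqn:reinforceq} yields $\E[{\hat\nabla}^\texttt{CV}\mathcal L(\pi)] = \nabla\mathcal L(\pi)$; the only side condition is the standard interchange of $\partial/\partial\pi$ with the expectation (dominated convergence), which I would state as a mild regularity assumption on $\pi$.

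For the variance bound, unbiasedness gives $\var({\hat\nabla}^\texttt{CV}\mathcal L(\pi)) \le \E\|{\hat\nabla}^\texttt{CV}\mathcal L(\pi)\|^2$. Leave-one-out stability (at the $\mathcal O(1/n)$ rate) supplies a uniform bound $|\Delta_i|\le\beta_n$ with $n\beta_n = \mathcal O(1)$, so the triangle inequality gives $\|{\hat\nabla}^\texttt{CV}\mathcal L(\pi)\| \le \beta_n\sum_{i=1}^n\|g_i\|$, and using $(\sum_{i=1}^n a_i)^2 \le n\sum_{i=1}^n a_i^2$ together with the fact that the $g_i$ are i.i.d.,
\[
\E\|{\hat\nabla}^\texttt{CV}\mathcal L(\pi)\|^2 \;\le\; \beta_n^2\, n\sum_{i=1}^n \E\|g_i\|^2 \;=\; (n\beta_n)^2\,\E\|g_1\|^2 \;=\; \mathcal O(1),
\]
provided the generalized Fisher information $\E\|\partial\log\pi(Z_1|X_1)/\partial\pi\|^2$ is finite.

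The step I expect to be the real obstacle is pinning down the notion of leave-one-out stability precisely enough to justify $|\Delta_i|\le\beta_n$. With a clean almost-sure uniform stability bound the argument above closes immediately; if only an $L^2$ (or in-expectation) version is available I would instead expand $\E\|{\hat\nabla}^\texttt{CV}\mathcal L(\pi)\|^2 = \sum_i \E[\|g_i\|^2\Delta_i^2] + \sum_{i\neq j}\E[\langle g_i,g_j\rangle\,\Delta_i\Delta_j]$, bound the $n$ diagonal terms by $\beta_n^2\,\E\|g_i\|^2$ via Cauchy--Schwarz and stability, and the $n(n-1)$ cross terms by $\beta_n^2(\E\|g_1\|)^2$ using $|\langle g_i,g_j\rangle|\le\|g_i\|\|g_j\|$ with the independence of $g_i$ and $g_j$. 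Either way the essential mechanism is that $n\beta_n = \mathcal O(1)$ converts an $n$-fold sum of $\mathcal O(1/n)$-sized contributions into an $\mathcal O(1)$ bound. I would also note that the same computation goes through verbatim when $D_{n\backslash i}$ is replaced by an off-policy (multi-rejection) subsample, since only the zero-mean property of $g_i$ and the stability of $\mse(\theta(\cdot))$ are used.
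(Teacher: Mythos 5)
Your proposal is correct and follows essentially the same route as the paper: unbiasedness is obtained by conditioning on $D_{n\backslash i}$, pulling out the leave-one-out MSE, and invoking the score identity $\E[\nabla\log\pi(Z_i|X_i)]=0$; the variance bound comes from summing $n$ contributions each of size $\mathcal O(1/n)$ via Cauchy--Schwarz. The paper's Theorem~\ref{thm:cv-reinforce} assumes $\|\nabla\log\pi(S_i)\|_{L^\infty}\le C$ together with the $L^2$ stability condition $n\|\Delta_1\|_{L^2}=\mathcal O(1)$ and simply uses $\var\left(\sum_i Z_i\right)\le n\sum_i\var(Z_i)\le n^2C^2\E[\Delta_1^2]$, which also closes the cross-term step in your $L^2$-only fallback (as written, bounding the cross terms by $\beta_n^2(\E\|g_1\|)^2$ still needs an almost-sure bound on $\Delta_i$, since $\Delta_i\Delta_j$ is not independent of $g_i,g_j$).
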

  Unlike \thref{prop:naive}, \thref{prop:cv} holds irrespective of $\alpha(n)$, thus providing a reliable gradient estimator even if the function approximator is mis-specified, or optimization is susceptible to local optima, or if \ref{eqn:inv} is ill-posed when dealing with non-parametric estimators.
Further, our loss using influence functions \ref{eqn:samplereinforceiv}  can also yield similar variance reduction, at much lower computational cost:
\begin{prop}[Informal] If the Von-Mises expansion in \ref{eqn:LOOmain} exists, 
   $\E [{\hat \nabla}^\texttt{IF} \mathcal L(\pi)] = \nabla \mathcal L(\pi) + \mathcal O(n^{-K})$.  If $\mse(\theta(D_n))$ is leave-one-out stable then $\var({\hat \nabla}^\texttt{IF} \mathcal L(\pi)) = \mathcal O(1)$ {(c.f. Theorem~\ref{thm:inf-reinforce} in Appendix)}.
\end{prop}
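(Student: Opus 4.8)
The plan is to prove the two claims about $\hat\nabla^\texttt{IF}\mathcal{L}(\pi)$ separately, treating the bias claim via the Von Mises expansion and the variance claim by reduction to Proposition~\ref{prop:cv}. Recall that $\mathcal{I}_\mse(S_i) = -\sum_{k=1}^K \frac{\epsilon^k}{k!}\mathcal{I}^{(k)}_\mse(S_i)$ with $\epsilon = -1/n$, and that by \eqref{eqn:LOOmain} the exact leave-one-out difference satisfies
\begin{align}
\mse(\theta(D_{n\backslash i})) - \mse(\theta(D_n)) = \mse(\theta(\mathbb{D}_{i,-1/n})) - \mse(\theta(\mathbb{D})) = \sum_{k=1}^\infty \frac{(-1/n)^k}{k!}\mathcal{I}^{(k)}_\mse(S_i),
\end{align}
so that $\mathcal{I}_\mse(S_i) = \big(\mse(\theta(D_n)) - \mse(\theta(D_{n\backslash i}))\big) + R_{i,K}$ where the remainder is $R_{i,K} = \sum_{k=K+1}^\infty \frac{(-1/n)^k}{k!}\mathcal{I}^{(k)}_\mse(S_i)$. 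Thus $\hat\nabla^\texttt{IF}\mathcal{L}(\pi) = \hat\nabla^\texttt{CV}\mathcal{L}(\pi) + \sum_{i=1}^n \frac{\partial\log\pi(Z_i|X_i)}{\partial\pi} R_{i,K}$, which is the decomposition driving both parts.

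For the bias: taking expectations and using Proposition~\ref{prop:cv} (which gives $\E[\hat\nabla^\texttt{CV}\mathcal{L}(\pi)] = \nabla\mathcal{L}(\pi)$ exactly), it remains to bound $\E\big[\sum_i \frac{\partial\log\pi(Z_i|X_i)}{\partial\pi} R_{i,K}\big]$. First I would argue that $|R_{i,K}| = \mathcal{O}(n^{-(K+1)})$ uniformly in $i$: this follows from assuming the higher-order influence functions $\mathcal{I}^{(k)}_\mse(S_i)$ are uniformly bounded (or grow at a controlled rate so the tail of the series converges geometrically), which is the natural regularity condition under which the Von Mises expansion \eqref{eqn:LOOmain} "exists" as stated in the hypothesis. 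Then, since there are $n$ terms in the sum and each score factor $\frac{\partial\log\pi(Z_i|X_i)}{\partial\pi}$ is $\mathcal{O}(1)$ in expectation, the total contribution is $n \cdot \mathcal{O}(n^{-(K+1)}) = \mathcal{O}(n^{-K})$, giving $\E[\hat\nabla^\texttt{IF}\mathcal{L}(\pi)] = \nabla\mathcal{L}(\pi) + \mathcal{O}(n^{-K})$.

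For the variance: I would write $\var(\hat\nabla^\texttt{IF}\mathcal{L}(\pi)) \leq 2\var(\hat\nabla^\texttt{CV}\mathcal{L}(\pi)) + 2\var\big(\sum_i \frac{\partial\log\pi(Z_i|X_i)}{\partial\pi}R_{i,K}\big)$ via $(a+b)^2 \le 2a^2 + 2b^2$. The first term is $\mathcal{O}(1)$ by Proposition~\ref{prop:cv} under leave-one-out stability. For the second, the second moment is at most $\E\big[\big(\sum_i |\tfrac{\partial\log\pi}{\partial\pi}|\,|R_{i,K}|\big)^2\big] \le n \sum_i \E\big[(\tfrac{\partial\log\pi}{\partial\pi})^2 R_{i,K}^2\big] = n \cdot n \cdot \mathcal{O}(n^{-2(K+1)}) = \mathcal{O}(n^{-2K})$, which is $o(1)$ for $K \ge 1$; hence $\var(\hat\nabla^\texttt{IF}\mathcal{L}(\pi)) = \mathcal{O}(1)$. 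The main obstacle is the uniform control of the remainder $R_{i,K}$: making "the Von Mises expansion exists" precise enough to guarantee $|R_{i,K}| = \mathcal{O}(n^{-(K+1)})$ uniformly over $i$ (and not merely pointwise, or with constants depending on $S_i$ in an uncontrolled way) requires either a bound on the $(K+1)$-th Gateaux derivative of $\mse(\theta(\cdot))$ along all perturbation directions $\delta(S_i)$, or an explicit geometric decay assumption on $\|\mathcal{I}^{(k)}_\mse\|_\infty$; I would state this as the regularity hypothesis implicit in the informal theorem and note that for the leading case $K=1$ it reduces to bounding the second-order influence function, which connects to the second-order term discussed in Appendix~\ref{apx:algo}.
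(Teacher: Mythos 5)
Your proposal matches the paper's proof of Theorem~\ref{thm:inf-reinforce}: both decompose ${\hat \nabla}^\texttt{IF} \mathcal L(\pi)$ into the control-variate estimator ${\hat \nabla}^\texttt{CV} \mathcal L(\pi)$ plus truncated Von Mises remainder terms, inherit the $\mathcal O(1)$ variance from the leave-one-out-stability analysis of Theorem~\ref{thm:cv-reinforce}, and attribute the $\mathcal O(n^{-K})$ bias to the Taylor remainder. If anything, your bookkeeping is slightly more careful than the paper's: you track the per-sample remainder as $\mathcal O(n^{-(K+1)})$ so that summing over $n$ samples yields exactly the stated $\mathcal O(n^{-K})$ bias (and you make explicit the uniform-boundedness assumption on higher-order influence functions that this requires), whereas the paper writes the per-sample remainder as $\mathcal O(n^{-K})$ and defers the bias step to ``standard results'' on Taylor remainders.
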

We then prove how rejection sampling can impact the MSE, enabling a variance that has an exponential reduction in variance compared to importance sampling, {whose variance can depend on $\rho_{\max}^k$ (c.f. Theorem~\ref{thm:is-variance} in Appendix), at the expense of only }a bounded failure rate of not producing the desired number of samples $k$. For this result, let $\mse^\texttt{RS}_k$ be the estimate of $\mse$ computed using the mean of $\widehat{\mse}(\theta(D_k))$ across all the subsets $D_k$ from the $n^{RS}$ accepted samples.
\begin{prop}[Informal] Let  $\E_{\pi}[\mse\br{\theta(D_k)}^2]=\alpha(k)^2$ then   $\var\br{\mse^\texttt{RS}_k} = \mathcal O(\alpha(k)^2 k\rho_{\max}/n + \alpha(n))$ and $\text{Bias}\br{\mse^\texttt{RS}_k}=\mathcal O(e^{-n/\rho_{\max}} + \sqrt{\alpha(n)})$. Further, the failure probability is bounded as
     $\Pr(n^{RS}<k) \leq e^{-n/(8\rho_{\max})}$ (c.f. Theorem~\ref{thm:main} in Appendix).
\label{prop:rejection} 
\end{prop}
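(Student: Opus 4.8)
The plan is to write $\mse^\texttt{RS}_k$ as a sum of three error contributions — a rejection-sampling failure mode, the plug-in/Monte-Carlo error of replacing the truth $\theta_0$ by $\theta(D_n)$ in the proxy \ref{eqn:mseproxy}, and the sub-sampling fluctuation over the $\binom{n^{RS}}{k}$ subsets — and to bound each separately against the target $\mathcal L(\pi')=\E_{D_k\sim\pi'}[\mse(\theta(D_k))]$. Three ingredients are used throughout: (i) by the rejection step of Section~\ref{sec:mseshift}, conditional on $n^{RS}=m$ the accepted samples are i.i.d.\ draws from $\Pr(\cdot;\pi')$ (each accepted $S_i$ has law $\propto\Pr(S_i;\pi_i)\rho(S_i)=\Pr(S_i;\pi')$ irrespective of which $\pi_i$ generated it, and acceptances are independent across $i$); (ii) $|f(\cdot;\theta)|\le B$ uniformly, so every $\mse$ is bounded, $\alpha(k)=\mathcal O(1)$, and $\E_\pi[\mse(\theta(D_n))]\le\alpha(n)$ by Jensen; and (iii) the classical variance bound $\var(U_m)\le\tfrac km\var(h)$ for a degree-$k$ U-statistic $U_m$ on $m$ i.i.d.\ inputs with kernel $h$. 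For the failure probability, write $n^{RS}=\sum_{i=1}^n B_i$ with $B_i$ the (conditionally Bernoulli, marginally independent) indicator that $S_i$ is accepted; since $\E_{S\sim\pi_i}[\rho(S)]=\E_X\sum_z\pi'(z\mid X)=1$ (support assumption), $\E[n^{RS}]=n/\rho_{\max}$, and a multiplicative Chernoff bound with $\delta=\tfrac12$ in the regime $k\le\tfrac{n}{2\rho_{\max}}$ (the ``$k=o(n)$'' condition) gives $\Pr(n^{RS}<k)\le e^{-\E[n^{RS}]/8}=e^{-n/(8\rho_{\max})}$. On the complementary ``good'' event we may use $n^{RS}\ge\tfrac{n}{2\rho_{\max}}$; the bad event has mass $\le e^{-n/(8\rho_{\max})}$ and, by boundedness, adds only $\mathcal O(e^{-n/\rho_{\max}})$ to both bias and variance.

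\textbf{Bias.} Let $g_0=f(\cdot;\theta_0)$, $g_n=f(\cdot;\theta(D_n))$, $g_k=f(\cdot;\theta(D_k))$, and let $\|\cdot\|$ be the $L^2(\mathbb P_{x,a})$ norm, so that $\widehat\mse(\theta(D_k))$ is (conditionally on $D_n$) an unbiased estimate of $\|g_n-g_k\|^2$. Using the identity $\|g_n-g_k\|^2=\|g_0-g_k\|^2+2\langle g_n-g_0,\,g_0-g_k\rangle+\|g_n-g_0\|^2=\mse(\theta(D_k))+2\langle g_n-g_0,g_0-g_k\rangle+\mse(\theta(D_n))$ and taking expectations: on the good event the first term averages, over fresh $D_k\sim\pi'$ by ingredient (i), to $\mathcal L(\pi')$; the last has expectation $\le\alpha(n)$; and the cross term is, by two applications of Cauchy–Schwarz, at most $2\sqrt{\E[\mse(\theta(D_n))]}\,\sqrt{\E[\mse(\theta(D_k))]}\le 2\sqrt{\alpha(n)\alpha(k)}=\mathcal O(\sqrt{\alpha(n)})$ by ingredient (ii). Adding the $\mathcal O(e^{-n/\rho_{\max}})$ bad-event term yields $\bias(\mse^\texttt{RS}_k)=\mathcal O(\sqrt{\alpha(n)}+e^{-n/\rho_{\max}})$.

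\textbf{Variance.} Condition on $D_n$ and the rejection outcome; the remaining held-out Monte-Carlo is an average of $M$ bounded i.i.d.\ terms contributing only $\mathcal O(1/M)$ (suppressed in the informal statement), leaving $\var(T)$ with $T:=\binom{n^{RS}}{k}^{-1}\sum_{D_k}\|g_n-g_k\|^2$. By the same expansion $T=T_1+2\langle g_n-g_0,\bar e\rangle+\mse(\theta(D_n))$, where $T_1:=\binom{n^{RS}}{k}^{-1}\sum_{D_k}\mse(\theta(D_k))$ and $\bar e$ is the analogous average of $g_0-g_k$. The last two terms are handled crudely: $\var(\mse(\theta(D_n)))\le\alpha(n)^2=\mathcal O(\alpha(n))$, and $\var(\langle g_n-g_0,\bar e\rangle)\le\E[\langle g_n-g_0,\bar e\rangle^2]\le 4B^2\,\E[\mse(\theta(D_n))]=\mathcal O(\alpha(n))$ (Cauchy–Schwarz with $\|\bar e\|\le 2B$) — no independence is invoked here. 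For $T_1$: by ingredient (i), conditional on $n^{RS}=m$, $T_1$ is a degree-$k$ U-statistic over $m$ i.i.d.\ $\pi'$-samples with kernel $\mse(\theta(\cdot))$, so ingredient (iii) gives $\var(T_1\mid m)\le\tfrac km\var(\mse(\theta(D_k)))\le\tfrac km\alpha(k)^2$, while $\E[T_1\mid m]=\mathcal L(\pi')$ is constant in $m$; averaging over $m$ on the good event ($k/m\le 2k\rho_{\max}/n$) and adding the bad-event remainder yields $\var(T_1)=\mathcal O(\alpha(k)^2k\rho_{\max}/n+e^{-n/\rho_{\max}})$. Summing the three pieces, $\var(\mse^\texttt{RS}_k)=\mathcal O(\alpha(k)^2k\rho_{\max}/n+\alpha(n))$.

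\textbf{Main obstacle.} The delicate point is the variance step: the plug-in $g_n=f(\cdot;\theta(D_n))$ and the rejection-sampled pool are both built from the same data $D_n$, so the averaged subsets are not independent of the kernel's ``center'' $g_n$, and a naive U-statistic argument applied directly to $\|g_n-g_k\|^2$ would not close. The split $\|g_n-g_k\|^2=\mse(\theta(D_k))+2\langle\cdot\rangle+\mse(\theta(D_n))$ is exactly what circumvents this: it confines all the coupling into two ``error'' summands that are controlled by $\mathcal O(\alpha(n))$ using only the second-moment assumption and boundedness, and isolates a genuinely i.i.d.\ U-statistic $T_1$ to which the sharp $\tfrac km\zeta_k$ bound (with $\zeta_k\le\alpha(k)^2$, using $\zeta_c\le\tfrac ck\zeta_k$) applies directly. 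A secondary burden is the bookkeeping needed to stack the good/bad-event, held-out-set, rejection, and subset-averaging conditioning layers so that the exponentially small failure term lands cleanly in both the bias and the variance.
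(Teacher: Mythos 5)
Your proposal is correct and follows essentially the same three-part architecture as the paper's proof of Theorem~\ref{thm:main}: a Chernoff bound on the binomial acceptance count for the failure probability, a separation of the plug-in error from substituting $\theta(D_n)$ for $\theta_0$ (contributing $\mathcal O(\alpha(n))$ to the variance and $\mathcal O(\sqrt{\alpha(n)})$ to the bias), and a law-of-total-variance analysis of the remaining idealized U-statistic using $\var(U\mid m)\lesssim \frac{k}{m}\eta_k$ together with the good event $n^{RS}\gtrsim n/\rho_{\max}$. The only substantive difference is cosmetic: you control the plug-in error via the exact expansion $\|g_n-g_k\|^2=\|g_0-g_k\|^2+2\langle g_n-g_0,g_0-g_k\rangle+\|g_n-g_0\|^2$ with Cauchy--Schwarz on the cross term, whereas the paper bounds $\E[(\mse^\texttt{RS}_k-U)^2]$ directly via a pointwise inequality and Jensen; both yield the same rates (and your route is, if anything, a bit cleaner).
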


We also show in the Appendix (c.f. Corollary~\ref{cor:policy-ordering}) that the above theorem, together with further regularity conditions, can be used to argue that ranking policies based on $\mse^{RS}_k$ leads to approximately optimal policy decisions, with respect to $\E[\mse(\theta(D_n))]$, in a strong approximation sense.

\section{Experiments}
\label{sec:results}

\begin{figure}
    \centering
    \includegraphics[width=0.325\textwidth]{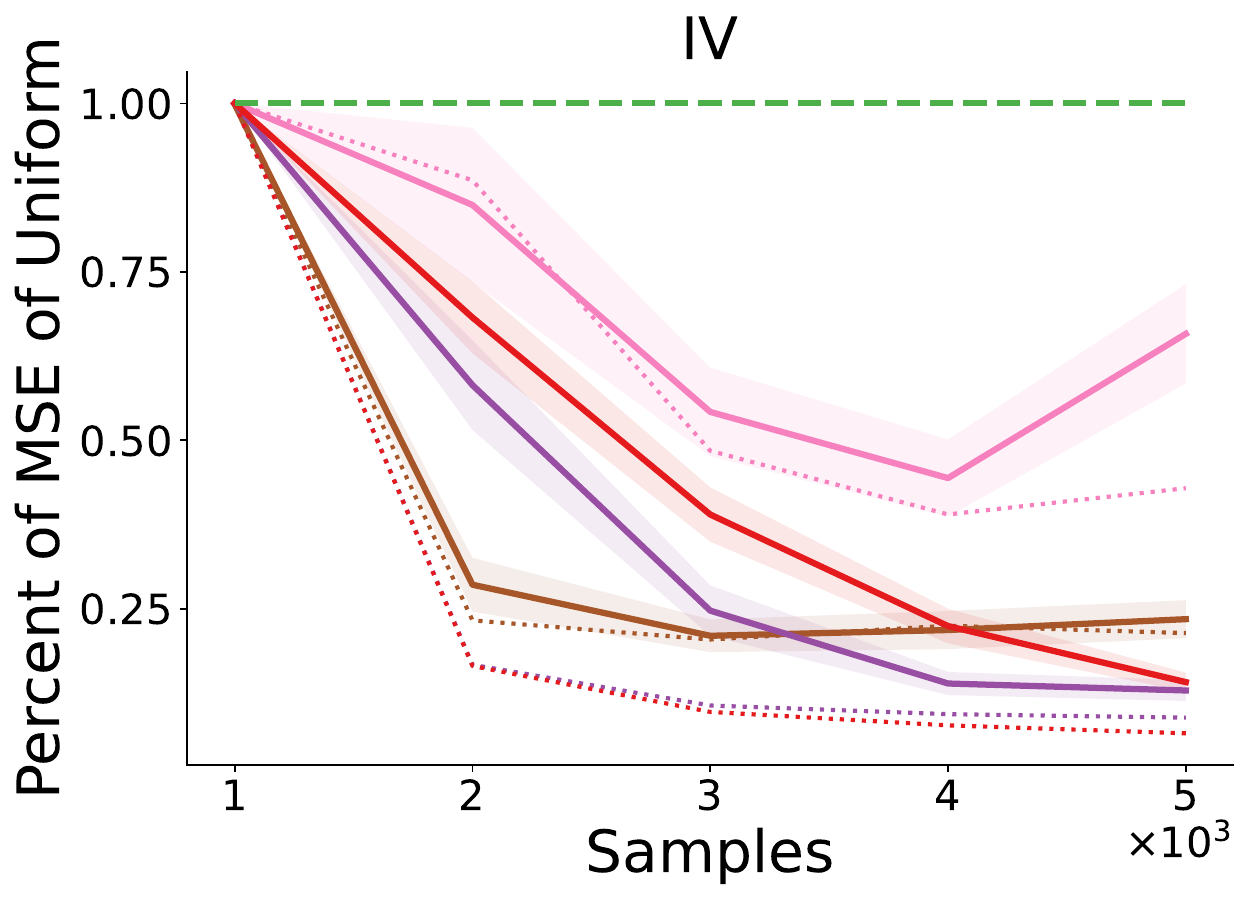}
    \includegraphics[width=0.325\textwidth]{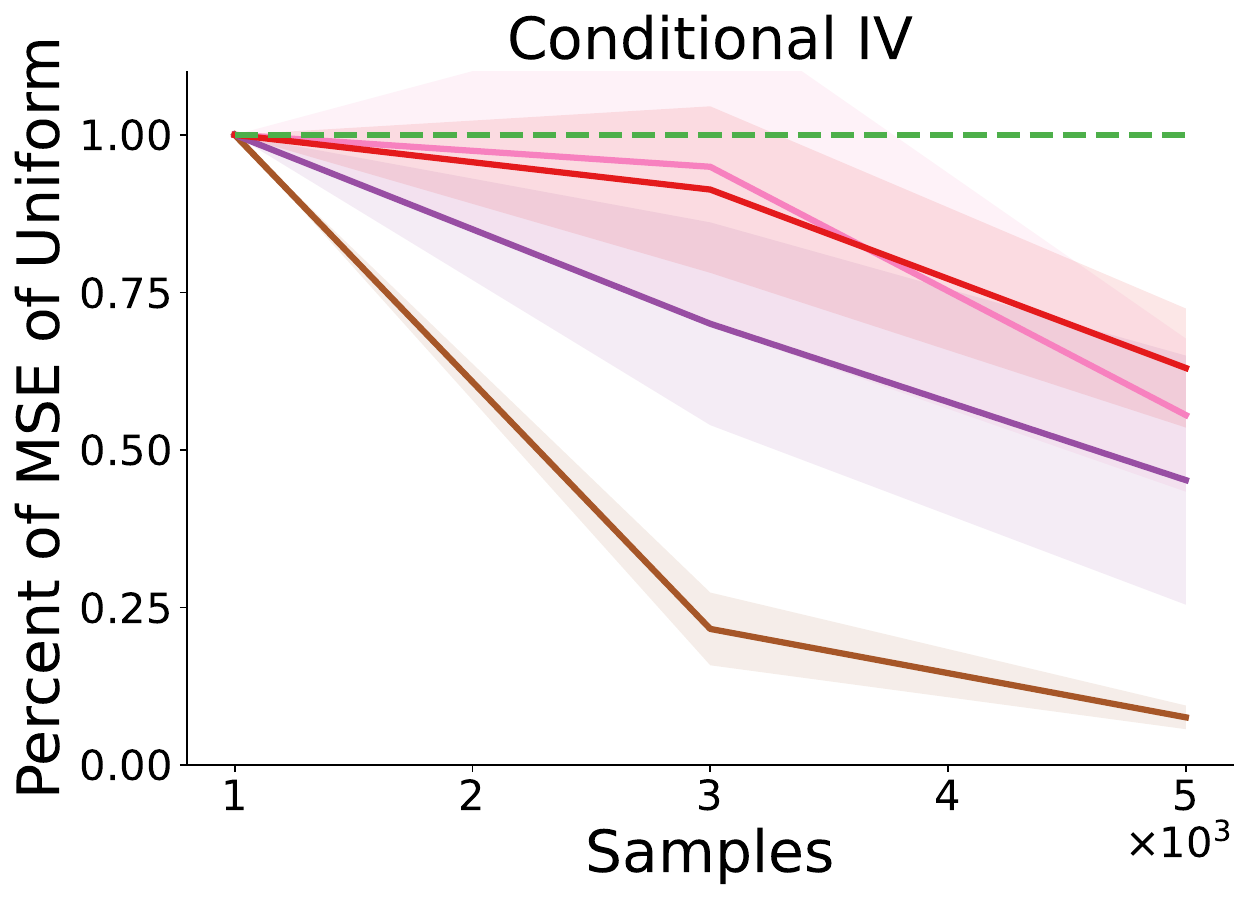}
    \includegraphics[width=0.325\textwidth]{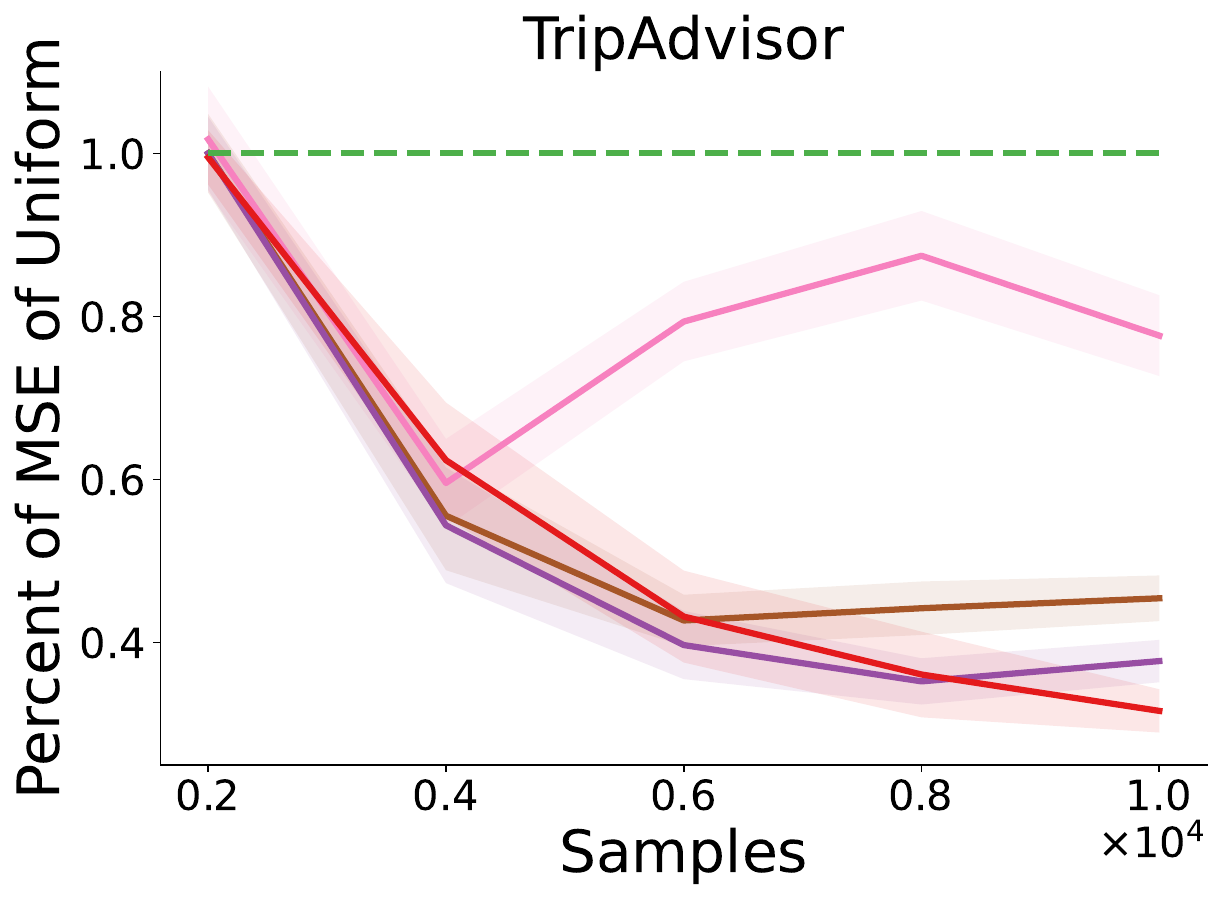}
    \\
    \includegraphics[width=0.8\textwidth]{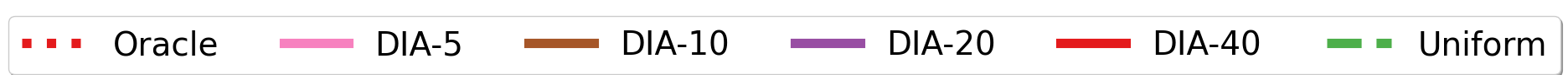}
    \caption{  Performance gains when considering a different number of instrumental variables, with a total of 5 re-allocation. First allocation (1000 samples) is done using uniform sampling, and thus all the methods have the same performance initially.    DIA-X refers to the application of the proposed algorithm to the domain setting where the number of instruments is X.  Note that since $\mse$ under uniform distribution also keeps increasing, a saturating or increasing trend of the \textit{ratio} does not imply that the $\mse$ is increasing. \textbf{(Left)} Results for the IV setting (using closed-form linear estimator). Dotted lines correspond to the oracle estimate for the respective number of instruments. \textbf{(Middle)} Results for the conditional IV setting (using logistic regression based estimator). \textbf{(Right)} Results for the TripAdvisor domain \citep{syrgkanis2019machine} (using neural network based estimator).  
   Error bars correspond to one stderr.}
    \label{fig:iv}
\end{figure}
In this section we empirically investigate the flexibility and efficiency of our approach. We provide the key takeaway points here, and more experimental details are deferred to Appendix \ref{apx:results}.
\paragraph{A. Flexibility of the proposed approach:  } One of the key benefits of DIA is that it can be used as-is for a variety of estimators (linear and non-linear) and settings (unconditional and conditional IV setting). Drawing inspiration from real-world use cases, consider  binary treatment settings, 
where we only have access to a plethora of instruments that can be used to encourage (e.g., using different types of notifications, emails, etc.) treatment uptake. Specifically, we consider three regimes: %

\textbf{IV: } This simulator is a basic IV setting with homogenous treatment effects and which thus permits a \textit{closed-form }solution using the standard two-stage least-squares procedure \citep{pearl2009causality}. This domain has heteroskedastic outcome noise and varying levels of compliance for different instruments. Figure \ref{fig:iv} (left) presents the results for this setting.

\textbf{Conditional IV: } Second we consider the important  setting where instruments/encouragements can be allocated in a context-dependent way, and the objective is to estimate the conditional average treatment effect. Our  simulator has covariates containing a mix of binary and continuous-valued features, and heteroskedastic noises and compliances, across instruments and outcomes, for every covariate. We solve the counterfactual prediction $f$ in \ref{eqn:inv} using the two-stage moment conditions in \ref{eqn:moments} involving logistic regression and this estimator does \textit{not} have a closed-form solution. Figure \ref{fig:iv} (middle) presents the results for this setting.

\textbf{TripAdvisor: } We use the simulator built from TripAdvisor customer data \citep{syrgkanis2019machine}. The goal is to estimate the effect on revenue if a customer becomes a subscribed member. The original setup had instruments corresponding to easier/promotional sign-up options, but to provide ablation studies we consider a larger, synthetically augmented, set of instruments.  Covariates correspond to demographic data about the customer, and their past interaction behavior on the website. For this setting, we use the estimator by \citet{hartford2017deep} and model all the functions $f$,   $\mathrm dP$ (from \ref{eqn:inv}), and $\pi$,  using neural networks. Figure \ref{fig:iv} (right) presents the results for this setting.

\begin{figure}
    \centering    
    \includegraphics[width=0.35\textwidth]{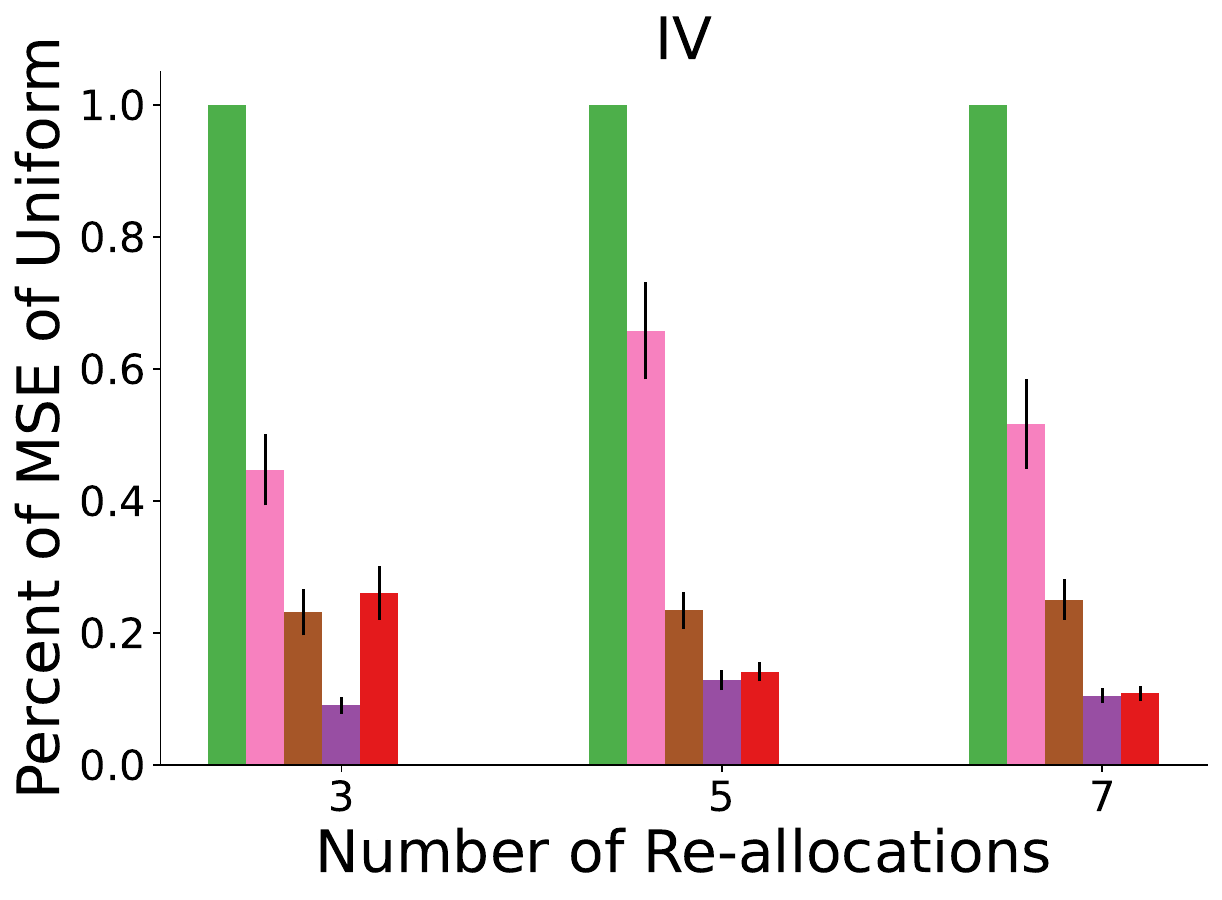} \hspace{30pt} 
    \includegraphics[width=0.35\textwidth]{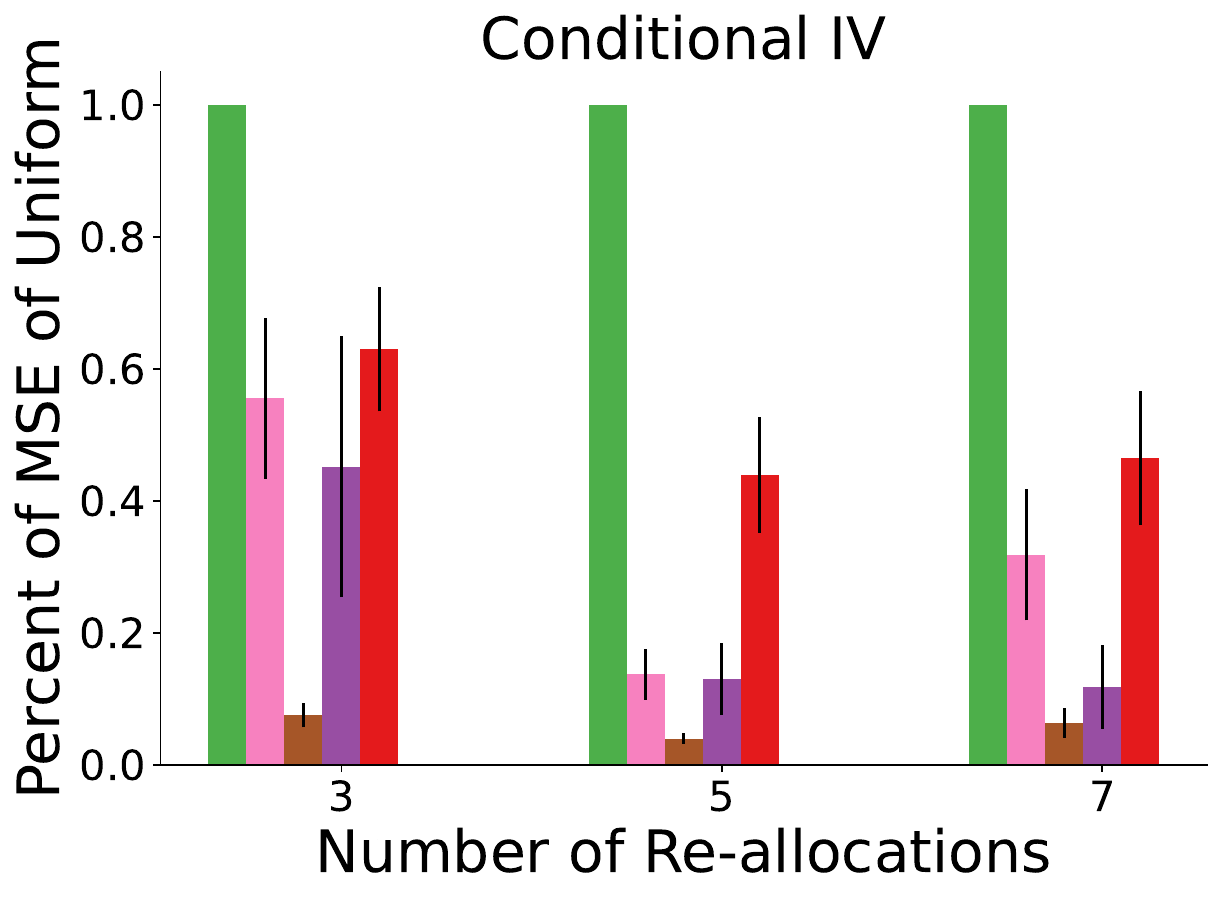}
    \\
    \includegraphics[width=0.8\textwidth]{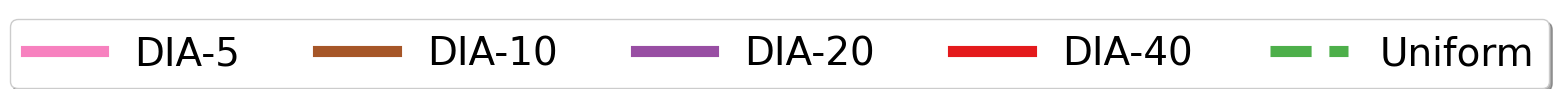}
    \caption{
     DIA-X refers to the application of the proposed algorithm to the domain setting where the number of instruments is X.  
    The plots illustrate the percent of $\mse$ of uniform, after all the $5000$ samples are collected, with the given number of re-allocations. Error bars correspond to one stderr.}
    \label{fig:civ}
\end{figure}

\paragraph{B. Understanding the performance gains: } In order to simulate a realistic situation where deployed policy can be updated only periodically, we consider a batched allocation setting. To measure the performance gain, we compute the relative improvement in $\mse$ over uniform allocation strategy, i.e., a metric popular in \textit{direct} experiment design literature \citep{che2023adaptive} (lower is better). 
In the IV setting, in Figure \ref{fig:iv} (left), we also provide a comparison with the oracle obtained using brute force search.
To assess the robustness of our method, we also consider varying the number of available instruments and the number of batch re-allocations possible.

Across all the domains, we observe that DIA can provide substantial gains for estimating the counterfactual prediction $f$ by adaptively designing the instruments for indirect experiments (Figure \ref{fig:iv}). Performance gains are observed even as we vary the number of instruemnts present in the domain. Importantly, DIA achieves this across different (linear and non-linear) estimators, thereby illustrating its flexibility. In addition, in general there is a benefit to increasing the number of times the instrument-design policy is updated (Figure~\ref{fig:civ}).   

It is worth highlighting that DIA can improve the accuracy %
of the base estimator by a factor of $1.2\times$ to $10\times$: equivalently, for some desired target treatment effect estimation accuracies, DIA needs only 80\% to an order-of-magnitude less sample data.

Our results also suggest a tension behind the amount of data and the complexity of the problem, creating a `U'-trend (see e.g. Figure \ref{fig:civ}).
With a larger number of instruments, there is more \textit{potential} advantage of being strategic, especially when for many contexts, most instruments have a weak influence. %
However, learning the optimal strategy to realize those gains also becomes harder when there are so many instruments (and the sample budget is held fixed).  In the middle regime gains are substantial and the method can also quickly learn an effective instrument-selection design $\pi^*$.

\section{Conclusion}
The increasing prevalence of human-AI interaction systems presents an important development. However, as AI systems can often only be \textit{suggestive} and not \textit{prescriptive}, estimating the effect of its suggested action necessitates the development of sample-efficient methods to estimate treatment effect merely through suggestions, while leaving the agency of decision to the user.
This work took the initial step to lay the foundation for adaptive data collection for such indirect experiments. We characterized the key challenges, theoretically assessed the proposed remedies, and validated them empirically on domains inspired by real-world settings. {Scaling the framework to settings with natural language instruments, and doing inference with adaptively collected data \citep{zhang2021statistical,gupta2021efficient} remain exciting future directions.}

\section{Acknowledgement}
We thank Stefan Wager, Jann Spiess, and Art Owen for their valuable feedback. Earlier versions of the draft also benefited from feedback from Jonathan Lee and Allen Nie. 
This work was supported in part by 2023 Amazon Research Award.

\bibliography{mybib}

\begin{thebibliography}{98}
\providecommand{\natexlab}[1]{#1}
\providecommand{\url}[1]{\texttt{#1}}
\expandafter\ifx\csname urlstyle\endcsname\relax
  \providecommand{\doi}[1]{doi: #1}\else
  \providecommand{\doi}{doi: \begingroup \urlstyle{rm}\Url}\fi

\bibitem[Abou-Moustafa \& Szepesv{\'a}ri(2019)Abou-Moustafa and
  Szepesv{\'a}ri]{abou2019exponential}
Karim Abou-Moustafa and Csaba Szepesv{\'a}ri.
\newblock An exponential efron-stein inequality for $ l\_q $ stable learning
  rules.
\newblock In \emph{Algorithmic Learning Theory}, pp.\  31--63. PMLR, 2019.

\bibitem[Alaa \& Van Der~Schaar(2019)Alaa and Van
  Der~Schaar]{alaa2019validating}
Ahmed Alaa and Mihaela Van Der~Schaar.
\newblock Validating causal inference models via influence functions.
\newblock In \emph{International Conference on Machine Learning}, pp.\
  191--201. PMLR, 2019.

\bibitem[Alaa \& Van Der~Schaar(2020)Alaa and Van
  Der~Schaar]{alaa2020discriminative}
Ahmed Alaa and Mihaela Van Der~Schaar.
\newblock Discriminative jackknife: Quantifying uncertainty in deep learning
  via higher-order influence functions.
\newblock In \emph{International Conference on Machine Learning}, pp.\
  165--174. PMLR, 2020.

\bibitem[Angrist et~al.(1996)Angrist, Imbens, and
  Rubin]{angrist1996identification}
Joshua~D Angrist, Guido~W Imbens, and Donald~B Rubin.
\newblock Identification of causal effects using instrumental variables.
\newblock \emph{Journal of the American statistical Association}, 91\penalty0
  (434):\penalty0 444--455, 1996.

\bibitem[Athey \& Wager(2021)Athey and Wager]{athey2021policy}
Susan Athey and Stefan Wager.
\newblock Policy learning with observational data.
\newblock \emph{Econometrica}, 89\penalty0 (1):\penalty0 133--161, 2021.

\bibitem[Bae et~al.(2022)Bae, Ng, Lo, Ghassemi, and Grosse]{bae2022if}
Juhan Bae, Nathan Ng, Alston Lo, Marzyeh Ghassemi, and Roger~B Grosse.
\newblock If influence functions are the answer, then what is the question?
\newblock \emph{Advances in Neural Information Processing Systems},
  35:\penalty0 17953--17967, 2022.

\bibitem[Basu et~al.(2020)Basu, Pope, and Feizi]{basu2020influence}
Samyadeep Basu, Philip Pope, and Soheil Feizi.
\newblock Influence functions in deep learning are fragile.
\newblock \emph{arXiv preprint arXiv:2006.14651}, 2020.

\bibitem[Bradbury et~al.(2018)Bradbury, Frostig, Hawkins, Johnson, Leary,
  Maclaurin, Necula, Paszke, Vander{P}las, Wanderman-{M}ilne, and
  Zhang]{jax2018github}
James Bradbury, Roy Frostig, Peter Hawkins, Matthew~James Johnson, Chris Leary,
  Dougal Maclaurin, George Necula, Adam Paszke, Jake Vander{P}las, Skye
  Wanderman-{M}ilne, and Qiao Zhang.
\newblock {JAX}: composable transformations of {P}ython+{N}um{P}y programs,
  2018.
\newblock URL \url{http://github.com/google/jax}.

\bibitem[Breheny(2020)]{influence}
Patrick Breheny.
\newblock Statistical functionals and influence functions.
\newblock Technical report, The University of Iowa, 2020.
\newblock
  \url{https://myweb.uiowa.edu/pbreheny/uk/teaching/621/notes/8-28.pdf}.

\bibitem[Brynjolfsson et~al.(2023)Brynjolfsson, Li, and
  Raymond]{brynjolfsson2023generative}
Erik Brynjolfsson, Danielle Li, and Lindsey~R Raymond.
\newblock Generative ai at work.
\newblock Technical report, National Bureau of Economic Research, 2023.

\bibitem[Caffo et~al.(2002)Caffo, Booth, and Davison]{caffo2002empirical}
Brian~S Caffo, James~G Booth, and AC~Davison.
\newblock Empirical supremum rejection sampling.
\newblock \emph{Biometrika}, 89\penalty0 (4):\penalty0 745--754, 2002.

\bibitem[Cao(1993)]{cao1993bootstrapping}
Ricardo Cao.
\newblock Bootstrapping the mean integrated squared error.
\newblock \emph{Journal of Multivariate Analysis}, 45\penalty0 (1):\penalty0
  137--160, 1993.

\bibitem[Celisse \& Guedj(2016)Celisse and Guedj]{celisse2016stability}
Alain Celisse and Benjamin Guedj.
\newblock Stability revisited: new generalisation bounds for the leave-one-out.
\newblock \emph{arXiv preprint arXiv:1608.06412}, 2016.

\bibitem[Chaloner \& Verdinelli(1995)Chaloner and
  Verdinelli]{chaloner1995bayesian}
Kathryn Chaloner and Isabella Verdinelli.
\newblock Bayesian experimental design: A review.
\newblock \emph{Statistical science}, pp.\  273--304, 1995.

\bibitem[Che \& Namkoong(2023)Che and Namkoong]{che2023adaptive}
Ethan Che and Hongseok Namkoong.
\newblock Adaptive experimentation at scale: Bayesian algorithms for flexible
  batches.
\newblock \emph{arXiv preprint arXiv:2303.11582}, 2023.

\bibitem[Chen(2017{\natexlab{a}})]{yenchi}
Yen-Chi Chen.
\newblock Introduction to resampling methods. lecture 5: Bootstrap.
\newblock Technical report, University of Washington, 2017{\natexlab{a}}.
\newblock
  \url{https://faculty.washington.edu/yenchic/17Sp_403/Lec5-bootstrap.pdf}.

\bibitem[Chen(2017{\natexlab{b}})]{yenchivar}
Yen-Chi Chen.
\newblock Introduction to resampling methods. lecture 9: Introduction to the
  bootstrap theory.
\newblock Technical report, University of Washington, 2017{\natexlab{b}}.
\newblock
  \url{https://faculty.washington.edu/yenchic/17Sp_403/Lec9_theory.pdf}.

\bibitem[Chen(2020)]{yenchiinf}
Yen-Chi Chen.
\newblock Stat 512: Statistical inference. lecture 10: Statistical functionals
  and the bootstrap.
\newblock Technical report, University of Washington, 2020.
\newblock
  \url{https://faculty.washington.edu/yenchic/20A_stat512/Lec10_functional.pdf}.

\bibitem[Clarke \& Windmeijer(2012)Clarke and
  Windmeijer]{clarke2012instrumental}
Paul~S Clarke and Frank Windmeijer.
\newblock Instrumental variable estimators for binary outcomes.
\newblock \emph{Journal of the American Statistical Association}, 107\penalty0
  (500):\penalty0 1638--1652, 2012.

\bibitem[Cook \& Weisberg(1982)Cook and Weisberg]{cook1982residuals}
R~Dennis Cook and Sanford Weisberg.
\newblock \emph{Residuals and influence in regression}.
\newblock New York: Chapman and Hall, 1982.

\bibitem[Curth et~al.(2022)Curth, H{\"u}y{\"u}k, and van~der
  Schaar]{curth2022adaptively}
Alicia Curth, Alihan H{\"u}y{\"u}k, and Mihaela van~der Schaar.
\newblock Adaptively identifying patient populations with treatment benefit in
  clinical trials.
\newblock \emph{arXiv preprint arXiv:2208.05844}, 2022.

\bibitem[Debruyne et~al.(2008)Debruyne, Hubert, and Suykens]{debruyne2008model}
Michiel Debruyne, Mia Hubert, and Johan~AK Suykens.
\newblock Model selection in kernel based regression using the influence
  function.
\newblock \emph{Journal of machine learning research.-Cambridge, Mass.},
  9:\penalty0 2377--2400, 2008.

\bibitem[Della~Vecchia \& Basu(2023)Della~Vecchia and
  Basu]{dellavecchia2023online}
Riccardo Della~Vecchia and Debabrota Basu.
\newblock Online instrumental variable regression: Regret analysis and bandit
  feedback.
\newblock \emph{arXiv preprint arXiv:2302.09357}, 2023.

\bibitem[Eren \& Henderson(2008)Eren and Henderson]{eren2008impact}
Ozkan Eren and Daniel~J Henderson.
\newblock The impact of homework on student achievement.
\newblock \emph{The Econometrics Journal}, 11\penalty0 (2):\penalty0 326--348,
  2008.

\bibitem[Fan et~al.(2018)Fan, Lv, and Wang]{fan2018dnn}
Yingying Fan, Jinchi Lv, and Jingbo Wang.
\newblock Dnn: A two-scale distributional tale of heterogeneous treatment
  effect inference.
\newblock \emph{Available at SSRN 3238897}, 2018.

\bibitem[Fernholz(2012)]{fernholz2012mises}
Luisa~Turrin Fernholz.
\newblock \emph{Von Mises calculus for statistical functionals}, volume~19.
\newblock Springer Science \& Business Media, 2012.

\bibitem[Ferstad et~al.(2022)Ferstad, Prahalad, Maahs, Fox, Johari, and
  Scheinker]{ferstad20221009}
Johannes Ferstad, Priya Prahalad, David~M Maahs, Emily Fox, Ramesh Johari, and
  David Scheinker.
\newblock 1009-p: The association between patient characteristics and the
  efficacy of remote patient monitoring and messaging.
\newblock \emph{Diabetes}, 71\penalty0 (Supplement\_1), 2022.

\bibitem[Fisher \& Kennedy(2021)Fisher and Kennedy]{fisher2021visually}
Aaron Fisher and Edward~H Kennedy.
\newblock Visually communicating and teaching intuition for influence
  functions.
\newblock \emph{The American Statistician}, 75\penalty0 (2):\penalty0 162--172,
  2021.

\bibitem[Frauen \& Feuerriegel(2022)Frauen and
  Feuerriegel]{frauen2022estimating}
Dennis Frauen and Stefan Feuerriegel.
\newblock Estimating individual treatment effects under unobserved confounding
  using binary instruments.
\newblock \emph{arXiv preprint arXiv:2208.08544}, 2022.

\bibitem[Geyer(2006)]{geyer20065601}
Charles~J Geyer.
\newblock 5601 notes: The subsampling bootstrap.
\newblock \emph{Unpublished manuscript}, 2006.
\newblock URL \url{https://www.stat.umn.edu/geyer/5601/notes/sub.pdf}.

\bibitem[Grosse et~al.(2023)Grosse, Bae, Anil, Elhage, Tamkin, Tajdini,
  Steiner, Li, Durmus, Perez, et~al.]{grosse2023studying}
Roger Grosse, Juhan Bae, Cem Anil, Nelson Elhage, Alex Tamkin, Amirhossein
  Tajdini, Benoit Steiner, Dustin Li, Esin Durmus, Ethan Perez, et~al.
\newblock Studying large language model generalization with influence
  functions.
\newblock \emph{arXiv preprint arXiv:2308.03296}, 2023.

\bibitem[Gupta et~al.(2021)Gupta, Lipton, and Childers]{gupta2021efficient}
Shantanu Gupta, Zachary Lipton, and David Childers.
\newblock Efficient online estimation of causal effects by deciding what to
  observe.
\newblock \emph{Advances in Neural Information Processing Systems},
  34:\penalty0 20995--21007, 2021.

\bibitem[Hahn et~al.(2011)Hahn, Hirano, and Karlan]{hahn2011adaptive}
Jinyong Hahn, Keisuke Hirano, and Dean Karlan.
\newblock Adaptive experimental design using the propensity score.
\newblock \emph{Journal of Business \& Economic Statistics}, 29\penalty0
  (1):\penalty0 96--108, 2011.

\bibitem[Hall(1990)]{hall1990using}
Peter Hall.
\newblock Using the bootstrap to estimate mean squared error and select
  smoothing parameter in nonparametric problems.
\newblock \emph{Journal of multivariate analysis}, 32\penalty0 (2):\penalty0
  177--203, 1990.

\bibitem[Hall(2016)]{halllecture}
Peter Hall.
\newblock Methodology and theory for the bootstrap.
\newblock Technical report, University of California, Davis, 2016.
\newblock
  \url{https://anson.ucdavis.edu/~peterh/sta251/bootstrap-lectures-to-may-16.pdf}.

\bibitem[Hartford et~al.(2017)Hartford, Lewis, Leyton-Brown, and
  Taddy]{hartford2017deep}
Jason Hartford, Greg Lewis, Kevin Leyton-Brown, and Matt Taddy.
\newblock Deep iv: A flexible approach for counterfactual prediction.
\newblock In \emph{International Conference on Machine Learning}, pp.\
  1414--1423. PMLR, 2017.

\bibitem[Heckman \& Urzua(2010)Heckman and Urzua]{heckman2010comparing}
James~J Heckman and Sergio Urzua.
\newblock Comparing iv with structural models: What simple iv can and cannot
  identify.
\newblock \emph{Journal of Econometrics}, 156\penalty0 (1):\penalty0 27--37,
  2010.

\bibitem[Hines et~al.(2022)Hines, Dukes, Diaz-Ordaz, and
  Vansteelandt]{hines2022demystifying}
Oliver Hines, Oliver Dukes, Karla Diaz-Ordaz, and Stijn Vansteelandt.
\newblock Demystifying statistical learning based on efficient influence
  functions.
\newblock \emph{The American Statistician}, 76\penalty0 (3):\penalty0 292--304,
  2022.

\bibitem[Hoeffding(1948)]{hoeffding1948class}
Wassily Hoeffding.
\newblock A class of statistics with asymptotically normal distribution.
\newblock \emph{The Annals of Mathematical Statistics}, 19\penalty0
  (3):\penalty0 293--325, 1948.

\bibitem[Holstein et~al.(2018)Holstein, McLaren, and
  Aleven]{holstein2018student}
Kenneth Holstein, Bruce~M McLaren, and Vincent Aleven.
\newblock Student learning benefits of a mixed-reality teacher awareness tool
  in ai-enhanced classrooms.
\newblock In \emph{Artificial Intelligence in Education: 19th International
  Conference, AIED 2018, London, UK, June 27--30, 2018, Proceedings, Part I
  19}, pp.\  154--168. Springer, 2018.

\bibitem[Hsu et~al.(2023)Hsu, Shah, Senthil, Ashktorab, Dugan, Geyer, and
  Yang]{hsu2023helping}
Shang-Ling Hsu, Raj~Sanjay Shah, Prathik Senthil, Zahra Ashktorab, Casey Dugan,
  Werner Geyer, and Diyi Yang.
\newblock Helping the helper: Supporting peer counselors via ai-empowered
  practice and feedback.
\newblock \emph{arXiv preprint arXiv:2305.08982}, 2023.

\bibitem[Ichimura \& Newey(2022)Ichimura and Newey]{ichimura2022influence}
Hidehiko Ichimura and Whitney~K Newey.
\newblock The influence function of semiparametric estimators.
\newblock \emph{Quantitative Economics}, 13\penalty0 (1):\penalty0 29--61,
  2022.

\bibitem[Imbens(2014)]{imbens2014instrumental}
Guido Imbens.
\newblock Instrumental variables: an econometrician's perspective.
\newblock Technical report, National Bureau of Economic Research, 2014.

\bibitem[Imbens(2010)]{imbens2010better}
Guido~W Imbens.
\newblock Better late than nothing: Some comments on deaton (2009) and heckman
  and urzua (2009).
\newblock \emph{Journal of Economic literature}, 48\penalty0 (2):\penalty0
  399--423, 2010.

\bibitem[Kahn(2015)]{kahn2015influence}
Jay Kahn.
\newblock Influence functions for fun and profit.
\newblock \emph{Ross School of Business, University of Michigan}, 2015.
\newblock URL \url{http://j-kahn.com/files/influencefunctions.pdf}.

\bibitem[Kallus(2018)]{Kallus2018InstrumentArmedB}
Nathan Kallus.
\newblock Instrument-armed bandits.
\newblock In Firdaus Janoos, Mehryar Mohri, and Karthik Sridharan (eds.),
  \emph{Proceedings of Algorithmic Learning Theory}, volume~83 of
  \emph{Proceedings of Machine Learning Research}, pp.\  529--546. PMLR, 07--09
  Apr 2018.
\newblock URL \url{https://proceedings.mlr.press/v83/kallus18a.html}.

\bibitem[Kasy(2009)]{kasy2009semiparametrically}
Maximilian Kasy.
\newblock Semiparametrically efficient estimation of conditional instrumental
  variables parameters.
\newblock \emph{The International Journal of Biostatistics}, 5\penalty0 (1),
  2009.

\bibitem[Kasy \& Sautmann(2021)Kasy and Sautmann]{kasy2021adaptive}
Maximilian Kasy and Anja Sautmann.
\newblock Adaptive treatment assignment in experiments for policy choice.
\newblock \emph{Econometrica}, 89\penalty0 (1):\penalty0 113--132, 2021.

\bibitem[Kato et~al.(2020)Kato, Ishihara, Honda, and Narita]{kato2020efficient}
Masahiro Kato, Takuya Ishihara, Junya Honda, and Yusuke Narita.
\newblock Efficient adaptive experimental design for average treatment effect
  estimation.
\newblock \emph{arXiv preprint arXiv:2002.05308}, 2020.

\bibitem[Kearns \& Ron(1997)Kearns and Ron]{kearns1997algorithmic}
Michael Kearns and Dana Ron.
\newblock Algorithmic stability and sanity-check bounds for leave-one-out
  cross-validation.
\newblock In \emph{Proceedings of the tenth annual conference on Computational
  learning theory}, pp.\  152--162, 1997.

\bibitem[Koh \& Liang(2017)Koh and Liang]{koh2017understanding}
Pang~Wei Koh and Percy Liang.
\newblock Understanding black-box predictions via influence functions.
\newblock In \emph{International conference on machine learning}, pp.\
  1885--1894. PMLR, 2017.

\bibitem[Kool et~al.(2019)Kool, van Hoof, and Welling]{kool2019buy}
Wouter Kool, Herke van Hoof, and Max Welling.
\newblock Buy 4 {REINFORCE} samples, get a baseline for free!
\newblock In \emph{Deep Reinforcement Learning Meets Structured Prediction,
  {ICLR} 2019 Workshop, New Orleans, Louisiana, United States, May 6, 2019}.
  OpenReview.net, 2019.
\newblock URL \url{https://openreview.net/forum?id=r1lgTGL5DE}.

\bibitem[Krantz \& Parks(2002)Krantz and Parks]{krantz2002implicit}
Steven~George Krantz and Harold~R Parks.
\newblock \emph{The implicit function theorem: history, theory, and
  applications}.
\newblock Springer Science \& Business Media, 2002.

\bibitem[Kuang et~al.(2020)Kuang, Sala, Sohoni, Wu, C{\'o}rdova-Palomera,
  Dunnmon, Priest, and R{\'e}]{kuang2020ivy}
Zhaobin Kuang, Frederic Sala, Nimit Sohoni, Sen Wu, Aldo C{\'o}rdova-Palomera,
  Jared Dunnmon, James Priest, and Christopher R{\'e}.
\newblock Ivy: Instrumental variable synthesis for causal inference.
\newblock In \emph{International Conference on Artificial Intelligence and
  Statistics}, pp.\  398--410. PMLR, 2020.

\bibitem[Li \& Owen(2023)Li and Owen]{li2023double}
Harrison~H Li and Art~B Owen.
\newblock Double machine learning and design in batch adaptive experiments.
\newblock \emph{arXiv preprint arXiv:2309.15297}, 2023.

\bibitem[Liotet et~al.(2022)Liotet, Vidaich, Metelli, and
  Restelli]{liotet2022lifelong}
Pierre Liotet, Francesco Vidaich, Alberto~Maria Metelli, and Marcello Restelli.
\newblock Lifelong hyper-policy optimization with multiple importance sampling
  regularization.
\newblock In \emph{Proceedings of the AAAI Conference on Artificial
  Intelligence}, volume~36, pp.\  7525--7533, 2022.

\bibitem[Liu et~al.(2017)Liu, Feng, Mao, Zhou, Peng, and Liu]{liu2017action}
Hao Liu, Yihao Feng, Yi~Mao, Dengyong Zhou, Jian Peng, and Qiang Liu.
\newblock Action-depedent control variates for policy optimization via stein's
  identity.
\newblock \emph{arXiv preprint arXiv:1710.11198}, 2017.

\bibitem[Loo et~al.(2023)Loo, Hasani, Lechner, and Rus]{loo2023dataset}
Noel Loo, Ramin Hasani, Mathias Lechner, and Daniela Rus.
\newblock Dataset distillation with convexified implicit gradients.
\newblock \emph{arXiv preprint arXiv:2302.06755}, 2023.

\bibitem[Lorraine et~al.(2020)Lorraine, Vicol, and
  Duvenaud]{lorraine2020optimizing}
Jonathan Lorraine, Paul Vicol, and David Duvenaud.
\newblock Optimizing millions of hyperparameters by implicit differentiation.
\newblock In \emph{International conference on artificial intelligence and
  statistics}, pp.\  1540--1552. PMLR, 2020.

\bibitem[Mikusheva(2013)]{Mikusheva}
Anna Mikusheva.
\newblock Time series analysis. lecture 9: Bootstrap.
\newblock Technical report, Massachusetts Institute of Technology, 2013.
\newblock
  \url{https://ocw.mit.edu/courses/14-384-time-series-analysis-fall-2013/resources/mit14_384f13_lec9/}.

\bibitem[Murphy(2005)]{murphy2005experimental}
Susan~A Murphy.
\newblock An experimental design for the development of adaptive treatment
  strategies.
\newblock \emph{Statistics in medicine}, 24\penalty0 (10):\penalty0 1455--1481,
  2005.

\bibitem[Newey \& McFadden(1994)Newey and McFadden]{newey1994large}
Whitney~K Newey and Daniel McFadden.
\newblock Large sample estimation and hypothesis testing.
\newblock \emph{Handbook of econometrics}, 4:\penalty0 2111--2245, 1994.

\bibitem[Newey \& Powell(2003)Newey and Powell]{newey2003instrumental}
Whitney~K Newey and James~L Powell.
\newblock Instrumental variable estimation of nonparametric models.
\newblock \emph{Econometrica}, 71\penalty0 (5):\penalty0 1565--1578, 2003.

\bibitem[Ngo et~al.(2021)Ngo, Stapleton, Syrgkanis, and
  Wu]{ngo2021incentivizing}
Dung Daniel~T Ngo, Logan Stapleton, Vasilis Syrgkanis, and Steven Wu.
\newblock Incentivizing compliance with algorithmic instruments.
\newblock In \emph{International Conference on Machine Learning}, pp.\
  8045--8055. PMLR, 2021.

\bibitem[Papini et~al.(2019)Papini, Metelli, Lupo, and
  Restelli]{papini2019optimistic}
Matteo Papini, Alberto~Maria Metelli, Lorenzo Lupo, and Marcello Restelli.
\newblock Optimistic policy optimization via multiple importance sampling.
\newblock In \emph{International Conference on Machine Learning}, pp.\
  4989--4999. PMLR, 2019.

\bibitem[Paszke et~al.(2019)Paszke, Gross, Massa, Lerer, Bradbury, Chanan,
  Killeen, Lin, Gimelshein, Antiga, Desmaison, Kopf, Yang, DeVito, Raison,
  Tejani, Chilamkurthy, Steiner, Fang, Bai, and Chintala]{pytorch}
Adam Paszke, Sam Gross, Francisco Massa, Adam Lerer, James Bradbury, Gregory
  Chanan, Trevor Killeen, Zeming Lin, Natalia Gimelshein, Luca Antiga, Alban
  Desmaison, Andreas Kopf, Edward Yang, Zachary DeVito, Martin Raison, Alykhan
  Tejani, Sasank Chilamkurthy, Benoit Steiner, Lu~Fang, Junjie Bai, and Soumith
  Chintala.
\newblock Pytorch: An imperative style, high-performance deep learning library.
\newblock In \emph{Advances in Neural Information Processing Systems 32}, pp.\
  8024--8035. Curran Associates, Inc., 2019.

\bibitem[Pearl(2009)]{pearl2009causality}
Judea Pearl.
\newblock \emph{Causality}.
\newblock Cambridge university press, 2009.

\bibitem[Pearlmutter(1994)]{pearlmutter1994fast}
Barak~A Pearlmutter.
\newblock Fast exact multiplication by the hessian.
\newblock \emph{Neural computation}, 6\penalty0 (1):\penalty0 147--160, 1994.

\bibitem[Politis et~al.(1999)Politis, Romano, and Wolf]{politis1999subsampling}
Dimitris~N Politis, Joseph~P Romano, and Michael Wolf.
\newblock \emph{Subsampling}.
\newblock Springer Science \& Business Media, 1999.

\bibitem[Precup(2000)]{precup2000eligibility}
Doina Precup.
\newblock Eligibility traces for off-policy policy evaluation.
\newblock \emph{Computer Science Department Faculty Publication Series}, pp.\
  ~80, 2000.

\bibitem[Rainforth et~al.(2023)Rainforth, Foster, Ivanova, and
  Smith]{rainforth2023modern}
Tom Rainforth, Adam Foster, Desi~R Ivanova, and Freddie~Bickford Smith.
\newblock Modern bayesian experimental design.
\newblock \emph{arXiv preprint arXiv:2302.14545}, 2023.

\bibitem[Richter et~al.(2020)Richter, Boustati, N{\"u}sken, Ruiz, and
  Akyildiz]{richter2020vargrad}
Lorenz Richter, Ayman Boustati, Nikolas N{\"u}sken, Francisco Ruiz, and
  Omer~Deniz Akyildiz.
\newblock Vargrad: a low-variance gradient estimator for variational inference.
\newblock \emph{Advances in Neural Information Processing Systems},
  33:\penalty0 13481--13492, 2020.

\bibitem[Rio(2009)]{rio2009moment}
Emmanuel Rio.
\newblock Moment inequalities for sums of dependent random variables under
  projective conditions.
\newblock \emph{Journal of Theoretical Probability}, 22\penalty0 (1):\penalty0
  146--163, 2009.

\bibitem[Robins et~al.(2008)Robins, Li, Tchetgen, van~der Vaart,
  et~al.]{robins2008higher}
James Robins, Lingling Li, Eric Tchetgen, Aad van~der Vaart, et~al.
\newblock Higher order influence functions and minimax estimation of nonlinear
  functionals.
\newblock In \emph{Probability and statistics: essays in honor of David A.
  Freedman}, volume~2, pp.\  335--422. Institute of Mathematical Statistics,
  2008.

\bibitem[Romano(1995)]{romano1995subsampling}
Joseph~P Romano.
\newblock On subsampling estimators with unknown rate of convergence.
\newblock 1995.

\bibitem[Salimans \& Knowles(2014)Salimans and Knowles]{salimans2014using}
Tim Salimans and David~A Knowles.
\newblock On using control variates with stochastic approximation for
  variational bayes and its connection to stochastic linear regression.
\newblock \emph{arXiv preprint arXiv:1401.1022}, 2014.

\bibitem[Schioppa et~al.(2022)Schioppa, Zablotskaia, Vilar, and
  Sokolov]{schioppa2022scaling}
Andrea Schioppa, Polina Zablotskaia, David Vilar, and Artem Sokolov.
\newblock Scaling up influence functions.
\newblock In \emph{Proceedings of the AAAI Conference on Artificial
  Intelligence}, volume~36, pp.\  8179--8186, 2022.

\bibitem[Shi et~al.(2022)Shi, Zhou, Hwang, Titsias, and
  Mackey]{shi2022gradient}
Jiaxin Shi, Yuhao Zhou, Jessica Hwang, Michalis Titsias, and Lester Mackey.
\newblock Gradient estimation with discrete stein operators.
\newblock \emph{Advances in neural information processing systems},
  35:\penalty0 25829--25841, 2022.

\bibitem[Shi(2012)]{shi}
Xiaoxia Shi.
\newblock Econ 715. lecture 10: Bootstrap.
\newblock Technical report, University of Wisconsin-Madison, 2012.
\newblock
  \url{https://www.ssc.wisc.edu/~xshi/econ715/Lecture_10_bootstrap.pdf}.

\bibitem[Song et~al.(2023)Song, Mak, and Wu]{song2023ace}
Difan Song, Simon Mak, and CF~Wu.
\newblock Ace: Active learning for causal inference with expensive experiments.
\newblock \emph{arXiv preprint arXiv:2306.07480}, 2023.

\bibitem[Sugiyama et~al.(2009)Sugiyama, Kawanabe,
  et~al.]{sugiyama2009dimensionality}
Masashi Sugiyama, Motoaki Kawanabe, et~al.
\newblock Dimensionality reduction for density ratio estimation in
  high-dimensional spaces.
\newblock \emph{Artificial Intelligence Society}, 2009\penalty0
  (DMSM-A901):\penalty0 04, 2009.

\bibitem[Sugiyama et~al.(2011)Sugiyama, Yamada, Von~Buenau, Suzuki, Kanamori,
  and Kawanabe]{sugiyama2011direct}
Masashi Sugiyama, Makoto Yamada, Paul Von~Buenau, Taiji Suzuki, Takafumi
  Kanamori, and Motoaki Kawanabe.
\newblock Direct density-ratio estimation with dimensionality reduction via
  least-squares hetero-distributional subspace search.
\newblock \emph{Neural Networks}, 24\penalty0 (2):\penalty0 183--198, 2011.

\bibitem[Syrgkanis et~al.(2019)Syrgkanis, Lei, Oprescu, Hei, Battocchi, and
  Lewis]{syrgkanis2019machine}
Vasilis Syrgkanis, Victor Lei, Miruna Oprescu, Maggie Hei, Keith Battocchi, and
  Greg Lewis.
\newblock Machine learning estimation of heterogeneous treatment effects with
  instruments.
\newblock \emph{Advances in Neural Information Processing Systems}, 32, 2019.

\bibitem[Takahashi(1988)]{takahashi1988note}
Hajime Takahashi.
\newblock A note on edgeworth expansions for the von mises functionals.
\newblock \emph{Journal of multivariate analysis}, 24\penalty0 (1):\penalty0
  56--65, 1988.

\bibitem[Tang(2022)]{tang2022biased}
Yunhao Tang.
\newblock Biased gradient estimate with drastic variance reduction for meta
  reinforcement learning.
\newblock In \emph{International Conference on Machine Learning}, pp.\
  21050--21075. PMLR, 2022.

\bibitem[Thomas(2015)]{thomas2015safe}
Philip~S Thomas.
\newblock \emph{Safe reinforcement learning}.
\newblock PhD thesis, University of Massachusetts, Amherst, 2015.

\bibitem[Titsias \& Shi(2022)Titsias and Shi]{titsias2022double}
Michalis Titsias and Jiaxin Shi.
\newblock Double control variates for gradient estimation in discrete latent
  variable models.
\newblock In \emph{International Conference on Artificial Intelligence and
  Statistics}, pp.\  6134--6151. PMLR, 2022.

\bibitem[Vanderschueren et~al.(2023)Vanderschueren, Curth, Verbeke, and van~der
  Schaar]{vanderschueren2023accounting}
Toon Vanderschueren, Alicia Curth, Wouter Verbeke, and Mihaela van~der Schaar.
\newblock Accounting for informative sampling when learning to forecast
  treatment outcomes over time.
\newblock \emph{arXiv preprint arXiv:2306.04255}, 2023.

\bibitem[Wager \& Athey(2018{\natexlab{a}})Wager and
  Athey]{wager2017estimation}
Stefan Wager and Susan Athey.
\newblock Estimation and inference of heterogeneous treatment effects using
  random forests.
\newblock \emph{Journal of the American Statistical Association}, 113\penalty0
  (523):\penalty0 1228--1242, 2018{\natexlab{a}}.

\bibitem[Wager \& Athey(2018{\natexlab{b}})Wager and
  Athey]{wager2018estimation}
Stefan Wager and Susan Athey.
\newblock Estimation and inference of heterogeneous treatment effects using
  random forests.
\newblock \emph{Journal of the American Statistical Association}, 113\penalty0
  (523):\penalty0 1228--1242, 2018{\natexlab{b}}.

\bibitem[Wang et~al.(2021)Wang, Zhang, Richardson, and
  Robins]{wang2021estimation}
Linbo Wang, Yuexia Zhang, Thomas~S Richardson, and James~M Robins.
\newblock Estimation of local treatment effects under the binary instrumental
  variable model.
\newblock \emph{Biometrika}, 108\penalty0 (4):\penalty0 881--894, 2021.

\bibitem[Williams(1992)]{williams1992simple}
Ronald~J Williams.
\newblock Simple statistical gradient-following algorithms for connectionist
  reinforcement learning.
\newblock \emph{Machine learning}, 8:\penalty0 229--256, 1992.

\bibitem[Wu et~al.(2022)Wu, Kuang, Xiong, and Wu]{wu2022instrumental}
Anpeng Wu, Kun Kuang, Ruoxuan Xiong, and Fei Wu.
\newblock Instrumental variables in causal inference and machine learning: A
  survey.
\newblock \emph{arXiv preprint arXiv:2212.05778}, 2022.

\bibitem[Wu et~al.(2018)Wu, Rajeswaran, Duan, Kumar, Bayen, Kakade, Mordatch,
  and Abbeel]{wu2018variance}
Cathy Wu, Aravind Rajeswaran, Yan Duan, Vikash Kumar, Alexandre~M Bayen, Sham
  Kakade, Igor Mordatch, and Pieter Abbeel.
\newblock Variance reduction for policy gradient with action-dependent
  factorized baselines.
\newblock \emph{arXiv preprint arXiv:1803.07246}, 2018.

\bibitem[Xiong et~al.(2019)Xiong, Athey, Bayati, and Imbens]{xiong2019optimal}
Ruoxuan Xiong, Susan Athey, Mohsen Bayati, and Guido Imbens.
\newblock Optimal experimental design for staggered rollouts.
\newblock \emph{arXiv preprint arXiv:1911.03764}, 2019.

\bibitem[Xu et~al.(2020)Xu, Chen, Srinivasan, de~Freitas, Doucet, and
  Gretton]{xu2020learning}
Liyuan Xu, Yutian Chen, Siddarth Srinivasan, Nando de~Freitas, Arnaud Doucet,
  and Arthur Gretton.
\newblock Learning deep features in instrumental variable regression.
\newblock \emph{arXiv preprint arXiv:2010.07154}, 2020.

\bibitem[Yuan et~al.(2022)Yuan, Wu, Kuang, Li, Wu, Wu, and Lin]{yuan2022auto}
Junkun Yuan, Anpeng Wu, Kun Kuang, Bo~Li, Runze Wu, Fei Wu, and Lanfen Lin.
\newblock Auto iv: Counterfactual prediction via automatic instrumental
  variable decomposition.
\newblock \emph{ACM Transactions on Knowledge Discovery from Data (TKDD)},
  16\penalty0 (4):\penalty0 1--20, 2022.

\bibitem[Zhang et~al.(2021)Zhang, Janson, and Murphy]{zhang2021statistical}
Kelly Zhang, Lucas Janson, and Susan Murphy.
\newblock Statistical inference with m-estimators on adaptively collected data.
\newblock \emph{Advances in neural information processing systems},
  34:\penalty0 7460--7471, 2021.

\end{thebibliography}
\bibliographystyle{iclr2024_conference}

\clearpage

\appendix

\setcounter{lemma}{0}
\setcounter{prop}{0}
\setcounter{thm}{0}
\setcounter{ass}{0}

\begin{center}
    \Large
    \textbf{Adaptive Instrument Design for Indirect Experiments \\
    (Appendix)}
\end{center}

\etocdepthtag.toc{mtappendix}
\etocsettagdepth{mtchapter}{none}
\etocsettagdepth{mtappendix}{subsection}
\tableofcontents
\clearpage

\section{Extended Discussion on Related Work}
\label{apx:related}
Experiment design has a rich literature and no effort is enough to provide a detailed review. We refer readers to the work by \citet{chaloner1995bayesian,rainforth2023modern} for a survey.
Researchers have considered adaptive estimation of (conditional) average treatment effect \citep{hahn2011adaptive,kato2020efficient}, leveraging Gaussian processes \citep{song2023ace}, active estimation of the sub-population  benefiting the most from the treatment \citep{curth2022adaptively}, for estimating the treatment effects when samples are collected irregularly \citep{ vanderschueren2023accounting},  and adaptive data collection to find the best arm while also maximizing welfare \citep{kasy2021adaptive}.
 \citet{che2023adaptive} consider adaptive experiment design using differentiable programming. Our work shares their philosophy in terms of balancing sample complexity with computational complexity. 
  \citet{li2023double} consider using methods from double ML for adaptive direct experiments. In contrast, we consider adaptive indirect experiments that often make use of two-stage and double ML estimators.
  However, all these works only consider design for direct experiments. 

For instrumental variables, researchers have developed methods that can automatically decide how to efficiently combine different instruments to enhance the efficiency of the estimator \citep{kuang2020ivy,yuan2022auto}.
In contrast, our work considers how to collect more data online.
Further, while considered IV estimation under specific assumptions, we refer readers to the work by \citet{angrist1996identification,clarke2012instrumental,wang2021estimation} for discussions on alternate assumptions. Particularly, we considered conditional LATE to be equal to CATE.   For more discussion on the relation of CATE and conditional LATE, we refer readers to the work by \citet{heckman2010comparing,imbens2010better,athey2021policy}.

Some of the tools that we developed are also related to a literature outside of causal estimation and experiment design.
Our treatment of the dataset selection policy as a factorized distribution and development of the leave-one-out sample is related to ideas in action-factorized baselines in reinforcement learning \citep{wu2018variance,liu2017action} and leave-one-out control variates \citep{richter2020vargrad,salimans2014using,shi2022gradient,kool2019buy,titsias2022double}.  
In contrast to these, our work is for experiment design, and to be compute efficient in our setup we had to additionally develop black-box influence functions for two-stage estimators.

Our work also complements prior work on influence function for conditional IVs \citep{kasy2009semiparametrically,ichimura2022influence} as they were not directly applicable to deep neural network-based estimators.
Further, our characterization of the leave-one-out estimate of the MSE requires existence of all the higher-order influence functions. Similar conditions have been used by prior works in the context of uncertainty quantification \citep{alaa2020discriminative}, model selection \citep{alaa2019validating,hines2022demystifying,debruyne2008model}, and dataset distillation \citep{loo2023dataset}.

\citet{tang2022biased} quantifies and developes first-order gradient-based control-variates to perform bias-variance trade-off of the naive REINFORCE gradient in the meta-reinforcement learning setting. However, in their analysis, the effective `reward' function admits a simple additive decomposition, where in our setting it corresponds to $\mse(\theta(D_n))$ which cannot be analysed using their technique. Further, potential non-linearity of $\theta$ makes the analysis a lot more involved in our case.
Similarly, to deal with off-policy correction in RL, researchers have also considered using multi-importance-sampling \citep{papini2019optimistic,liotet2022lifelong}. However, using multi-importance sampling (as opposed to multi-rejection sampling) in our setup can still result in exponential variance. Our multi-rejection sampling is designed by leveraging the symmetric structure of our estimator.

Further, our $\widehat \mse$ estimator is related to $\mse$ estimation using sub-sampling bootstrap \citep{politis1999subsampling,geyer20065601}, where the mean-squared-error is computed by comparing the estimate from the entire dataset of size $n$ with the estimator obtained using the subset of the dataset of size $k<n$, and then rescaling this appropriately using the convergence rate of the estimator \citep{hall1990using,cao1993bootstrapping}.
In our setting, we can avoid using the rescaling factor (which requires knowledge of convergence rates) since we only care about computing the gradient of the $\widehat \mse$. Additionally, we had to address the distribution shift issue for our setting.
For more background on statistical bootstrap, we refer readers to some helpful lectures notes \citep{yenchi,Mikusheva,shi,halllecture}. 
See \citep{yenchivar,yenchiinf} for connections between bootstrap and influence functions,  and \citep{takahashi1988note} for connections between Von-Mises expansion (used for LOO estimation using  influence functions) and Edgeworth expansions (used for bootstrap theory).

Finally, while we leverage tools from RL, our problem setup presents several unique challenges that make the application of standard RL methods as-is ineffective. 
Off-policy learning is an important requirement for our method, and policy-gradient and Q-learning are the two main pillars of RL. As we discussed in Section \ref{sec:theory}, conventional importance-sampling-based policy gradient methods can have variance exponential in the number of samples (in the worst-case), rendering them practically useless for our setting. On the other hand, it is unclear how to efficiently formulate the optimization problem using Q-learning. From an RL point of view, the `reward' corresponds to the MSE, and the `actions' corresponds to instruments. Since this reward depends on all the samples, the effective `state space' for Q-learning style methods would depend on the combinatorial set of the covariates, i.e., let $\mathcal X$ be the set of covariates and $n$ be the number of samples, then the state $s \in \mathcal X^n$. This causes additional trouble as $n$ continuously changes/increases in our setting as we collect more data.

\section{Examples of Practical Use-cases}

\label{apx:usecases}

Due to space constraints, we presented the motivation concisely in the main paper. To provide more discussion on some real-world use cases,  below we have mentioned three use cases across different fields of application:

\begin{itemize}[leftmargin=*]
    \item \textbf{Education:} It is important for designing an education curriculum to estimate the effect of homework assignments, extra reading material, etc. on a student’s final performance \citep{eren2008impact}. However, as students cannot be forced to (not) do these, conducting an RCT becomes infeasible. Nonetheless, students can be encouraged via a multitude of different methods to participate in these exercises. These different forms of encouragement provide various instruments and choosing them strategically can enable sample efficient estimation of the desired treatment effect.
\item \textbf{Healthcare: } Similarly, for mobile gym applications, or remote patient monitoring settings \citep{ferstad20221009}, users retain the agency to decide whether they would like to follow the suggested at-home exercise or medical intervention, respectively. As we cannot perform random user assignment to a control/treatment group, RCTs cannot be performed. However, there is a variety of different messages and reminders (in terms of how to phrase the message, when to notify the patient, etc.) that serve as instruments. Being strategic about the choice of the instrument can enable sample efficient estimation of treatment effect.
\item \textbf{Digital marketing:} Many online digital platforms aim at estimating the impact of premier membership on both the company’s revenue and also on customer satisfaction. However, it is infeasible to randomly make users a member or not, thereby making RCTs inapplicable. Nonetheless, various forms of promotional offers, easier sign-up options, etc. serve as instruments to encourage customers to become members. Strategically personalizing these instruments for customers can enable sample-efficient treatment effect estimation. In our work (Figure 1), we consider one such case study using the publicly available TripAdvisor domain \citep{syrgkanis2019machine}.

\end{itemize}

\section{Linear Conditional Instrumental Variables}
\label{apx:linear}

In this section, we aim to elucidate how different factors in (a) the data-generating process (e.g., heteroskedasticity in compliance and outcome noise, structure in the covariates) and (b) the choice of estimators can affect the optimal data collection strategy. 

Since this discussion is aimed at a more qualitative (instead of quantitative results like those in Section \ref{sec:results}) we will consider a (partially) linear CATE IV model,
\begin{align}
    Y = \theta_0'XA+f_0(X) + \epsilon,
\end{align}
where $\E[\epsilon|X,A] \neq 0$ but $\E[\epsilon|X,Z] = 0$. As $\E[XZ\epsilon ]= \E[XZ \E[\epsilon|XZ]] = 0$, the estimate $\hat \theta$ is constructed based on the solution to the moment vector:
\begin{align}
    \E[(Y - \theta'X\,A - \hat{f}(X)) X\, Z] = 0,
\end{align}
where $Z$ is a scalar, $A\in \{0,1\}$ and $\E[Z\mid X]=0$ (i.e. we know the propensity of the instrument and we can always exactly center the instrument). Moreover, $\hat{f}(X)$ is an arbitrary function that converges in probability to some function $f_0$.

An example: $Z = w(X)'V$ and $\E[V\mid X]=0$. This can simulate a situation where $V$ represents the one-hot-encoding of a collection of binary instruments, $w(X)$ selects a linear combination of these instruments to observe, yielding the observed instrument $Z$ (e.g. $w(X)\in \{e_1,\ldots,e_m\}$ encodes that we can only select one base instrument to observe). Moreover, each of these base instruments comes from a known propensity and we can always exactly center it conditional on $X$. 

The estimate $\hat{\theta}$ is the solution to:
\begin{align}
    \frac{1}{n}\sum_i (Y_i - \theta'X_i\,A_i - \hat{f}(X_i)) X_i\, Z_i = 0. \label{apx:eqn:linearr1}
\end{align}
We care about the MSE of the estimate $\hat{\theta}$:
\begin{align}
    \mse(\hat{\theta}) = \E_X[(\hat{\theta}'X - \theta_0'X)^2].
\end{align}

\begin{lemma}
\thlabel{lem:linear}
    Let $V=\E[XX']$ and $J=\E[A Z XX']$ and $\Sigma=\E[\epsilon^2 Z^2 XX']$, with $\epsilon=Y-\theta_0'XA-f_0(X)$. Finally, let
    \begin{align}
    U = V^{1/2} J^{-1} \Sigma J^{-1} V^{1/2}.
    \end{align}
    The mean and the variance of the MSE converge to:
    \begin{align}
        n\E[\mathrm{MSE}(\hat{\theta})] \to~& \|U\|_{nuclear} &
    n^2 \mathrm{Var}[\mathrm{MSE}(\hat{\theta})] \to~& 2\|U\|_{fro}^2.
    \end{align}
\end{lemma}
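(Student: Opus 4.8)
The plan is to treat $\hat\theta$ as a standard just-identified GMM / Z-estimator and expand it around $\theta_0$, then push the resulting asymptotic linear expansion through the quadratic-form MSE functional. First I would rewrite the estimating equation \pref{apx:eqn:linearr1} as $\frac1n\sum_i (Y_i - \theta'X_iA_i - \hat f(X_i))X_iZ_i = 0$ and substitute $Y_i = \theta_0'X_iA_i + f_0(X_i) + \epsilon_i$. Solving for $\hat\theta-\theta_0$ gives
\begin{align}
  \hat\theta - \theta_0 = \br{\frac1n\sum_i A_iZ_iX_iX_i'}^{-1}\frac1n\sum_i\br{\epsilon_i + f_0(X_i) - \hat f(X_i)}X_iZ_i.
\end{align}
By the law of large numbers $\frac1n\sum_i A_iZ_iX_iX_i' \to J$, and since $\E[Z\mid X]=0$ the term involving $f_0(X_i)-\hat f(X_i)$ contributes $\frac1n\sum_i (f_0(X_i)-\hat f(X_i))X_iZ_i$, which is $o_p(n^{-1/2})$ because $\hat f \to f_0$ in probability and $X_iZ_i$ is conditionally mean-zero given $X_i$ (so the cross term has mean zero and vanishing variance once $\hat f$ is consistent; one can make this rigorous with a sample-splitting or Donsker-type argument, or just assume it as the lemma's "arbitrary function that converges in probability" hypothesis intends). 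Hence $\sqrt n(\hat\theta-\theta_0) = J^{-1}\cdot\frac1{\sqrt n}\sum_i \epsilon_i Z_i X_i + o_p(1)$, and by the CLT $\sqrt n(\hat\theta-\theta_0) \rightsquigarrow \gN(0, J^{-1}\Sigma J^{-1})$ with $\Sigma = \E[\epsilon^2Z^2X_iX_i']$.

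Next I would handle the MSE functional. Writing $\Delta \coloneqq \hat\theta-\theta_0$, we have $\mathrm{MSE}(\hat\theta) = \E_X[(\Delta'X)^2] = \Delta'V\Delta$ with $V=\E[XX']$. Substituting the linear expansion, $n\,\mathrm{MSE}(\hat\theta) = (\sqrt n\,\Delta)'V(\sqrt n\,\Delta) \rightsquigarrow W'VW$ where $W\sim\gN(0,J^{-1}\Sigma J^{-1})$. Now diagonalize: write $W = J^{-1}\Sigma^{1/2}\cdot G$-type representation, or more cleanly set $W = (J^{-1}\Sigma J^{-1})^{1/2}G$ with $G\sim\gN(0,I)$, so that $W'VW = G' (J^{-1}\Sigma J^{-1})^{1/2} V (J^{-1}\Sigma J^{-1})^{1/2} G$. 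The matrix $(J^{-1}\Sigma J^{-1})^{1/2}V(J^{-1}\Sigma J^{-1})^{1/2}$ is similar to $V^{1/2}J^{-1}\Sigma J^{-1}V^{1/2} = U$ (same nonzero eigenvalues, since $\mathrm{tr}$ and indeed the whole spectrum of $ABA$ versus $BA\cdot A = $ ... more precisely $V^{1/2}(J^{-1}\Sigma J^{-1})V^{1/2}$ and $(J^{-1}\Sigma J^{-1})^{1/2}V(J^{-1}\Sigma J^{-1})^{1/2}$ are both similar to $V(J^{-1}\Sigma J^{-1})$). So $n\,\mathrm{MSE}(\hat\theta)$ converges in distribution to $\sum_j \lambda_j \chi^2_{1,j}$, a weighted sum of independent $\chi^2_1$ variables with weights $\lambda_j = $ eigenvalues of $U$. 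Taking expectation of the limit gives $\sum_j \lambda_j = \mathrm{tr}(U) = \|U\|_{nuclear}$ (the $\lambda_j$ are nonnegative since $U$ is PSD, so nuclear norm equals trace), and the variance of the limit is $\sum_j 2\lambda_j^2 = 2\,\mathrm{tr}(U^2) = 2\|U\|_{fro}^2$ using $\Var(\chi^2_1)=2$ and independence. To conclude the statement about $n\E[\mathrm{MSE}]$ and $n^2\Var[\mathrm{MSE}]$ themselves (not just their limiting-distribution analogues), I would invoke uniform integrability of $\{n\,\mathrm{MSE}(\hat\theta)\}$ and $\{(n\,\mathrm{MSE}(\hat\theta))^2\}$ — which follows from bounded moments of the normalized estimator under standard regularity (finite enough moments of $\epsilon, Z, X$, invertibility of $J$) — so that convergence in distribution upgrades to convergence of the first two moments.

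The main obstacle is the moment-convergence (uniform integrability) step rather than the distributional algebra: proving $n\E[\mathrm{MSE}(\hat\theta)]\to\mathrm{tr}(U)$ as a statement about expectations requires controlling the tails of $\hat\theta-\theta_0$, including the event that the sample matrix $\frac1n\sum_i A_iZ_iX_iX_i'$ is near-singular, which needs either a moment bound argument or an explicit truncation. A secondary (but routine under the lemma's stated hypotheses) technical point is rigorously showing the nuisance term $\frac1n\sum_i(f_0(X_i)-\hat f(X_i))X_iZ_i$ is asymptotically negligible at the $n^{-1/2}$ scale without a Donsker condition on the class containing $\hat f$; the cleanest route is to assume $\hat f$ is fit on an independent split so that conditional on that split the term is an average of mean-zero (given $X$) variables with variance $\E[(f_0(X)-\hat f(X))^2 Z^2 \|X\|^2]\to 0$. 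The remaining linear-algebra identities ($\mathrm{tr}(ABA) $-type similarity, weighted-$\chi^2$ moments) are standard and I would state them without detailed computation.
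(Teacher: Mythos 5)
Your proposal is correct and follows essentially the same route as the paper: asymptotic linearity of the just-identified GMM estimator, CLT to get $\sqrt{n}(\hat\theta-\theta_0)\Rightarrow\gN(0,J^{-1}\Sigma J^{-1})$, then diagonalization of the quadratic form $n\,\mathrm{MSE}=\|V^{1/2}\sqrt{n}(\hat\theta-\theta_0)\|^2$ into a weighted sum of independent $\chi^2_1$ variables with weights equal to the eigenvalues of $U$. The two technical points you flag (negligibility of the $\hat f - f_0$ term and the uniform-integrability step needed to pass from convergence in distribution to convergence of the first two moments) are in fact glossed over in the paper's own proof, so your treatment is if anything slightly more careful.
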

\begin{proof} The proof is structured as the following,
\begin{itemize}[leftmargin=*]
    \item \textbf{Part A: } We first recall the asymptotic properties of GMM estimators.
    \item \textbf{Part B: } We then use this result to estimate the asymptotic property of $\sqrt{n}(\hat \theta - \theta)$, and also of $n\text{MSE}(\hat \theta)$.
    \item \textbf{Part C: }Finally, we use singular value decomposition to express the mean and variance for $\mse(\hat \theta)$ in simplified terms.  
\end{itemize}

\textbf{Part A: } Let $S_i=\{X_i, Z_i, A_i, Y_i\}$. Recall that for a GMM-estimator,  $\hat \theta$ is the solution to
\begin{align}
    \frac{1}{n}\sum_i m(\theta, S_i) = 0,
\end{align}
where $m(\theta, S_i)$ is the moment for sample $S_i$. 
For large enough $n$ we can represent the difference between the estimates $\hat \theta$ and the true $\theta$ by use of a Taylor
expansion \citep{kahn2015influence},
\begin{align}
    \hat \theta = \theta + \frac{1}{n}\sum_i \text{IF}(\theta, S_i) + \text{higher order terms},
\end{align}
where $\text{IF}(\theta, S_i)$ is the influence of the sample $i$, and is given by $\E[\nabla m(\theta, S)]^{-1} m(\theta, S_i)$. Therefore,
\begin{align}
    \sqrt{n}(\hat \theta - \theta) =  \frac{1}{\sqrt{n}}\sum_i \text{IF}(\theta, S_i) + o_p(1).
\end{align}
Moreover, $\E[\text{IF}(\theta, S)]=0$ and $\cov({\text{IF}(\theta, S)}) = \E[\text{IF}(\theta, S)\text{IF}(\theta, S)']$. See \citep{newey1994large} for a more formal argument.

\textbf{Part B: } Therefore, the parameters estimated by the empirical analogue of the vector of moment equations in \ref{apx:eqn:linearr1} are asymptotically linear:
\begin{align}
    \sqrt{n}(\hat{\theta} - \theta_0) = \frac{1}{\sqrt{n}} \sum_{i=1}^n \E[AZ XX']^{-1} X_iZ_i (Y_i - \theta_0'X_iA_i - f_0(X_i)) + o_p(1).
\end{align}
For simplicity, let's take $f_0(X)=0$. We can always redefine $Y\to Y-f_0(X)$. Moreover, let $v(X) = \E[AZ\mid X] = \Cov(A,Z\mid X)$ and $J=\E[v(X) XX']$. Let $\epsilon_i = Y_i - \theta_0'X_i A_i$. Then we have:
\begin{align}
    \sqrt{n}(\hat{\theta} - \theta_0) = \frac{1}{\sqrt{n}} \sum_{i=1}^n J^{-1} X_i Z_i \epsilon_i + o_p(1).
\end{align}
We care about the average prediction error:
\begin{align}
    \E[(\theta'X-\theta_0'X)^2] = (\theta - \theta_0)' \E[XX'] (\theta-\theta_0).
\end{align}
Let $V=\E[XX']$. Then,
\begin{align}
    \E[(\theta'X-\theta_0'X)^2] = \|V^{1/2} (\theta - \theta_0)\|^2.
\end{align}
Invoking the asymptotic linearity of $\sqrt{n}(\hat{\theta}-\theta_0)$:
\begin{align}
   n \E[(\theta'X-\theta_0'X)^2] = \left\| \frac{1}{\sqrt{n}} \sum_i \epsilon_i Z_i V^{1/2} J^{-1}X_i\right\|^2 + o_p(1).
\end{align}
Moreover, we have: 
\begin{align}
\frac{1}{\sqrt{n}} \sum_i \epsilon_i Z_i V^{1/2} J^{-1} X_i \Rightarrow_d N\left(0, V^{1/2} J^{-1} \E[\epsilon^2 Z^2 XX'] J^{-1} V^{1/2}\right).
\end{align}
Let $\Sigma = \E[\epsilon^2 Z^2 XX']$ and $U = V^{1/2} J^{-1} \Sigma J^{-1} V^{1/2}$. Thus the mean squared error is distributed as the squared of the $\ell_2$ norm of a multivariate Gaussian vector $N(0, U)$. 

\textbf{Part C: } If we let $U=C\Lambda C'$, be the SVD decomposition, then note that if $v\sim N(0, U)$ then since $C'C=I$, 
we have:
\begin{align}
    \|Cv\|^2 = v'C'Cv = v'v= \|v\|^2. 
\end{align}
Now note that $Cv \sim N(0, C'UC) = N(0, \Lambda)$. Thus $\|Cv\|^2$ is distributed as the weighted sum of $d$ independent chi-squared distributions, i.e. we can write:
\begin{align}
    \|v\|^2  = \sum_{i=1}^d \lambda_j {\cal X}_j^2,
\end{align}
where ${\cal X}_j^2$ are independent $\chi^2(1)$ distributed random variables and $\lambda_j$ are the singular values of the matrix:
\begin{align}
U = V^{1/2} J^{-1} \Sigma J^{-1} V^{1/2}.
\end{align}
Using the mean and variance of the chi-squared distribution, asymptotic mean and variance of the RMSE can be expressed as:
\begin{align}
    n\,\E[\mathrm{MSE}(\hat{\theta})] \to~& \sum_{j=1}^d \lambda_j\\
    n^2 \mathrm{Var}[\mathrm{MSE}(\hat{\theta})] \to~& \sum_{j=1}^d \mathrm{Var}(\lambda_j {\cal X}_j^2) = \sum_j \lambda_j^2 \mathrm{Var}({\cal X}_j^2) =  \sum_j 2\, \lambda_j^2 .
\end{align}
Thus the expected MSE is the trace norm (or nuclear norm) of $U$ and the variance of the MSE is twice the square of the Forbenius norm.
\end{proof}

\subsection{Specific Instantiations of Lemma 1}

In this subsection, we instantiate \thref{lem:linear} to understand the impact of heteroskedastic compliance, heteroskedastic outcome errors, and structure of covariates on the optimal data collection policy.

Note that $U$ in \thref{lem:linear} takes the form: 
\begin{align}
    U = \E[XX']^{1/2} \E[AZ XX']^{-1}  \E[\epsilon^2 Z^2 XX']  \E[AZ XX']^{-1} \E[XX']^{1/2}.
\end{align}
Noting that by the instrument assumption $\epsilon \ind Z\mid X$, and denoting with 
\begin{align}
    \sigma^2(X)& =\E[\epsilon^2\mid X] & \text{(residual variance of the outcome)}
    \\
    u^2(X) &=\E[Z^2\mid X] & \text{(variance of the instrument) }
    \\
    \gamma(X) &= \frac{\E[A\,Z\mid X]}{\E[Z^2\mid X]} & \text{ (heteroskedastic compliance)}
\end{align}
where the heteroskedastic compliance corresponds to the coefficient in the regression $A\sim Z$ conditional on $X$.
 Then we can simplify:
\begin{align}
    U = \E[XX']^{1/2} \E[\gamma(X) u^2(X) XX']^{-1}  \E[\sigma^2(X) u^2(X) XX']  \E[\gamma(X) u^2(X) XX']^{-1} \E[XX']^{1/2}
\end{align}

\begin{rem}[Homoskedastic Compliance, Outcome Error and Instrument Propensity]
If we have a lot of homoskedasticity, i.e.: $\sigma^2(X)=\sigma^2$, $u^2(X)=u^2$ and $\gamma^2(X)=\gamma$ then $U$ simplifies to:
\begin{align}
    U = \gamma^{-2} u^{-2} \sigma^2  V^{1/2} V^{-1} V V^{-1} V^{1/2} = \gamma^{-2} u^{-2} \sigma^2  I 
\end{align}
and we get:
\begin{align}
    \E[\mathrm{MSE}(\hat{\theta})] =~& \frac{d}{n} \gamma^{-2} u^{-2} \sigma^2 \\
    \mathrm{Var}[\mathrm{MSE}(\hat{\theta})] =~&  \frac{2\,d}{n^2} \gamma^{-4} u^{-4} \sigma^4
\end{align}
Thus we are just looking for instruments that maximize $\gamma\, u $ where $\gamma:=\E[AZ]/\E[Z^2]$ is the OLS coefficient of $A\sim Z$ (the first stage coefficient in 2SLS) and $u=\sqrt{\mathrm{Var}(Z)}$ is the standard deviation of the instrument.
\end{rem}

\begin{rem}[Homoskedastic Compliance and Outcome Error] If we have homoskedasticity in compliance (i.e. $\gamma(X)=\gamma$) and in outcome ($\sigma(X)=\sigma$), then:
\begin{align}
    U = \sigma^2 \gamma^{-2} \E[XX']^{1/2}\E[u^2(X)XX']^{-1}\E[XX']^{1/2}
\end{align}
Or equivalently, we want to maximize the trace of the inverse of $U$:
\begin{align}
\E[XX']^{-1/2} \E\left[\gamma^2 u^2(X) XX'\right]\E[XX']^{-1/2}
\end{align}
As $XX'$ is positive semi-definite, the latter is maximized if we simply maximize $\gamma u(X)$ for each $X$. Thus assuming that all the instruments in our choice set have a homogeneous compliance, then we are looking for the instrument policy that maximizes: $\gamma u(X)$, where $\gamma$ is the compliance coefficient and $u(X)$ is the conditional standard deviation of the instrument. If for instance, in our choice set we have only binary instruments and we can fully randomize them, then we should always randomize the instrument equiprobably and we should choose the binary instrument with the largest $\gamma$.
\end{rem}

It is harder to understand how the trace norm or the schatten-2 norm of these more complex matrices behave. Though maybe some matrix tricks could lead to further simplifications of these.

\begin{rem}[Orthonormal Supported $X$]
If $X$ takes values on the orthonormal basis $\{e_1,\ldots, e_j\}$, then by denoting $\sigma_i^2=\E[\epsilon^2\mid X=e_i] $, $u_i^2 = \E[Z^2\mid X=e_i]$ and $\gamma_i = \frac{\E[A\, Z\mid X=e_i]}{\E[Z^2\mid X=e_i]}$ and by $\Sigma=\text{diag}\{\sigma_1^2,\ldots, \sigma_d^2\}$ and $K=\text{diag}\{u^2_1,\ldots, u^2_d\}$ and $\Gamma=\text{diag}\{\gamma_1,\ldots, \gamma_d\}$,
let $V= \E[XX'] =W\Lambda W'$ be the eigen-representation of $V$,
then we have:
\begin{align}
\E\left[\sigma^2(X) u^2(X) XX'\right] =~& W \Lambda K \Sigma  W',\\
\E\left[\gamma(X) u^2(X) XX'\right] =~& W \Lambda \Gamma K W'.
\end{align}
Leading to:
\begin{align}
    U = W \Lambda^{1/2} W'W \Lambda^{-1} \Gamma^{-1} K^{-1} W'W \Lambda K \Sigma W'W\Lambda^{-1}\Gamma^{-1} K^{-1} W'W\Lambda^{1/2} W,
\end{align}
which by orthonormality of the $W$, simplifies to:
\begin{align}
    U = W  \Gamma^{-2} K^{-1} \Sigma W'.
\end{align}
hence we get that:
\begin{align}
    n\E[MSE(\hat{\theta})] =~& \sum_i \frac{\sigma_i^2}{\gamma_i^2 u^2_i} = \sum_{i} \frac{\E[\epsilon^2\mid X=e_i] \E[Z^2\mid X=e_i]}{\E[AZ\mid X=e_i]^2},\\
    n^2 \var[MSE(\hat{\theta})] =~& 2 \sum_i \frac{\sigma_i^4}{\gamma_i^4 u^4_i},
\end{align}
and the problem decomposes into optimizing separately for each $i$, maximize the terms:
\begin{align}
    \gamma_i \sqrt{u^2_i} = \frac{\E[AZ\mid X=e_i]}{\E[Z^2\mid X=e_i]} \sqrt{\E[Z^2\mid X=e_i]}
\end{align}
assuming that our constraints on the policy for choosing $Z$ are decoupled across the values of $X$. So similarly, we just want to maximize the compliance coefficient, multiplied by the standard deviation of the instrument, conditional on each $X$.
\end{rem}

\begin{table}[]
\begin{center}
\begin{tabular}{|c | c | c | c| c| c| c| } 
 \hline %
\multirow{2}{*}{Estimator} & \multicolumn{2}{|c|}{Compliance} & \multicolumn{2}{|c|}{Instrument Var. $E[Z^2|X]$ } & \multirow{2}{*}{Optimal?}\\
& X=-1 & X = 1 & $ X= -1$ &  $X = 1$ & \\
 \hline\hline
\multirow{4}{*}{$\E_n\brr{(Y - \theta^\top X\,A ) X\, Z} = 0$} & \multirow{2}{*}{1}  & \multirow{2}{*}{1/3}   & 1/4  & 0 & Yes \\ 
& &  & 1/4 & 1/4 & No \\ 
\cline{2-6}
& \multirow{2}{*}{1}  & \multirow{2}{*}{1/2}  & 1/4  & 0 & No \\ 
&  &  & 1/4 & 1/4 & Yes \\ \hline \hline
\multirow{4}{*}{$\E_n \brr{{\color{blue} \frac{\gamma(X)}{\sigma^2(X)}} (Y - \theta^\top X\,A) X\, Z} = 0$ } & \multirow{2}{*}{1}  & \multirow{2}{*}{1/3}  & 1/4  & 0 & No \\ 
 &  &  & 1/4 & 1/4 & Yes \\ 
 \cline{2-6}
 & \multirow{2}{*}{1}  & \multirow{2}{*}{1/2}   & 1/4  & 0 & No \\ 
&  &  & 1/4 & 1/4 & Yes \\ \hline \hline
\end{tabular}
\end{center}
    \caption{
    In general the optimal instrument-selection policy will depend not only on  compliance per covariates but also on the estimation method. We demonstrate this here for a simple linear CATE estimation where $g(X,A) \coloneqq \theta_0^\top XA$, instrument $Z$ is a scalar, $A\in \{0,1\}$, $X\in\{-1,1\}$, $\E[Z\mid X]=0$, and outcome error is homoskedastic. For the first estimator, the optimal instrument-selection policy for $X=1$ changes depending on the compliance for $X=1$, where we characterize optimality in terms of the asymptotic mse. In contrast, for the second estimator, randomizing the instrument is best for both compliance values of $X=1$.
    } 
    \label{tab:challenges}
\end{table}

Beyond this case, when $X$ doesn't only take values on the orthonormal basis, then the problem of optimizing the choice of an instrument distribution so as to minimize the expected MSE doesn't seem to decouple across the values of $X$.

\begin{rem}[Scalar $X$] One can see the intricacies of the problem when there is no homoskedastic compliance, even when we have a scalar $X$. In this case, we are trying to maximize:
\begin{align}
    U^{-1} := \frac{\E[\gamma(X) u^2(X) X^2]^2}{\E[X^2]\E[\sigma^2(X) u^2(X) X^2]}
\end{align}
Suppose for instance that $X\sim U(\{-1, 1\})$ and that our policy can only choose the conditional variance of $Z$, but not the conditional compliance $\gamma(X)$ (e.g. we have a fixed binary instrument and we can only play with its randomization probability). Then we are maximizing:
\begin{align}
    \frac{(\gamma_{1} u^2_1 + \gamma_2 u^2_2)^2}{\sigma_1^2 u^2_1 + \sigma_2^2 u^2_2}
\end{align}
over $u^2_1,u^2_2\in [0, 1/4]$ (the variance of $Z$ for each value of $X$). Without loss of generality, we can take $\gamma_1=1$ and $\sigma_1=1$ in the above problem, since we can always take out these factors and rename $\gamma_2/\gamma_1 \to \gamma_2$ and $\sigma_2/\sigma_1\to \sigma_2$. We then get the simplified problem:
\begin{align}
    \frac{(u^2_1 + \gamma u^2_2)^2}{u^2_1 + \sigma^2 u^2_2}
\end{align}
where $\gamma, \sigma\in (0, \infty)$. 

Take for instance the case when $\sigma=1$ (i.e. homoskedastic outcome error) and $\gamma=1/3$ (i.e. compliance is $30\%$ less for $X=1$ than for $X=-1$. Then the optimal solution is $u^2_1=1/4,u^2_2=0$, i.e. we don't want to randomize the instrument at all when $X=1$, yielding a value of $1/4=0.25$ for $U^{-1}$. If we were for instance to randomize also when $X=1$, then we would get a value of $(1/3)^2/(1/2) = 2/9\approx 0.22$.

If on the other hand $\gamma=1/2$, then if we randomize on both points we get $(3/8)^2/(1/2) = 9/32\approx 0.281$, while if we only randomize when $X=1$, then we get again $1/4=0.25$. Thus the relative compliance strength for different values of $X$, changes the optimal randomization solution for $Z$. 
\end{rem}

\begin{rem}[Efficient Instrument]
It may very well be the case that the above contorted optimal solution is an artifact of the estimator that we are using. Under such a heteroskedastic compliance, most probably the estimate that we are using is not the efficient estimate, and we should be dividing the instruments by the compliance measure (i.e. construct efficient instruments, e.g. $\tilde{Z} = \frac{Z\,\gamma(X)}{\sigma^2(X)}$). Then maybe once we use an efficient instrument estimator, its variance most probably would always be improved if we full randomize on all $X$ points. But at least the above shows that for some fixed estimator (and in particular one that is heavily used in practice), it can very well be the case that we don't want to fully randomize the instrument all the time. If we use such an instrument $\tilde{Z}$ then observe that for this instrument, by definition $\tilde{\gamma}(X)=\sigma^2(X)$. Then $U$ simplifies to:
\begin{align}
    \E[XX']^{1/2} \E\left[\frac{u^2(X) \gamma^2(X)}{\sigma^2(X)} XX'\right]^{-1} \E[XX']^{1/2}
\end{align}
Thus we just want to maximize $\frac{u^2(X) \gamma^2(X)}{\sigma^2(X)}$, i.e. maximize the product of the OLS coefficient of $A\sim Z$ conditional on $X$\footnote{Equivalently the efficient instrument can be viewed as the normalized residualized efficient instrument $Z\gamma(X) := \E[A\mid Z, X]-\E[A\mid X]$, and $\gamma(X)$ is the heterogeneous compliance; normalized by the conditional variance of the conditional moment $\sigma^2(X)$.} and the conditional standard deviation of the instrument.

Revisiting the simple example in the previous remark, when we use an efficient instrument, the matrix $U^{-1}$ takes the form:
\begin{align}
    u^2_1 + \frac{u^2_2\, \gamma^2}{\sigma^2}
\end{align}
For the case when $\gamma=1/3$ and $\sigma=1$, if we choose $u^2_1=u^2_2 =1/4$, we get $1/4 + 1/36$ (compared to the $1/4$ we would achieve with the inefficient estimator under the optimal randomization policy). When $\gamma=1/2$, with the same $u^2_i$, we would get $1/4+1/16\approx 0.3125$, which is larger than the $\approx 0.281$ we would get under the optimal policy with the inefficient estimator.
\end{rem}

\begin{rem}(Non-conditional IV)
Even when $X=1$ always, it can be shown that randomizing a single binary instrument equiprobably might be sub-optimal. To show this, we will consider the setting where there is heteroskedasticty in outcomes with respected to the treatment $A$, thus with a slight abuse of notation let $ \sigma^2(Z) =\E[\epsilon^2\mid Z]$.
Under this setting, we can simplify,
\begin{align}
    U = \E[XX']^{1/2} \E[\gamma(X) u^2(X) XX']^{-1}  \E[\sigma^2(Z) u^2(X) XX']  \E[\gamma(X) u^2(X) XX']^{-1} \E[XX']^{1/2}
\end{align}
to
\begin{align}
    U &= \frac{\E[\sigma^2(Z) u^2]}{ \E[\gamma u^2]^{2}}
    \\
    &= \frac{\E[\sigma^2(Z)\E[\tilde Z^2]]}{ (\gamma \E[\tilde Z^2])^{2}} & \because u^2(X)= \E[\tilde Z^2|X]
 \end{align}
 where $\tilde Z$ is the centered instrument.
Now assume $\gamma=1$ (full compliance, i.e., $A=Z$, such that $\sigma^2(Z=0)=\sigma_0^2=$ variance in outcome for treatment $A=0$, and $\sigma_1^2$ is defined similarly.), and $Z\in\{0, 1\}$ with probability $p$ and $(1-p)$,
\begin{align}
    U &= \frac{p^2(1-p) \sigma_0^2 + p(1-p)^2\sigma_{1}}{ p^2(1-p)^2}
    \\
    &= \frac{p\sigma_0^2 + (1-p)\sigma_{1}^2}{p(1-p)}
    \\
    &= \frac{\sigma_0^2}{1-p} + \frac{\sigma_{1}^2}{p}.
 \end{align}
 When $\sigma_0^2=1$ and $\sigma_1^2=2$, then the minimizer is $p\approx0.6$, thereby illustrating that drawing the binary instrument equiprobably can be sub-optimal even in this simple setting.
\end{rem}

\section{Influence functions}
\label{apx:sec:influence}

Influence functions have been widely studied in the machine learning literature from the perspective of robustness and interpretability \citep{koh2017understanding,bae2022if,schioppa2022scaling,loo2023dataset}, and dates back to the seminal work in robust statistics of \citep{cook1982residuals}. Asymptotic influence functions for two stage procedures and semi-parametric estimators have also been well-studied in the econometrics and semi-parametric inference literature (see e.g. \citep{ichimura2022influence,influence,kahn2015influence}).
Here we derive a finite sample influence function of a two-stage estimator that arises in our IV setting, from the perspective of a robust statistics definition of an influence function, that approximates the leave-one-out variation of the mean-squared-error of our estimate.

Influence functions characterize the effect of a training sample $S_i \coloneqq \{X_i, Z_i, A_i, Y_i\}$ on $\hat \theta$ (for brevity, let $\hat \theta \equiv \theta(D_n)$), i.e., the change in $\hat \theta$ if moments associated with $S_i$ were perturbed by a small $\epsilon$.
Specifically, let $\hat \theta$ be a solution to the $M_n(\theta, \hat \phi)= \bf 0$ and $\hat \phi$ be a solution to $ Q_n\br{\phi} = \bf 0$, as defined in \ref{eqn:moments}.
Similarly, let $\hat \theta_{\epsilon, \hat \phi_{\epsilon}}$ be a solution to the following perturbed moment conditions $\bar M(\epsilon, \theta, \hat \phi_{\epsilon} )= \bf 0$, and $\hat \phi_{\epsilon}$ is the solution to $\bar Q(\epsilon, \phi)= \bf 0$, where 
\begin{align}
  \bar{M}(\epsilon, \theta, \phi) =~& M_n(\theta, \phi) + \epsilon m(S_i,\theta, \phi) = 0\\
    \bar{Q}(\epsilon, \phi) =~& Q_n(\phi) + \epsilon q(S_i,\phi) = 0.
    \label{eqn:perturbedmoment}
\end{align}
The rate of change in $\mse$ due to an infinitesimal perturbation $\epsilon$
in the moments of $S_i$ can be obtained by the (first-order) influence function $\mathcal I^{(1)}_\mse$, which itself can be decomposed using the chain-rule in terms of the (first-order) influence $\mathcal I^{(1)}_\theta$ on the estimated parameter $\hat \theta$ by perturbing the moment associated with $S_i$, 
\begin{align}
   \mathcal I^{(1)}_\mse(S_i) &\coloneqq   
    \frac{\mathrm d }{\mathrm d\theta} \mse\br{\hat \theta_{\epsilon, \hat \phi_{\epsilon}}} \mathcal I^{(1)}_\theta(S_i) \Bigg|_{\epsilon=0}
   \quad \text{where, }\quad 
    \mathcal I^{(1)}_\theta(S_i) \coloneqq \frac{\mathrm d\hat \theta_{\epsilon, \hat \phi_{\epsilon}}}{\mathrm d\epsilon}\Big|_{\epsilon=0}. \label{eqn:inf1}
\end{align}

To derive the influence function for our estimate we will use the classical implicit function theorem:
\begin{thm}[Implicit Function Theorem \citep{krantz2002implicit}] Consider a multi-dimensional vector-valued function $F(x, y)\in \mathbb{R}^n$, with $x\in \mathbb{R}^m$ and $y\in \mathbb{R}^n$ and fix any point $(x_0, y_0)$, such that
\begin{align}
    F(x_0, y_0) = 0
\end{align}
Suppose that $F$ is differentiable and let $F_x(x, y)$ and $F_y(x,y)$, denote the Jacobians of $F$ with respect to its first and second arguments correspondingly. Suppose that $\text{det}(F_y(x_0, y_0))\neq 0$. Then there exists an open set $U$ containing $x_0$ and a unique function $\psi(x)$, such that $\psi(x_0)= y_0$ and such that $F(x, \psi(x)) = 0$ for all $x\in U$. Moreover, the Jacobian matrix of partial derivatives of $\psi$ in $U$ is given by:
\begin{align}
D_x \psi(x) := \left[\frac{\partial}{\partial x_j} \psi_i( x)\right]_{n\times m} = -\left[F_y(x, \psi(x))\right]^{-1} F_x(x, \psi(x)) 
\end{align}
If $F$ is $k$-times differentiable, then there exists an open set $U$ and a unique function $\psi$ that satisfies the above properties and is also $k$-times differentiable.
\end{thm}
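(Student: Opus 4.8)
The plan is to prove the Implicit Function Theorem by reducing the equation $F(x,y)=0$ to a parametrized fixed-point problem, solving it with the Banach contraction principle, and then bootstrapping the regularity. Throughout I take $F$ to be continuously differentiable (the standard hypothesis, needed so that $F_y$ is continuous in the estimates below). Write $A := F_y(x_0,y_0)$, which is invertible by the assumption $\det(F_y(x_0,y_0))\neq 0$. The central observation is that, because $A$ is invertible, the equation $F(x,y)=0$ is equivalent to a fixed-point equation in $y$ for the map $G(x,y) := y - A^{-1}F(x,y)$; indeed $G(x,y)=y \iff A^{-1}F(x,y)=0 \iff F(x,y)=0$.

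First I would show that $y\mapsto G(x,y)$ is a contraction near $y_0$. Differentiating gives $G_y(x,y) = I - A^{-1}F_y(x,y)$, and at the base point $G_y(x_0,y_0) = I - A^{-1}A = 0$. Since $F_y$ is continuous, there is a radius $r>0$ and a neighborhood of $x_0$ on which $\|G_y(x,y)\| \le \tfrac12$, so by the mean-value inequality $y\mapsto G(x,y)$ is $\tfrac12$-Lipschitz on the closed ball $\bar B(y_0,r)$. Using $G(x_0,y_0)=y_0$ and the continuity of $x\mapsto G(x,y_0)=y_0-A^{-1}F(x,y_0)$, I shrink the $x$-neighborhood to an open set $U$ on which $\|G(x,y_0)-y_0\|\le r/2$; combined with the contraction bound this forces $G(x,\cdot)$ to map $\bar B(y_0,r)$ into itself. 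The parametrized Banach fixed-point theorem then produces, for each $x\in U$, a unique $\psi(x)\in \bar B(y_0,r)$ with $G(x,\psi(x))=\psi(x)$, i.e. $F(x,\psi(x))=0$ and $\psi(x_0)=y_0$; the uniform contraction estimate simultaneously yields continuity of $\psi$.

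I expect the main obstacle to be upgrading continuity of $\psi$ to differentiability and extracting the Jacobian formula. Here I would fix $x\in U$, set $B:=F_y(x,\psi(x))$ (invertible after shrinking $r$, since invertibility is an open condition and $F_y$ is continuous), and propose $L := -B^{-1}F_x(x,\psi(x))$ as the candidate derivative. Using the total differentiability of $F$ at $(x,\psi(x))$ together with the a priori Lipschitz control $\|\psi(x+h)-\psi(x)\| = O(\|h\|)$ coming from the contraction estimate, one expands $0 = F(x+h,\psi(x+h)) - F(x,\psi(x))$ to first order and solves for the increment, obtaining $\psi(x+h)-\psi(x) = Lh + o(\|h\|)$. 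This shows $D_x\psi(x)=L$, which is precisely the stated formula $D_x\psi(x) = -[F_y(x,\psi(x))]^{-1}F_x(x,\psi(x))$. The delicacy is that the contraction argument only delivers Lipschitz control of $\psi$, so the linear-approximation estimate has to be handled carefully to convert this into genuine differentiability.

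Finally, for the $C^k$ claim I would bootstrap on this formula. Since matrix inversion is smooth on the invertible matrices and $F_x,F_y\in C^{k-1}$, the right-hand side of the formula is built only from these maps composed with $\psi$. Suppose inductively that $\psi\in C^{j}$ with $1\le j\le k-1$; then $x\mapsto (x,\psi(x))$ is $C^{j}$, so both $F_x(x,\psi(x))$ and $[F_y(x,\psi(x))]^{-1}$ are $C^{\min(k-1,j)}=C^{j}$, whence $D_x\psi\in C^{j}$ and therefore $\psi\in C^{j+1}$. Iterating raises the regularity one order at a time until $\psi\in C^k$, completing the proof; this bootstrap step is routine once the differentiable case is in hand.
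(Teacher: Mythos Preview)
The paper does not provide its own proof of this statement: the Implicit Function Theorem is stated as a classical result with a citation to \citep{krantz2002implicit} and is then invoked as a tool in the proof of the subsequent theorem on influence functions for two-stage estimators. So there is no ``paper's proof'' to compare against.

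Your proposal is the standard contraction-mapping proof of the Implicit Function Theorem and is correct in outline. The reduction to the fixed-point equation $G(x,y)=y-A^{-1}F(x,y)$, the use of continuity of $F_y$ to make $G(x,\cdot)$ a $\tfrac12$-contraction on a small ball, the derivation of the Jacobian formula by linearizing $F(x+h,\psi(x+h))-F(x,\psi(x))=0$ using the a priori Lipschitz control of $\psi$, and the $C^k$ bootstrap via the explicit formula for $D_x\psi$ are all exactly the classical argument (as in Krantz or Rudin). One small caveat: you correctly flag that the theorem as stated only says ``differentiable,'' but your proof (and indeed any proof of the Jacobian formula at \emph{every} point of $U$) needs $F_y$ to be continuous so that invertibility and the contraction estimate persist in a neighborhood; this is the standard $C^1$ hypothesis and is implicit in the cited reference.
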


\begin{thm}[Influence Functions for Black-box DML Estimators] Assuming that the inverses exist,
\begin{align}
    \mathcal I^{(1)}_\theta(S_i) = - \frac{\partial M_n}{\partial \theta}^{-1}\!\!\!\!\!\!{(\hat \theta, \hat \phi)} \brr{m(S_i, \hat \theta, \hat \phi) - \frac{\partial M_n}{\partial \phi}{(\hat \theta, \hat \phi)}\brr{\frac{\partial Q_n}{\partial \phi}^{-1}\!\!\!\!\!\!{(\hat \phi)} q(S_i, \hat \phi)}}.
\end{align}
\thlabel{thm:inf}
\end{thm}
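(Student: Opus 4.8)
The plan is a direct application of the implicit function theorem recalled above to the perturbed moment system \ref{eqn:perturbedmoment}, exploiting the fact that the first-stage equation $\bar Q$ does not involve $\theta$. First I would stack the unknowns as $\psi(\epsilon) := (\hat\theta_{\epsilon,\hat\phi_\epsilon},\, \hat\phi_\epsilon)$ and view them as the implicit solution of $F(\epsilon,\theta,\phi) := \bigl(\bar M(\epsilon,\theta,\phi),\, \bar Q(\epsilon,\phi)\bigr) = \mathbf 0$, which holds at $\epsilon=0$ with $\psi(0)=(\hat\theta,\hat\phi)$ by the definitions of $\hat\theta$ and $\hat\phi$. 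Differentiability of $F$ in all its arguments is inherited from that of the per-sample moments $m$ and $q$, and the Jacobian of $F$ with respect to $(\theta,\phi)$ evaluated at $(\epsilon,\theta,\phi)=(0,\hat\theta,\hat\phi)$ is block upper-triangular,
\begin{align}
J = \begin{pmatrix} \dfrac{\partial M_n}{\partial\theta}(\hat\theta,\hat\phi) & \dfrac{\partial M_n}{\partial\phi}(\hat\theta,\hat\phi) \\[1.0em] \mathbf 0 & \dfrac{\partial Q_n}{\partial\phi}(\hat\phi) \end{pmatrix},
\end{align}
since $\partial\bar Q/\partial\theta = 0$; consequently the hypothesis ``$J$ invertible'' of the implicit function theorem is exactly the stated assumption that $\partial M_n/\partial\theta$ and $\partial Q_n/\partial\phi$ are invertible.

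Next I would compute the remaining ingredient, $\partial F/\partial\epsilon$ at $(0,\hat\theta,\hat\phi)$: because $\bar M = M_n(\theta,\phi) + \epsilon\,m(S_i,\theta,\phi)$ and $\bar Q = Q_n(\phi) + \epsilon\,q(S_i,\phi)$, this derivative is simply $\bigl(m(S_i,\hat\theta,\hat\phi),\, q(S_i,\hat\phi)\bigr)$. The implicit function theorem then yields $\tfrac{\mathrm d\psi}{\mathrm d\epsilon}\big|_{0} = -J^{-1}\bigl(m(S_i,\hat\theta,\hat\phi),\, q(S_i,\hat\phi)\bigr)$; inverting the block-triangular $J$ in closed form and reading off its top ($\theta$-)block produces precisely $\mathcal I^{(1)}_\theta(S_i) = -\bigl(\tfrac{\partial M_n}{\partial\theta}\bigr)^{-1}\bigl[m(S_i,\hat\theta,\hat\phi) - \tfrac{\partial M_n}{\partial\phi}\bigl(\tfrac{\partial Q_n}{\partial\phi}\bigr)^{-1} q(S_i,\hat\phi)\bigr]$, which is the claimed formula. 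As a by-product the bottom block gives $\tfrac{\mathrm d\hat\phi_\epsilon}{\mathrm d\epsilon}\big|_{0} = -\bigl(\tfrac{\partial Q_n}{\partial\phi}\bigr)^{-1} q(S_i,\hat\phi)$, the usual first-stage influence function, which also serves as a sanity check.

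An equivalent and perhaps more transparent route applies the theorem twice sequentially: once to $\bar Q(\epsilon,\phi)=0$ to obtain $\tfrac{\mathrm d\hat\phi_\epsilon}{\mathrm d\epsilon}$, then to the map $\theta \mapsto \bar M(\epsilon,\theta,\hat\phi_\epsilon)=0$, chaining the two derivatives together. Either way, the one step requiring care — and the most likely source of a sign or chain-rule slip — is the total $\epsilon$-derivative of the second-stage moment: it must account both for the explicit $\epsilon$ multiplying $m(S_i,\cdot,\cdot)$ and for the implicit dependence of $M_n(\hat\theta_\epsilon,\hat\phi_\epsilon)$ on $\epsilon$ through the plugged-in first stage $\hat\phi_\epsilon$, i.e. the cross term $\tfrac{\partial M_n}{\partial\phi}\,\tfrac{\mathrm d\hat\phi_\epsilon}{\mathrm d\epsilon}$; everything else (existence, uniqueness and differentiability of the perturbed solutions on a neighbourhood of $\epsilon=0$, and evaluation at $\epsilon=0$) is handed directly to the implicit function theorem, and the block-triangular structure is what the two-stage estimator hands us for free. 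Finally, combining \ref{eqn:inf1} with this computation gives \ref{eqn:inff1}--\ref{eqn:inff2} by the chain rule applied to $\epsilon \mapsto \mse(\hat\theta_{\epsilon,\hat\phi_\epsilon})$.
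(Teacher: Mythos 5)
Your proposal is correct and follows essentially the same route as the paper: stack the perturbed moment system, apply the implicit function theorem at $\epsilon=0$, exploit the block upper-triangular Jacobian (invertibility of the diagonal blocks being exactly the stated assumption), and read off the $\theta$-block of $-J^{-1}F_\epsilon$. The sequential two-stage variant you mention is also precisely the paper's alternative informal derivation via the chain rule.
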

\begin{proof}
Let $\hat{\theta}, \hat \phi$ be the empirical solutions without perturbation. Note that the $\epsilon$-perturbed estimates $\theta(\epsilon), \phi(\epsilon)$ are defined as the solution to the zero equations:
\begin{align}
    \bar{M}(\epsilon, \theta, \phi) =~& M_n(\theta, \phi) + \epsilon m(S_i,\theta, \phi) = 0\\
    \bar{Q}(\epsilon, \phi) =~& Q_n(\phi) + \epsilon q(S_i,\phi) = 0
\end{align}
Defining $x=\epsilon$, $y = (\theta; \phi)$ and $F(\epsilon, (\theta, \phi)) = [\bar{M}(\epsilon, \theta, \phi); \bar{Q}(\epsilon, \theta)]$. Suppose that:
\begin{align}
    F_y(0, (\hat{\theta}, \hat \phi)) = \left[ \begin{array}{cc}
        D_{\theta} M_n(\hat{\theta}, \hat \phi) & D_{\phi}M_n(\hat{\theta}, \hat \phi)\\
        0 & D_{\phi} Q_n(\hat \phi)
    \end{array}
    \right] 
\end{align}
is invertible (equiv. has a non-zero determinant). Note that for the latter it suffices that the diagonal block matrices are invertible, since it is an upper block triangular matrix.

Applying the implicit function theorem, we get that there exists an open set $U$ containing $\epsilon=0$ and a unique function $\psi(\epsilon)=(\theta(\epsilon), \phi(\epsilon))$, such that $\psi(0)=(\hat{\theta}, \hat \phi)$ and such that $\psi(\epsilon)$ solves the zero equations for all $\epsilon \in U$. Moreover, for all $\epsilon \in U$:
\begin{align}
    \left[ \begin{array}{c}
        D_{\epsilon} \theta(\epsilon)\\
        D_{\epsilon} \phi(\epsilon)
    \end{array} \right]
    =~& 
    -\left[F_y(\epsilon, (\theta(\epsilon), \phi(\epsilon)))\right]^{-1} F_x(\epsilon, (\theta(\epsilon), \phi(\epsilon)))
\end{align}
Using the form of the inverse of an upper triangular matrix\footnote{For any block matrix: $\left[\begin{array}{cc}A& X\\ 0& B\end{array}\right]^{-1} = \left[\begin{array}{cc} A^{-1} & -A^{-1} X B^{-1}\\ 0 & B^{-1}\end{array}\right]$.} and applying the latter at $\epsilon=0$ we get that $\left[ \begin{array}{c}
        D_{\epsilon} \theta(0)\\
        D_{\epsilon} \phi(0)
    \end{array} \right]$ takes the form:
\begin{align}
    -\left[\begin{array}{cc}
        \left[D_{\theta} M_n(\hat{\theta}, \hat \phi)\right]^{-1} & -\left[D_{\theta} M_n(\hat{\theta}, \hat \phi)\right]^{-1}\, D_{\phi}M_n(\hat{\theta}, \hat \phi)\, \left[D_{\phi} Q_n(\hat \phi)\right]^{-1}\\
        0 & \left[D_{\phi} Q_n(\hat \phi)\right]^{-1}
    \end{array}\right] \left[\begin{array}{c}
    m(S_i,\hat{\theta}, \hat \phi)\\
    q(S_i,\hat \phi)
    \end{array}\right]
\end{align}
Since the influence function is $\mathcal I_\theta^{(1)}(S_i):= D_\epsilon \theta(0)$, we get the result by applying the matrix multiplication.

Higher order influence functions can be calculated in a similar manner, using repeated applications of the chain rule and the implicit function theorem.  Note that writing an explicit form for a general $k$-th order can often be tedious, and computing it can be practically challenging. Discussion about higher-order influences for the \textbf{ single-stage} estimators can be found in the works by \citet{alaa2019validating} and \citet{robins2008higher}. As we formally show in Theorem~\ref{thm:inf-reinforce}, for our purpose, we recommend using $K=1$ as it suffices to dramatically reduce the variance of the gradient estimator, incurring bias that decays quickly, while also being easy to compute
\end{proof}

\begin{proof}[(Alternative) Informal Proof] Here we provide an alternate way to derive the form of the influence function without invoking the implicit function theorem. This derivation only makes use of the chain rule of the derivatives.
In the following, we will use $\partial f$ to denote partial derivative with respect to the immediate argument of the function, whereas $\mathrm d f$ is for the total derivative. That is, let $f(x, g(x)) \coloneqq x g(x)$ then 
\begin{align}
    \frac{\partial f}{\partial x}(x, g(x)) = g(x) && \frac{\mathrm d f}{\mathrm d x}(x, g(x)) = g(x) + x \frac{\partial g}{\partial x}(x).
\end{align}
From \ref{eqn:perturbedmoment} we know that $\hat \theta_{\epsilon, \hat \phi_\epsilon}$ is the solution to the moment conditions $\bar M\left(\epsilon, \theta, \hat \phi_{\epsilon}\right)= \bf 0$ 
\begin{align}
    \bar M\left(\epsilon, \hat\theta_{\epsilon,\hat \phi_\epsilon}, \hat \phi_{\epsilon} \right) &= \frac{1}{n}\sum_{j=1}^n m(S_j, \hat\theta_{\epsilon,\hat \phi_\epsilon}, \hat \phi_{\epsilon}) + \epsilon\,  m(S_i, \hat\theta_{\epsilon,\hat \phi_\epsilon}, \hat \phi_{\epsilon}) = 0.
\end{align}
Therefore, 
\begin{align}
    \frac{\mathrm d \bar M}{ \mathrm d \epsilon} (\epsilon,\hat \theta_{\epsilon,\hat \phi_\epsilon}, \hat \phi_{\epsilon})&=  \frac{1}{n}\sum_{j=1}^n \frac{\mathrm d m}{\mathrm d \epsilon}(S_j, \hat \theta_{\epsilon,\hat \phi_\epsilon}, \hat \phi_{\epsilon}) + m(S_i, \hat \theta_{\epsilon,\hat \phi_\epsilon},\hat \phi_{\epsilon}) + \epsilon \frac{\mathrm d m}{\mathrm d \epsilon}(S_i, \hat \theta_{\epsilon,\hat \phi_\epsilon},\hat \phi_{\epsilon})
    \\
    &= \frac{\mathrm d M_n}{\mathrm d \epsilon}(\hat \theta_{\epsilon, \hat \phi_\epsilon}, \hat \phi_{\epsilon}) + m(S_i, \hat \theta_{\epsilon,\hat \phi_\epsilon},\hat \phi_{\epsilon}) + \epsilon \frac{\mathrm d m}{\mathrm d \epsilon}(S_i, \hat \theta_{\epsilon,\hat \phi_\epsilon},\hat \phi_{\epsilon})
    \\
    &= \frac{\partial M_n}{\partial \theta}(\hat \theta_{\epsilon, \hat \phi_\epsilon}, \hat \phi_{\epsilon}) \frac{\mathrm d \hat \theta_{\epsilon, \hat \phi_\epsilon}}{\mathrm d \epsilon} 
    + \frac{\partial M_n}{\partial \phi}(\hat \theta_{\epsilon, \hat \phi_\epsilon}, \hat \phi_{\epsilon})\frac{\mathrm d \hat \phi_{\epsilon}}{\mathrm d \epsilon} + m(S_i, \hat \theta_{\epsilon,\hat \phi_\epsilon},\hat \phi_{\epsilon})+ \epsilon \frac{\mathrm d m}{\mathrm d \epsilon}(S_i, \hat \theta_{\epsilon,\hat \phi_\epsilon},\hat \phi_{\epsilon}). \label{eqn:ifdml}
\end{align}
Further, note that,
\begin{align}
 \frac{\mathrm d \hat \theta_{\epsilon, \hat \phi_\epsilon}}{\mathrm d \epsilon} &=   \frac{\partial \hat \theta_{\epsilon, \hat \phi_\epsilon}}{\partial \epsilon} + \frac{\partial \hat \theta_{\epsilon, \hat \phi_\epsilon}}{\partial  \phi} \frac{\partial \hat \phi_\epsilon}{\partial \epsilon}.
\end{align}
Using the fact that $ \mathrm d \bar M(\epsilon, \hat \theta_{\epsilon,\hat \phi_\epsilon}, \hat \phi_{\epsilon})/\mathrm d \epsilon = 0$, rearranging terms in \eqref{eqn:ifdml},
\begin{align}
     \frac{\mathrm d \hat \theta_{\epsilon, \hat \phi_\epsilon}}{\mathrm d \epsilon} &= - \frac{\partial M_n}{\partial \theta}^{-1}\!\!\!\!\!\!(\hat \theta_{\epsilon, \hat \phi_\epsilon}, \hat \phi_{\epsilon}) \brr{ m(S_i, \hat \theta_{\epsilon,\hat \phi_\epsilon},\hat \phi_{\epsilon})+ \epsilon \frac{\mathrm d m}{\mathrm d \epsilon}(S_i, \hat \theta_{\epsilon,\hat \phi_\epsilon},\hat \phi_{\epsilon}) + \frac{\partial M_n}{\partial \phi}(\hat \theta_{\epsilon, \hat \phi_\epsilon}, \hat \phi_{\epsilon}){\color{blue}\frac{\mathrm d \hat \phi_{\epsilon}}{\mathrm d \epsilon}. }} %
     \label{eqn:ifdml2}
\end{align}
Now we expand the terms in {\color{blue} blue}. Similar to earlier, since $\hat \phi_{\epsilon}$ is the solution to the moment condition $\bar Q\left(\epsilon, \phi \right)= \bf 0$,
\begin{align}
    Q\left(\epsilon, \hat \phi_{\epsilon}\right)&= \frac{1}{n}\sum_{j=1}^n q(S_j, \hat \phi_{\epsilon}) + \epsilon \,  q(S_i, \hat \phi_{\epsilon}) = 0.
\end{align}
Therefore,
\begin{align}
    \frac{\mathrm d \bar Q}{\mathrm d \epsilon}\left(\epsilon,  \hat \phi_{\epsilon} \right) &= \frac{1}{n}\sum_{j=1}^n \frac{\mathrm d q}{\mathrm d \epsilon}(S_j, \hat \phi_{\epsilon}) +  q(S_i, \hat \phi_{\epsilon}) + \epsilon \frac{\mathrm d q}{\mathrm d \epsilon}(S_i, \hat \phi_{\epsilon})
    \\
    &= \frac{\mathrm d Q_n}{\mathrm d \epsilon}(\hat \phi_{\epsilon}) +  q(S_i, \hat \phi_{\epsilon}) + \epsilon \frac{\mathrm d q}{\mathrm d \epsilon}(S_i, \hat \phi_{\epsilon})
    \\
    &= \frac{\partial Q_n}{\partial \phi}(\hat \phi_{\epsilon})\frac{\mathrm d \hat \phi_{\epsilon}}{\mathrm d \epsilon} +  q(S_i, \hat \phi_{\epsilon})+ \epsilon \frac{\mathrm d q}{\mathrm d \epsilon}(S_i, \hat \phi_{\epsilon}) \label{eqn:ifdml3}
\end{align}
Now using the fact that $ \mathrm d \bar Q(\epsilon, \hat \phi_{\epsilon})/\mathrm d \epsilon = 0$, rearranging terms in \eqref{eqn:ifdml3},
\begin{align}
   {\color{blue} \frac{\mathrm d \hat \phi_{\epsilon}}{\mathrm d \epsilon}} &= -  \frac{\partial Q_n}{\partial \phi}^{-1}\!\!\!\!\!\!(\hat \phi_{\epsilon})  \brr{ q(S_i, \hat \phi_{\epsilon})+ \epsilon \frac{\mathrm d q}{\mathrm d \epsilon}(S_i, \hat \phi_{\epsilon})}. \label{eqn:ifdml4}
\end{align}

Combining \eqref{eqn:ifdml2} and \eqref{eqn:ifdml4},
\begin{align}
      \frac{\mathrm d \hat \theta_{\epsilon, \hat \phi_\epsilon}}{\mathrm d \epsilon} &= - \frac{\partial M_n}{\partial \theta}^{-1}\!\!\!\!\!\!(\hat \theta_{\epsilon, \hat \phi_\epsilon}, \hat \phi_{\epsilon}) \Bigg[ m(S_i, \hat \theta_{\epsilon,\hat \phi_\epsilon},\hat \phi_{\epsilon}) + \epsilon \frac{\mathrm d m}{\mathrm d \epsilon}(S_i, \hat \theta_{\epsilon,\hat \phi_\epsilon},\hat \phi_{\epsilon}) \\
      &\hspace{100pt} - \frac{\partial M_n}{\partial \phi}(\hat \theta_{\epsilon, \hat \phi_\epsilon}, \hat \phi_{\epsilon}) \brr{\frac{\partial Q_n}{\partial \phi}^{-1}\!\!\!\!\!\!(\hat \phi_{\epsilon}) \brr{  q(S_i, \hat \phi_{\epsilon}) + \epsilon \frac{\mathrm d q}{\mathrm d \epsilon}(S_i, \hat \phi_{\epsilon})}} \Bigg]. \label{eqn:ifdml7}
\end{align}
At $\epsilon =0$, notice that $\hat \theta_{0,\hat \phi_0} = \hat \theta$ and $\hat \phi_{0} = \hat \phi$. Therefore, using \eqref{eqn:ifdml7},
\begin{align} 
   \frac{\mathrm d \hat \theta_{\epsilon, \hat \phi_\epsilon}}{\mathrm d \epsilon}\Big|_{\epsilon=0} &=  - \frac{\partial M_n}{\partial \theta}^{-1}\!\!\!\!\!\!{(\hat \theta, \hat \phi)} \brr{m(S_i, \hat \theta, \hat \phi)  -  \frac{\partial M_n}{\partial \phi}{(\hat \theta, \hat \phi)}\brr{\frac{\partial Q_n}{\partial \phi}^{-1}\!\!\!\!\!\!{(\hat \phi)} q(S_i, \hat \phi)}}.
\end{align}

\end{proof}

\section{Bias and Variances of Gradient Estimators}
\label{apx:sec:biasvar}

To convey the key insights, we will consider $\nabla \log \pi(S_i) \in \R$ to be scalar for simplicity. Similar results would follow when $\nabla \log \pi(S_i) \in \R^d$.
\subsection{Naive REINFORCE estimator}

For any random variable $Z$, we let $\|Z\|_{L^p}=\left(\E[Z^p]\right)^{1/p}$.
 
\begin{thm}\label{thm:naive-reincorce} $\E \brr{{\hat \nabla} \mathcal L(\pi)} = \nabla \mathcal L(\pi)$. Moreover, let:
\begin{align}
    X_{1} = \mse\br{\theta(\{S_i\}_{j=2}^{n})}
\end{align}
denote the leave-one-out MSE and let:
\begin{align}
    \Delta_1 = \mse\br{\theta(\{S_i\}_{j=1}^{n})} - \mse\br{\theta(\{S_i\}_{j=2}^{n})}
\end{align}
denote the leave-one-out stability. Suppose that for a sufficiently large $n$,
\begin{align}
    \left\|X_1\right\|_{L^4} \leq C_{2,4} \left\|X_1\right\|_{L^2}
\end{align}
and for $Y_i \coloneqq \nabla \log \pi(S_i)$, let $\forall i, \|Y_i\|_{L^{\infty}}\leq C$ and $\|Y_i\|_{L^2}\geq c>0$ for some universal constants $c, C, C_{2,4}$. Then the variance of the naive REINFORCE estimator is upper and lower bounded as:
\begin{align}
\frac{c^2}{2} n \|X_{1}\|_{L^2}^2 - O\left(n^2 \|\Delta_1\|_{L^2}^2\right) \leq \var\br{\hat \nabla \mathcal L(\pi)}\leq \frac{3C^2}{2} n \|X_1\|_{L^2}^2 + O\left(n^2 \|\Delta_1\|_{L^2}^2\right)
\end{align}
Thus, when $n \|\Delta_1\|_{L^2} = O(1)$ and $n \|\mse\br{\theta(\{S_i\}_{j=2}^n)}\|_{L^2}^2=\Omega(\alpha(n))=\omega(1)$, we have \begin{align}
\var\br{\hat \nabla \mathcal L(\pi)}=\Omega(\alpha(n)).
\end{align}
\end{thm}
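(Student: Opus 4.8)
Write $W_n \coloneqq \mse(\theta(D_n))$ and $Y_i \coloneqq \nabla\log\pi(Z_i|X_i)$, so that $\hat\nabla\mathcal L(\pi) = W_n\sum_{i=1}^n Y_i$. The unbiasedness claim $\E[\hat\nabla\mathcal L(\pi)] = \nabla\mathcal L(\pi)$ is exactly the log-derivative identity already recorded in \ref{eqn:reinforceq}: the only $\pi$-dependence in $\Pr(D_n;\pi)$ of \ref{eqn:jointprob} is through $\prod_i\pi(Z_i|X_i)$, so differentiating under the integral yields the score form, and one only needs the standard fact $\E[Y_i\mid X_i]=0$. The substance of the theorem is the variance, and the plan is to isolate its dominant quadratic term via the same leave-one-out device that underlies the control variate in \ref{eqn:samplereinforcecv}. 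For each $i$ set $X^{(-i)} \coloneqq \mse(\theta(D_{n\backslash i}))$, which is independent of $S_i$, and $\Delta_i \coloneqq W_n - X^{(-i)}$; by exchangeability each $X^{(-i)}$ has the law of $X_1$ and each $\Delta_i$ the law of $\Delta_1$. Then $\hat\nabla\mathcal L(\pi) = A + B$ with $A \coloneqq \sum_i X^{(-i)}Y_i$ and $B \coloneqq \sum_i\Delta_i Y_i = \hat\nabla^{\texttt{CV}}\mathcal L(\pi)$.

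The term $B$ is easy to control. Since $X^{(-i)}\perp S_i$ and $\E[Y_i\mid X_i]=0$ we get $\E[A]=0$, hence $\E[B]=\nabla\mathcal L(\pi)$ (this is the content of \thref{prop:cv}); and by Minkowski together with $\|Y_i\|_{L^\infty}\le C$, $\|B\|_{L^2}\le\sum_i\|Y_i\|_{L^\infty}\|\Delta_i\|_{L^2}\le nC\|\Delta_1\|_{L^2}$, so $\var(B)=O\!\left(n^2\|\Delta_1\|_{L^2}^2\right)$.

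The crux is $\var(A)=\E[A^2]=\sum_{i,j}\E\!\left[X^{(-i)}X^{(-j)}Y_iY_j\right]$. The $n$ diagonal terms equal, by independence of $X^{(-i)}$ from $S_i$, exactly $\|X_1\|_{L^2}^2\,\|Y_i\|_{L^2}^2$, contributing a quantity in $\left[c^2 n\|X_1\|_{L^2}^2,\ C^2 n\|X_1\|_{L^2}^2\right]$. For an off-diagonal pair $i\neq j$ I would do a second leave-one-out: with $X^{(-ij)}\coloneqq\mse(\theta(D_{n\backslash\{i,j\}}))$, write $X^{(-i)}=X^{(-ij)}+\delta_j^{(-i)}$ and $X^{(-j)}=X^{(-ij)}+\delta_i^{(-j)}$, where $\delta_j^{(-i)}$ is measurable with respect to all samples except $S_i$ and $\delta_i^{(-j)}$ with respect to all samples except $S_j$. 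Expanding the product into four terms, three of them factor as a quantity measurable w.r.t.\ $\mathcal F_{-i}$ (or $\mathcal F_{-j}$, the $\sigma$-field of all samples except $S_i$, resp.\ $S_j$) times $Y_i$ (resp.\ $Y_j$), hence vanish in expectation by $\E[Y_i\mid\mathcal F_{-i}]=0$; only $\E[\delta_j^{(-i)}\delta_i^{(-j)}Y_iY_j]$ survives, and it is bounded by $\|Y_i\|_{L^\infty}\|Y_j\|_{L^\infty}\,\|\delta_j^{(-i)}\|_{L^2}\|\delta_i^{(-j)}\|_{L^2}=O\!\left(\|\Delta_1\|_{L^2}^2\right)$, where leave-one-out stability (at sample sizes $n-1,n-2$, which the stability hypothesis together with $\|X_1\|_{L^4}\le C_{2,4}\|X_1\|_{L^2}$ keeps comparable to the size-$n$ quantity) controls the $\delta$'s. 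Summing the $n(n-1)$ off-diagonal terms yields $c^2 n\|X_1\|_{L^2}^2 - O(n^2\|\Delta_1\|_{L^2}^2)\le\var(A)\le C^2 n\|X_1\|_{L^2}^2 + O(n^2\|\Delta_1\|_{L^2}^2)$.

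Finally I would assemble $\var(\hat\nabla\mathcal L(\pi))=\var(A)+\var(B)+2\Cov(A,B)$ with $|\Cov(A,B)|\le\sqrt{\var(A)\var(B)}$, and split the cross term by AM--GM as $2\sqrt{\var(A)\var(B)}\le \tfrac12 C^2 n\|X_1\|_{L^2}^2 + O(n^2\|\Delta_1\|_{L^2}^2)$ for the upper bound and $2\sqrt{\var(A)\var(B)}\le\tfrac12 c^2 n\|X_1\|_{L^2}^2 + O(n^2\|\Delta_1\|_{L^2}^2)$ for the lower; this delivers the stated $\tfrac{3C^2}{2}$ and $\tfrac{c^2}{2}$ bounds with $O(n^2\|\Delta_1\|_{L^2}^2)$ remainders. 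The concluding assertion is then immediate: when $n\|\Delta_1\|_{L^2}=O(1)$ the remainder is $O(1)=o(\alpha(n))$ since $\alpha(n)=\omega(1)$, while $n\|X_1\|_{L^2}^2=n\|\mse(\theta(\{S_i\}_{i=2}^n))\|_{L^2}^2=\Omega(\alpha(n))$, so $\var(\hat\nabla\mathcal L(\pi))\ge\tfrac{c^2}{2}\,\Omega(\alpha(n))-O(1)=\Omega(\alpha(n))$. The main obstacle is precisely the off-diagonal estimate: the naive Cauchy--Schwarz bound $|\E[X^{(-i)}X^{(-j)}Y_iY_j]|\le C^2\|X_1\|_{L^2}^2$ would contribute $\Theta(n^2\|X_1\|_{L^2}^2)$ and swamp the $\Theta(n\|X_1\|_{L^2}^2)$ signal, so one must extract the cancellation through the two-level leave-one-out expansion and the score identity, and the moment-equivalence and stability hypotheses are exactly what makes the leftover term genuinely of order $n^2\|\Delta_1\|_{L^2}^2$.
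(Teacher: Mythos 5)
Your proof is correct and arrives at the stated constants, but it takes a genuinely different route from the paper's (Theorem~\ref{thm:naive-reincorce} in Appendix~\ref{apx:sec:biasvar}). The paper substitutes $X = X_i + \Delta_i$ into \emph{both} slots of $\cov(Z_i,Z_j)$, so the only off-diagonal term that vanishes exactly is $\E[X_i^2Y_iY_j]$; the surviving cross terms $\E[X_i\Delta_iY_iY_j]$ are controlled by first summing over $j$ to form $\bar Y=\sum_jY_j$ (which concentrates at scale $\sqrt n$), then applying Cauchy--Schwarz and the Marcinkiewicz--Zygmund inequality --- this is precisely where the hypothesis $\|X_1\|_{L^4}\le C_{2,4}\|X_1\|_{L^2}$ is used --- and finally absorbing the resulting $n^{3/2}\|\Delta_1\|_{L^2}\|X_1\|_{L^2}$ contribution via AM--GM into $\tfrac{c^2}{2}n\|X_1\|_{L^2}^2+O(n^2\|\Delta_1\|_{L^2}^2)$. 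You instead split globally into $A=\sum_iX^{(-i)}Y_i$ and $B={\hat\nabla}^{\texttt{CV}}\mathcal L(\pi)$ and kill the off-diagonal terms of $\E[A^2]$ with a second-level leave-one-out, so that only $\E[\delta_j^{(-i)}\delta_i^{(-j)}Y_iY_j]=O(\|\Delta\|_{L^2}^2)$ survives per pair. Your route is more elementary --- no Marcinkiewicz--Zygmund, and the $C_{2,4}$ moment-equivalence hypothesis in fact becomes unnecessary --- and your AM--GM assembly of $\var(A)+\var(B)+2\Cov(A,B)$ with weight $\eta=1/2$ reproduces exactly the $c^2/2$ and $3C^2/2$ constants.

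One caveat: your off-diagonal bound requires $\|X^{(-i)}-X^{(-ij)}\|_{L^2}=O(\|\Delta_1\|_{L^2})$, i.e.\ leave-one-out stability at sample size $n-1$ of the same order as at size $n$. The theorem's hypotheses quantify stability only at size $n$, and the $L^4$--$L^2$ equivalence you gesture at does not supply this; the paper's decomposition sidesteps the issue entirely because both slots of each covariance use the same $X_i$. The gap is benign for any estimator that is $1/m$-leave-one-out stable at every $m$ (the regime the paper's remark has in mind), but in your write-up it should be stated as an explicit assumption rather than attributed to the existing hypotheses.
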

 \begin{rem}
     In many cases, we would expect that the mean squared error decays at the rate of $n^{-1/2}$ (unless we can invoke a fast parametric rate of $1/n$, which holds under strong convexity assumptions); in such cases, we have that $\alpha(n)=\omega(1)$ and $\var\br{\hat \nabla \mathcal L(\pi)}$ can grow with the sample size. Moreover, if the mean squared error does not converge to zero, due to persistent approximation error, or local optima in the optimization, then the variance can grow linearly, i.e. $\alpha(n)=\Theta(n)$.
 \end{rem}

\begin{rem}
    The quantity $\Delta$ is a leave-one-out stability quantity. Hence, the property that $n\|\Delta\|_{L^2}=O(1)$ is a leave-one-out-stability property of the estimator, which is a well-studied concept (see e.g. \cite{kearns1997algorithmic,celisse2016stability,abou2019exponential}). It states that the estimator is $1/n$-leave-one-out-stable. This will typically hold for many M-estimators over parametric spaces. 
\end{rem}

\begin{proof}
In the following, we first define some new notation to make the proof more concise. Subsequently, we show (I) unbiasedness, and then (II) we discuss the variance of $\hat{\nabla} \mathcal L(\pi)$.

Let $D_n$ be the entire dataset $\{S_i\}_{i=1}^n$, where $S_i$ is the i-th sample.
\begin{align}
    Z_i &\coloneqq \mse\br{\theta(\{S_i\}_{i=1}^n)} \nabla \log \pi(S_i) 
    \\
    X &\coloneqq \mse\br{\theta(\{S_i\}_{i=1}^n)}
    \\
    X_i &\coloneqq \mse\br{\theta(\{S_i\}_{j=1, j\neq i}^n)}
    \\
    \Delta_i &\coloneqq X - X_i
    \\
    Y_i &\coloneqq \nabla \log \pi(S_i) 
\end{align}
\textbf{(I) Unbiased}
\begin{align}
   \E \brr{{\hat \nabla} \mathcal L(\pi)} 
   &= \E_\pi\brr{\sum_i \mse\br{\theta(\{S_i\}_{i=1}^n)} \nabla \log \pi(S_i)}
   \\
   &\overset{(a)}{=} \E_\pi\brr{\mse\br{\theta(D_n)} \nabla \log \Pr(D_n; \pi)} 
   \\
  &=    \nabla \E_\pi\brr{\mse\br{\theta(D_n)} }
  \\
  &= \nabla \mathcal L(\pi),
\end{align}
where $\E_\pi$ indicates that the dataset $D_n$ is sampled using $\pi$, and $(a)$ follows from \ref{eqn:reinforceq}.

\textbf{(II) Variance}
\begin{align}
    \var\br{\hat \nabla \mathcal L(\pi)} &= \var\br{\sum Z_i} =     \var\br{X\sum_{i=1}^n Y_i}. 
\end{align}
Notice $\{Y_i\}_{i=1}^n$ are i.i.d random variables, and $X$ is \textit{dependent} on all of $\{Y_i\}_{i=1}^n$, which makes the analysis more involved. We begin with the following alternate expansion for variance, 
\begin{align}
    \var\br{\sum Z_i} &= \sum_i \sum_j \cov\br{Z_i, Z_j},    \label{apx:eqn:p11}
    \\
\text{where, } \hspace{20pt}    \cov\br{Z_i, Z_j} &= \cov(XY_i, XY_j) = \cov((X_i+\Delta_i)Y_i, (X_i+\Delta_i)Y_j).
\end{align}
Therefore,
\begin{align}
    \var\br{\sum Z_i} = \sum_{i,j}\cov(X_iY_i, X_i Y_j) + \cov(X_iY_i, \Delta_iY_j) + \cov(\Delta_i Y_i, X_i Y_j) + \cov(\Delta_i Y_i, \Delta_i Y_j).
\end{align}
Now, observe that as $X_i \ind Y_i$, $Y_i \ind Y_j$, and $\E[Y_i]=\E[X_iY_i] = \E[X_iY_j]=0$. 
Therefore, 
\begin{align}
    \cov(X_iY_i, X_i Y_j) =~& \E[X_i^2 Y_i Y_j] = \E[X_i^2 Y_j] \E[Y_i] = 0,  & \forall i \neq j
    \\
     \cov(X_iY_i, X_i Y_j) =~& \E[X_i^2 Y_i^2],  & \forall i = j.
\end{align}
For all $i,j$:
\begin{align}
    \cov(X_iY_i, \Delta_iY_j) =~& \E[X_i \Delta_i Y_i Y_j] - \E[X_i Y_i] \E[\Delta_i Y_j] = \E[X_i \Delta_i Y_i Y_j],\\
    \cov(\Delta_i Y_i, X_i Y_j) =~& \E[X_i \Delta_i Y_i Y_j] - \E[\Delta_i Y_i] \E[X_i Y_j],\\
    \left|\cov(\Delta_i Y_i, \Delta_i Y_j)\right| 
    \overset{(a)}{\leq}~& \sqrt{\E[\Delta_i^2 Y_i^2] \E[\Delta_i^2 Y_j^2]}     \leq  C^2\, \E[\Delta_i^2],
\end{align}
where (a) follows from Cauchy-Schwarz by noting that for any two random variables $A$ and $B$,  $\Cov(A,B) \leq \sqrt{\var(A)\var(B)}$ and that $\var(A)\leq \E[A^2]$.
If we let $\bar{Y} = \sum_{j=1}^n Y_i$, we  have:\footnote{We note here that an important aspect of the proof is to first push the summation inside to get $\bar{Y}$ and then to apply the Cauchy-Schwarz inequality, since $\bar{Y}$ concentrates around $\sqrt{n}$ and one saves an important extra $\sqrt{n}$ factor.}
\begin{align}
    \left|\var\br{\sum Z_i}  - \sum_i \E[X_i^2 Y_i^2]\right|&\leq  \left|\sum_i \sum_j 2 \E[X_i \Delta_i Y_i Y_j] - \E[\Delta_i Y_i]\E[X_i Y_j]\right| + n^2 C^2  \E[\Delta_1^2]\\
    &= \left|\sum_i 2 \E[X_i \Delta_i Y_i \bar{Y}] - \E[\Delta_i Y_i]\E[X_i \bar{Y}]\right| + n^2 C^2  \E[\Delta_1^2]. \label{eqn:mz0}
\end{align}
Now, by assumption we have that $\|Y_i\|_{L^\infty} \leq C$ for some constant $C$. Further, from Cauchy-Schwarz, $|\E[AB]| \leq \sqrt{\E[A^2]\E[B^2]}$ for any two random variable $A$ and $B$. Thus we get:
\begin{align}
    \left|\E[\Delta_i Y_i]\E[X_i \bar{Y}]\right| &\leq \sqrt{\E[\Delta_i^2]\E[Y_i^2]} \sqrt{\E[X_i^2]} \sqrt{\E[\bar{Y}^2]} \\
    &\overset{(a)}{=} \sqrt{\E[\Delta_i^2] \E[Y_i^2]} \sqrt{\E[X_i^2]} \sqrt{n\E[Y_i^2]}  \\
    &\leq C^2 \sqrt{n \E[\Delta_i^2] \E[X_i^2]} &\because \|Y_i\|_{L^\infty}\leq C\\\
    &= C^2 \sqrt{n} \|\Delta_i\|_{L^2} \|X_i\|_{L^2} \label{eqn:mz1}
\end{align}
where (a) follows from observing that $\E[(\sum_i Y_i)^2] = \E[\sum_{i,j} Y_iY_j] = \E[\sum_i Y_i^2] = n\E[Y_i^2]$ as $\E[Y_iY_j]=0$ for $i\neq j$.
The final expression uses that $\sqrt{\E[ X_i^2]} =  \|X_i\|_{L^2}$. Similarly,
\begin{align}
    \left|\E[X_i \Delta_i Y_i \bar{Y}]\right| \leq \sqrt{\E[\Delta_i^2 Y_i^2]} \sqrt{\E[X_i^2 \bar{Y}^2]} \leq~& C \sqrt{\E[\Delta_i^2]} \left(\E[X_i^4] \E[\bar{Y}^4]\right)^{1/4} 
\end{align}
Now by the Marcinkiewicz-Zygmund inequality \cite{rio2009moment} we have that $\left\|\frac{1}{\sqrt{n}} \sum_i Y_i\right\|_{L^p} \leq C_p \|Y_i\|_{L^p}$ for any $p\geq 2$ and for some universal constant $C_p$. Therefore, $(\E[\bar{Y}^4])^{1/4} \leq \sqrt{n} C_4\|Y_i\|_{L^4}$ and
\begin{align}
     \left|\E[X_i \Delta_i Y_i \bar{Y}]\right| 
    \leq~& C C_4 \|\Delta_i\|_{L^2} \|X_i\|_{L^4} \sqrt{n} \|Y_i\|_{L^4}\\
    \leq~& C^2 C_4 \sqrt{n} \|\Delta_i\|_{L^2} \|X_i\|_{L^4}
    \\
     \leq~& C^2 C_4 C_{2,4} \sqrt{n} \|\Delta_i\|_{L^2} \|X_i\|_{L^2}, \label{eqn:mz2}
\end{align}
where the last line follows as $\|X_i\|_{L^4}\leq C_{2,4}\|X_i\|_{L^2}$. Thus for some constant $C_1$, combining \ref{eqn:mz0}, \ref{eqn:mz1}, and \ref{eqn:mz2}, we conclude that:
\begin{align}
    \left|\var\br{\sum Z_i}  - \sum_i \E[X_i^2 Y_i^2]\right| \leq~& C_1\sum_i \sqrt{n} \|\Delta_i\|_{L^2} \|X_i\|_{{L^2}} + n^2 C^2 \|\Delta_1\|_{L^2}^2.
\end{align}
Recall from the AM-GM inequality that for any $a$ and $b$: $a\cdot b \leq \frac{1}{2\eta} a^2 + 2\eta b^2$ for any $\eta>0$. Let $a=C_1 \sqrt{n} \|\Delta_i\|$ and $b=\|X_i\|_{L^2}=\sqrt{\E[X_i^2]}$ and $\eta=c^2/4$, then
\begin{align}
    \left|\var\br{\sum Z_i}  - \sum_i \E[X_i^2 Y_i^2]\right| \leq~& \sum_i \left( \frac{2C_1^2 n \|\Delta_i\|^2_{L^2}}{c^2} + \frac{c^2}{2} \E[X_i^2] \right) + n^2 C^2 \|\Delta_1\|_{L^2}^2.
\end{align}
Since $c^2 \leq \E[Y_i^2]\leq C^2$ and $X_i \indep Y_i$, we have $\E[X_i^2 Y_i^2] = \E[X_i^2] \E[Y_i^2]\leq C^2 \E[X_i^2]$, and $\E[X_i^2 Y_i^2] \geq c^2 \E[X_i^2]$ .
Further, as $|A-B|\leq C \implies A \leq B +C$ and $A \geq B-C$, we obtain the lower bound:
\begin{align}
     \var\br{\sum Z_i} &\geq \sum_i \left(\E[X_i^2]c^2 - \frac{2C_1^2 n \|\Delta_i\|^2_{L^2}}{c^2} - \frac{c^2}{2} \E[X_i^2]\right)  - n^2 C^2 \|\Delta_1\|_{L^2}^2
 \\
    &= n \frac{c^2}{2} \E[X_1^2] - O(n^2\|\Delta_1\|_{L^2}^2). 
\end{align}
Similarly, for the upper bound:
\begin{align}
    \var\br{\sum Z_i} &\leq \sum_i \left(\E[X_i^2] C^2 + \frac{2C_1^2 n \|\Delta_i\|^2_{L^2}}{c^2} + \frac{c^2}{2} \E[X_i^2]\right)  + n^2 C^2 \|\Delta_1\|_{L^2}^2\\
    &\leq n \frac{3C^2}{2} \E[X_1^2] + O\left(n^2 \|\Delta_1\|_{L^2}^2\right). &\because c^2 \leq C^2 \nonumber
\end{align}
Therefore, we obtain our result
\begin{align}
\frac{c^2}{2} n \|X_{1}\|_{L^2}^2 - O\left(n^2 \|\Delta_1\|_{L^2}^2\right) \leq \var\br{\hat \nabla \mathcal L(\pi)}\leq \frac{3C^2}{2} n \|X_1\|_{L^2}^2 + O\left(n^2 \|\Delta_1\|_{L^2}^2\right).
\end{align}
\end{proof}

\subsection{REINFORCE estimator with CV}

\begin{thm}\label{thm:cv-reinforce}
   $\E \brr{{\hat \nabla}^\texttt{CV} \mathcal L(\pi)} = \nabla \mathcal L(\pi)$. Moreover, let:
\begin{align}
    \Delta_1 = \mse\br{\theta(\{S_i\}_{j=1}^{n})} - \mse\br{\theta(\{S_i\}_{j=2}^{n})}
\end{align}
denote the leave-one-out stability. Let $    Y_i = \nabla \log \pi(S_i) $ and suppose that
and that $\|Y_i\|_{L^{\infty}}\leq C$ for some universal constant $C$. Then the variance of the CV REINFORCE estimator is upper bounded as:
\begin{align}
\var\br{{\hat \nabla}^\texttt{CV} \mathcal L(\pi)} \leq C^2 n^2 \|\Delta_1\|_{L^2}^2
\end{align}
Thus if the estimator satisfies a $n^{-1}$-leave-one-out stability, i.e. $n \|\Delta_1\|_{L^2}=O(1)$, then we have:
\begin{align}
\var\br{{\hat \nabla}^\texttt{CV} \mathcal L(\pi)} \leq O(1).
\end{align}
\end{thm}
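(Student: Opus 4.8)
The plan is to follow the skeleton of the proof of Theorem~\ref{thm:naive-reincorce}, reusing the abbreviations $X \coloneqq \mse\br{\theta(\{S_j\}_{j=1}^n)}$, $X_i \coloneqq \mse\br{\theta(\{S_j\}_{j\neq i})}$, $\Delta_i \coloneqq X - X_i$, and $Y_i \coloneqq \nabla\log\pi(S_i)$, so that ${\hat\nabla}^{\texttt{CV}}\mathcal{L}(\pi) = \sum_{i=1}^n \Delta_i Y_i = \sum_i X Y_i - \sum_i X_i Y_i$. For unbiasedness, the first sum is exactly the naive estimator, whose expectation is $\nabla\mathcal{L}(\pi)$ by Theorem~\ref{thm:naive-reincorce}, so it remains to show $\E[\sum_i X_i Y_i] = 0$. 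The key point is that $X_i = \mse(\theta(D_{n\backslash i}))$ is a function of $\{S_j\}_{j\neq i}$ alone --- here one invokes the symmetry and determinism of the estimator map $D_n\mapsto\theta(D_n)$ assumed in Section~\ref{sec:notation} --- hence $X_i$ is independent of $S_i$, and in particular of $Y_i$; combined with the score identity $\E[Y_i] = \E\big[\E[\nabla\log\pi(Z_i|X_i)\mid X_i]\big] = 0$ (since $Z_i\sim\pi(\cdot\mid X_i)$, exactly as used in step~(I) of the previous proof), we get $\E[X_i Y_i] = \E[X_i]\,\E[Y_i] = 0$ for each $i$. Summing over $i$ then gives $\E[{\hat\nabla}^{\texttt{CV}}\mathcal{L}(\pi)] = \nabla\mathcal{L}(\pi)$.

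For the variance bound, since the control variate does not shift the mean, $\var({\hat\nabla}^{\texttt{CV}}\mathcal{L}(\pi)) \le \E\big[(\sum_i \Delta_i Y_i)^2\big] = \sum_{i,j}\E[\Delta_i Y_i\,\Delta_j Y_j]$. I would bound each cross term by Cauchy--Schwarz, $\E[\Delta_i Y_i\,\Delta_j Y_j] \le \|\Delta_i Y_i\|_{L^2}\|\Delta_j Y_j\|_{L^2}$, then use $\|Y_i\|_{L^\infty}\le C$ to get $\|\Delta_i Y_i\|_{L^2}\le C\|\Delta_i\|_{L^2}$, so that the whole double sum is at most $C^2\big(\sum_i\|\Delta_i\|_{L^2}\big)^2$. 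Finally, exchangeability of the i.i.d.\ samples together with symmetry of $\theta(D_n)$ gives $\|\Delta_i\|_{L^2} = \|\Delta_1\|_{L^2}$ for all $i$, so $\sum_i\|\Delta_i\|_{L^2} = n\|\Delta_1\|_{L^2}$ and $\var({\hat\nabla}^{\texttt{CV}}\mathcal{L}(\pi)) \le C^2 n^2\|\Delta_1\|_{L^2}^2$. The last assertion is immediate: $n^{-1}$-leave-one-out stability is exactly $n\|\Delta_1\|_{L^2} = O(1)$, which forces the right-hand side to be $O(1)$.

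There is no genuinely hard step here. In contrast with Theorem~\ref{thm:naive-reincorce} --- where one had to push the summation inside to exploit concentration of $\bar Y = \sum_j Y_j$ around $\sqrt n$, use $\E[Y_iY_j]=0$ for $i\neq j$, and carefully control the stability-error terms via the Marcinkiewicz--Zygmund inequality --- here the crude two-line bound (Cauchy--Schwarz term by term, then $\|Y\|_{L^\infty}\le C$, then exchangeability) already delivers the advertised $n^2\|\Delta_1\|_{L^2}^2$ rate, precisely because the ``leading'' piece $\sum_i X_i Y_i$ has been cancelled by the control variate, leaving only the stability fluctuations $\Delta_i$. The one thing to be careful about is the independence claim $X_i \perp S_i$ used for unbiasedness: it is exactly what makes $\mse(\theta(D_{n\backslash i}))$ a valid (zero-mean) control variate, and it rests entirely on the leave-one-out estimate genuinely not depending on $S_i$.
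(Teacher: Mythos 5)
Your proposal is correct and follows essentially the same route as the paper: the same split of the estimator into the naive term plus the zero-mean control variate $\sum_i \mse(\theta(D_{n\backslash i}))\,Y_i$ (using $X_i \ind S_i$ and $\E[Y_i\mid D_{n\backslash i}]=0$) for unbiasedness, and the same crude second-moment bound for the variance — your termwise Cauchy--Schwarz on the double sum is equivalent to the paper's Jensen step $\var(\sum_i Z_i)\le n\sum_i\var(Z_i)$ and yields the identical bound $C^2 n^2\|\Delta_1\|_{L^2}^2$.
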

 \begin{rem}
     This holds irrespective of $\alpha(n)$, thus providing a more reliable gradient estimator even if the function approximator is mis-specified or optimization is susceptible to local optima.
 \end{rem}

\begin{proof}
   Let $D_n$ be the entire dataset $\{S_i\}_{i=1}^n$, where $S_i$ is the i-th sample.
\begin{align}
    Z_i &=  \nabla \log \pi(S_i) \brr{\mse\br{\theta(\{S_j\}_{j=1}^n)} - \mse\br{\theta(\{S_j\}_{j=1, j\neq i}^n)}} 
    \\
    \Delta_i &= \mse\br{\theta(\{S_j\}_{j=1}^n)} - \mse\br{\theta(\{S_j\}_{j=1, j\neq i}^n)}
    \\
    Y_i &= \nabla \log \pi(S_i) 
\end{align}

\textbf{(I) Unbiased}
\begin{align}
   \E \brr{{\hat \nabla}^\texttt{CV} \mathcal L(\pi)} 
   &= \E_\pi\brr{\sum_i \nabla \log \pi(S_i) \brr{\mse\br{\theta(\{S_j\}_{j=1}^n)} - \mse\br{\theta(\{S_j\}_{j=1, j\neq i}^n)}} }
   \\
   &= \E_\pi\brr{\hat \nabla \mathcal L(\pi)} - \E_\pi\brr{\sum_i \nabla \log \pi(S_i) \mse\br{\theta(\{S_j\}_{j=1, j\neq i}^n)}} 
   \\
   &\overset{(a)}{=} \nabla \mathcal L(\pi) - \E_\pi\brr{\sum_i \mse\br{\theta(D_{n\backslash i})} \underbrace{\E_\pi\brr{\nabla \log \pi(S_i)\middle | D_{n\backslash i}}}_{=0} } 
   \\
  &= \nabla \mathcal L(\pi),
\end{align}
where (a) follows as $S_i \ind D_{n\backslash i}$, and $\E_\pi\brr{\nabla \log \pi(S_i)} =   0$.

\textbf{(II) Variance}

Let $\bar{Z}_i = Z_i - \E[Z_i]$. First, observe that \begin{align}\frac{1}{n^2} \Var\left(\sum_i Z_i\right)=\Var\left(\frac{1}{n} \sum_i Z_i\right) = \E\left[\left(\frac{1}{n} \sum_i \bar{Z}_i\right)^2\right].\end{align}
From Jensen's inequality, as $
    \left( \frac{1}{n} \sum_i \bar{Z}_i\right)^2 \leq  \frac{1}{n} \sum_i \left( \bar{Z}_i\right)^2$,
 \begin{align}\frac{1}{n^2} \Var\left(\sum_i Z_i\right) \leq \frac{1}{n} \sum_i \E[\bar{Z}_i^2]=\frac{1}{n} \sum_i \Var(Z_i).\end{align}
Therefore,
\begin{align}
  \var\br{{\hat \nabla}^\texttt{CV} \mathcal L(\pi)} =  \var\br{\sum Z_i} \leq~& n \sum \var\br{Z_i} \leq n^2 \E[\Delta_1^2 Y_1^2] \leq n^2 C^2 \E[\Delta_1^2].
\end{align}
\end{proof}

\subsection{REINFORCE estimator with Influence}

\begin{thm}\label{thm:inf-reinforce}
   $\E \brr{{\hat \nabla}^\texttt{IF} \mathcal L(\pi)} = \nabla \mathcal L(\pi) + \mathcal O(n^{-K})$.
    Moreover, let:
\begin{align}
    \Delta_1 = \mse\br{\theta(\{S_i\}_{j=1}^{n})} - \mse\br{\theta(\{S_i\}_{j=2}^{n})}
\end{align}
denote the leave-one-out stability. Let $ \mse\br{\theta(\{S_i\}_{j=1}^{n})} $ admit a convergent Taylor series expansion. Let   $ Y_i = \nabla \log \pi(S_i) $ and suppose that
and that $\|Y_i\|_{L^{\infty}}\leq C$ for some universal constant $C$.  Then the variance of the Influence function based REINFORCE estimator is upper bounded as:
\begin{align}
\var\br{{\hat \nabla}^\texttt{IF} \mathcal L(\pi)} \leq O\left(n^2 \|\Delta_1\|_{L^2}^2 + n^{2(1-K)}\right)
\end{align}
Thus if the estimator satisfies a $n^{-1}$-leave-one-out stability, i.e. $n \|\Delta_1\|_{L^2}=O(1)$, and $K \geq 1$, then we have:
\begin{align}
\var\br{{\hat \nabla}^\texttt{IF} \mathcal L(\pi)} \leq O(1).
\end{align}
\end{thm}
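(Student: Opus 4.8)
The plan is to compare ${\hat \nabla}^\texttt{IF}\mathcal L(\pi)$ directly with the control-variate estimator ${\hat \nabla}^\texttt{CV}\mathcal L(\pi)$, whose bias and variance are already controlled by Theorem~\ref{thm:cv-reinforce}, and to show that the difference between the two is a remainder negligible at the claimed rates. Write $Y_i \coloneqq \nabla\log\pi(S_i)$ and $\Delta_i \coloneqq \mse(\theta(D_n)) - \mse(\theta(D_{n\backslash i}))$, the per-sample leave-one-out difference, so that ${\hat \nabla}^\texttt{CV}\mathcal L(\pi) = \sum_i Y_i \Delta_i$. Since the perturbed distribution $\mathbb D_{i,\epsilon}$ at $\epsilon=-1/n$ is exactly the empirical distribution of $D_{n\backslash i}$, substituting $\epsilon=-1/n$ into the Von Mises expansion \ref{eqn:LOOmain} gives $\Delta_i = -\sum_{k=1}^\infty \frac{(-1/n)^k}{k!}\mathcal I^{(k)}_\mse(S_i)$, whereas $\mathcal I_\mse(S_i)$ in \ref{eqn:samplereinforceiv} is precisely the $K$-term truncation of the same series. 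Hence $\mathcal I_\mse(S_i) = \Delta_i + r_i$ with $r_i \coloneqq \sum_{k=K+1}^\infty \frac{(-1/n)^k}{k!}\mathcal I^{(k)}_\mse(S_i)$, and therefore
\begin{align}
    {\hat \nabla}^\texttt{IF}\mathcal L(\pi) = \sum_{i=1}^n Y_i\,\mathcal I_\mse(S_i) = \underbrace{\sum_{i=1}^n Y_i\,\Delta_i}_{\;=\;{\hat \nabla}^\texttt{CV}\mathcal L(\pi)} \;+\; \underbrace{\sum_{i=1}^n Y_i\,r_i}_{\;=:\;R}.
\end{align}

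Next I would quantify $r_i$. The hypothesis that $\mse(\theta(D_n))$ admits a convergent Taylor (Von Mises) expansion about $\mathbb D$ means the series in \ref{eqn:LOOmain} has radius of convergence bounded away from zero, so for $n$ large the evaluation point $\epsilon=-1/n$ lies strictly inside it and the tail is dominated by its first omitted term; this gives $|r_i| = O(n^{-(K+1)})$ pointwise and $\|r_i\|_{L^2} = O(n^{-(K+1)})$. The bias claim then follows immediately: by Theorem~\ref{thm:cv-reinforce}, $\E[{\hat \nabla}^\texttt{CV}\mathcal L(\pi)] = \nabla\mathcal L(\pi)$, hence $\E[{\hat \nabla}^\texttt{IF}\mathcal L(\pi)] = \nabla\mathcal L(\pi) + \E[R]$, and since $\|Y_i\|_{L^\infty}\le C$,
\begin{align}
    \big|\E[R]\big| \;\le\; \sum_{i=1}^n \E\big[|Y_i|\,|r_i|\big] \;\le\; n\,C\cdot O\!\left(n^{-(K+1)}\right) \;=\; O\!\left(n^{-K}\right).
\end{align}

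For the variance, apply $\var(A+B)\le 2\var(A)+2\var(B)$ with $A = {\hat \nabla}^\texttt{CV}\mathcal L(\pi)$ and $B=R$. Theorem~\ref{thm:cv-reinforce} supplies $\var({\hat \nabla}^\texttt{CV}\mathcal L(\pi)) \le C^2 n^2 \|\Delta_1\|_{L^2}^2$. For $\var(R)$ I would reuse the convexity step from the proof of Theorem~\ref{thm:cv-reinforce}: using symmetry of $D_n$ so that all $r_i$ are identically distributed, $\var(R) = n^2\var\!\big(\tfrac1n\sum_i Y_i r_i\big) \le n\sum_i \var(Y_i r_i) \le n\sum_i \E[Y_i^2 r_i^2] \le n^2 C^2 \,\|r_1\|_{L^2}^2 = O\!\left(n^{2}\cdot n^{-2(K+1)}\right) = O(n^{-2K})$. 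Combining the two contributions and noting $n^{-2K}\le n^{2(1-K)}$ yields $\var({\hat \nabla}^\texttt{IF}\mathcal L(\pi)) = O(n^2\|\Delta_1\|_{L^2}^2 + n^{2(1-K)})$; under $n^{-1}$-leave-one-out stability ($n\|\Delta_1\|_{L^2}=O(1)$) and $K\ge1$ both terms are $O(1)$.

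The main obstacle is the middle step: converting the qualitative assumption ``$\mse(\theta(D_n))$ admits a convergent Taylor series expansion'' into the quantitative tail bound $\|r_i\|_{L^2} = O(n^{-(K+1)})$, uniformly enough in $S_i$ and over the data randomness for the subsequent $L^\infty$ and Cauchy--Schwarz steps to apply. Concretely this needs a lower bound on the radius of convergence that does not shrink with $n$ --- equivalently a geometric-type control such as $|\mathcal I^{(k)}_\mse(S_i)| \le M^{k} k!$ on the higher-order influence functions --- after which the tail from $k=K+1$ onward is a geometric-like series summing to $O(n^{-(K+1)})$. Everything else is bookkeeping around the decomposition above and direct appeals to Theorem~\ref{thm:cv-reinforce}.
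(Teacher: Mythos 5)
Your proposal is correct and follows essentially the same route as the paper: identify the truncated Von Mises sum $\mathcal I_\mse(S_i)$ with the exact leave-one-out difference $\Delta_i$ plus a Taylor remainder, control the remainder via the convergent-expansion assumption, and then recycle the Jensen/variance argument from Theorem~\ref{thm:cv-reinforce}. If anything your bookkeeping is slightly more careful than the paper's --- tracking the per-sample remainder as $O(n^{-(K+1)})$ (first omitted term) rather than the paper's looser $O(n^{-K})$ is exactly what is needed for the bias to come out as $O(n^{-K})$ after summing $n$ terms against bounded $Y_i$ --- and you correctly flag that the remaining gap (turning ``convergent Taylor series'' into a quantitative, uniform tail bound) is the same one the paper leaves implicit.
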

\begin{proof}
Let $\{Y_i\}_{i=1}^n$ be i.i.d random variables, and let $\Delta_i$ be a variable dependent on all of $\{Y_i\}_{i=1}^n$. 
\begin{align}
    Z_i &=  \nabla \log \pi(S_i) \tilde \Delta_i 
    \\
    \tilde \Delta_i &= \sum_{k=1}^K \frac{(-1/n)^k}{k!} \mathcal I_{\mse}^{(k)} = \Delta_i + O(n^{-K})
    \\
     \Delta_i &= \mse\br{\theta(\{S_j\}_{j=1}^n)} - \mse\br{\theta(\{S_j\}_{j=1, j\neq i}^n)}
    \\
    Y_i &= \nabla \log \pi(S_i) 
\end{align}
Notice $\{Y_i\}_{i=1}^n$ are i.i.d random variables, and $\tilde \Delta_i$ is dependent on all of $\{Y_i\}_{i=1}^n$.  The bias follows from standard results on $K$-th order remainder term in a convergent Taylor series expansion.
Finally, it can be observed that $\var\br{\sum Z_i}$ follows the exact same analysis as done for ${\hat \nabla}^\texttt{CV} \mathcal L(\pi)$:
\begin{align}
  \var\br{{\hat \nabla}^\texttt{IF} \mathcal L(\pi)}  \leq n^2 C^2 \E[\tilde \Delta_1^2] \leq  n^2 C^2 2\E[(\Delta_1^2 + O(n^{-2K})] = O\left(n^2 \E[\Delta_1^2] + \frac{n^2}{n^{2K}}\right).
\end{align}
\end{proof}

\section{Estimation Error, Bias and Variance Analysis}
\label{apx:sec:rejection}

\subsection{Preliminaries on U-Statistics}
We first present a result on the variance of U-statistics with a growing order that primarily follows from recent prior work. For references see \cite{fan2018dnn,hoeffding1948class,wager2018estimation} and some concise course notes [\href{https://www.stat.cmu.edu/~arinaldo/Teaching/36755/F16/Scribed_Lectures/36755notes11-28.pdf}{A}, \href{https://ani.stat.fsu.edu/~debdeep/ustat.pdf}{B}, \href{https://www.math.ucla.edu/~tom/Stat200C/Ustat.pdf}{C}, \href{https://web.stanford.edu/class/stats300b/Slides/16-u-statistics.pdf}{D}, \href{https://www.stat.berkeley.edu/~bartlett/courses/2013spring-stat210b/notes/5notes.pdf}{E}].  With a slight abuse of notation, in the following let $Z_S$ represents a set of $S$ random variables (and is unrelated to notation for instrumental variables). Let $n$ be the number of random variables available and $k$ be the size of the selected random variables.
\begin{thm}[Variance of U-Statistics]\label{thm:U-stat}
    Consider any $k$-order U-statistic 
    $$U = \frac{1}{\binom{n}{k}} \sum_{S\subseteq [n]: |S|=k} h(Z_S)$$
    where $h$ is a symmetric function in its arguments. Let:
    \begin{align}
        \eta_c =~& \Var(\E[h(Z)\mid Z_{1:c}])
    \end{align}
    Then, we have that:
    \begin{align}\label{eqn:U-stat-var}
        \Var(U) = \binom{n}{k}^{-1}\sum_{c=1}^k \binom{k}{c} \binom{n-k}{k-c} \eta_c.
    \end{align}
    Moreover, one can show that:
    \begin{align}\label{eqn:U-stat-upper}
        \frac{k^2}{n} \eta_1 \leq \Var(U) \leq \frac{k^2}{n}\eta_1 + \frac{k^2}{n^2} \eta_k, 
    \end{align}
    and that
    \begin{align}
    \eta_1 \leq \frac{1}{k} \eta_k.
    \end{align}
\end{thm}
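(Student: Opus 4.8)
The plan is to route everything through the Hoeffding (ANOVA / H\'ajek) decomposition of the kernel $h$, from which the exact variance formula, the two-sided bound, and the monotonicity of the $\eta_c$ all fall out by bookkeeping. Write $h_c(z_{1:c}) := \E[h(Z_{1:k}) \mid Z_{1:c} = z_{1:c}]$ (so $h_0 = \mu := \E[h]$ and $h_k = h$), and define the canonical components by inclusion--exclusion, $g_c(z_{1:c}) := \sum_{T \subseteq \{1,\dots,c\}} (-1)^{c-|T|} h_{|T|}(z_T)$. A standard computation shows the $g_c$ are centered and \emph{orthogonal} in the strong sense that $\E[g_c(Z_T) \mid Z_{T'}] = 0$ whenever $T \not\subseteq T'$, and that $h(Z_{1:k}) = \mu + \sum_{c=1}^k \sum_{T\subseteq\{1,\dots,k\},\,|T|=c} g_c(Z_T)$. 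Setting $\sigma_c^2 := \Var(g_c(Z_{1:c}))$, orthogonality gives at once $\eta_c = \Var(h_c(Z_{1:c})) = \sum_{j=1}^c \binom{c}{j}\sigma_j^2$; in particular $\eta_1 = \sigma_1^2$ and $\eta_k = \sum_{j=1}^k \binom{k}{j}\sigma_j^2$.

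Next, regrouping the terms of $\binom{n}{k}^{-1}\sum_S h(Z_S)$ by symmetry of $h$ yields $U - \mu = \sum_{c=1}^k \binom{k}{c}\, U^{(c)}$ with $U^{(c)} := \binom{n}{c}^{-1}\sum_{|T|=c} g_c(Z_T)$; the $U^{(c)}$ are mutually orthogonal and $\Var(U^{(c)}) = \binom{n}{c}^{-1}\sigma_c^2$, so
\[
\Var(U) = \sum_{c=1}^k \binom{k}{c}^2 \binom{n}{c}^{-1} \sigma_c^2 .
\]
Substituting the $\sigma_c^2$ in terms of the $\eta_j$ (invert the triangular system above, equivalently use the hypergeometric factorial-moment identity $\E\!\big[\binom{C}{c}\big] = \binom{k}{c}^2/\binom{n}{c}$ for $C \sim \mathrm{Hypergeom}(n,k,k)$, noting that $\binom{k}{c}\binom{n-k}{k-c}/\binom{n}{k} = \Pr(C=c)$ is the probability that two uniformly random $k$-subsets of $[n]$ overlap in $c$ points) produces the stated exact formula \eqref{eqn:U-stat-var}. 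Alternatively \eqref{eqn:U-stat-var} follows directly: writing $\Var(U) = \binom{n}{k}^{-2}\sum_{S,S'} \Cov(h(Z_S),h(Z_{S'}))$, conditioning on the shared coordinates $Z_{S\cap S'}$ makes $h(Z_S)$ and $h(Z_{S'})$ conditionally independent, so $\Cov(h(Z_S),h(Z_{S'})) = \eta_{|S\cap S'|}$, and there are $\binom{n}{k}\binom{k}{c}\binom{n-k}{k-c}$ ordered pairs with $|S\cap S'| = c$.

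The two-sided bound \eqref{eqn:U-stat-upper} is immediate from the display. For the lower bound, drop the nonnegative $c\ge 2$ terms and keep $c=1$: $\Var(U) \ge \binom{k}{1}^2\binom{n}{1}^{-1}\sigma_1^2 = (k^2/n)\eta_1$. For the upper bound, isolate the $c=1$ term ($=(k^2/n)\eta_1$) and bound each $c\ge 2$ term by $\binom{k}{c}^2\binom{n}{c}^{-1}\sigma_c^2 \le (k^2/n^2)\binom{k}{c}\sigma_c^2$; this inequality is exactly $\binom{k}{c}/k^2 \le \binom{n}{c}/n^2$, i.e. monotonicity of $m\mapsto \binom{m}{c}/m^2$ for integers $m\ge c\ge 2$, which is elementary (its logarithmic derivative in $m$ is $\sum_{i=1}^{c-1}(m-i)^{-1} - m^{-1} > 0$, or more crudely $\binom{m}{c}\ge\binom{m}{2}$ by unimodality). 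Summing over $c$ and using $\sum_{c\ge 1}\binom{k}{c}\sigma_c^2 = \eta_k$ gives $\Var(U) \le (k^2/n)\eta_1 + (k^2/n^2)\eta_k$. Finally $\eta_1 \le \eta_k/k$ is immediate from $\eta_k = \sum_{j=1}^k \binom{k}{j}\sigma_j^2 \ge \binom{k}{1}\sigma_1^2 = k\,\eta_1$.

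The main obstacle is the first paragraph: carefully building the H\'ajek/ANOVA decomposition and verifying the orthogonality relations $\E[g_c(Z_T)\mid Z_{T'}]=0$ for $T\not\subseteq T'$ (hence $\Cov(g_c(Z_T),g_{c'}(Z_{T'}))=0$ unless $c=c'$, $T=T'$). Once that algebra is in place, the variance decomposition $\Var(U) = \sum_c \binom{k}{c}^2\binom{n}{c}^{-1}\sigma_c^2$, the exact formula, and both bounds are routine. One can bypass the ANOVA machinery for \eqref{eqn:U-stat-var} itself via the covariance-counting route, whose only non-bookkeeping ingredient is the conditional-independence lemma for $\Cov(h(Z_S),h(Z_{S'}))$; but the \emph{refined} upper bound with the $(k^2/n^2)\eta_k$ correction genuinely seems to require the component-wise representation $\sum_c\binom{k}{c}^2\binom{n}{c}^{-1}\sigma_c^2$.
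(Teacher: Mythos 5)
Your proof is correct, and its skeleton matches the paper's: the exact formula \eqref{eqn:U-stat-var} is obtained by the same covariance-counting argument (conditional independence given the shared coordinates gives $\Cov(h(Z_S),h(Z_{S'}))=\eta_{|S\cap S'|}$, and there are $\binom{n}{k}\binom{k}{c}\binom{n-k}{k-c}$ pairs with overlap $c$), and the bounds come from the same Hoeffding canonical decomposition with $\eta_k=\sum_j\binom{k}{j}\sigma_j^2\geq k\sigma_1^2=k\eta_1$. The one genuine difference is in the upper bound of \eqref{eqn:U-stat-upper}: the paper isolates the H\'ajek projection $U_1$, writes $\Var(U)=\Var(U_1)+\E[(U-\E U-U_1)^2]$, and controls the residual by invoking an inequality from \cite{wager2017estimation} to get the uniform factor $\frac{k(k-1)}{n(n-1)}$ on the $c\geq 2$ terms; you instead work directly from the full componentwise identity $\Var(U)=\sum_c\binom{k}{c}^2\binom{n}{c}^{-1}\sigma_c^2$ and bound each $c\geq 2$ term via the elementary monotonicity of $m\mapsto\binom{m}{c}/m^2$. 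The two bounds on the coefficients are essentially equivalent in strength, but your version is self-contained, which is a small gain. Two cosmetic remarks: your parenthetical fallback ``$\binom{m}{c}\geq\binom{m}{2}$ by unimodality'' is false for $c$ near $m$ (e.g.\ $\binom{5}{4}<\binom{5}{2}$) and does not by itself yield the needed monotonicity, so you should delete it and rely on the ratio/log-derivative argument, which is valid; and the detour through the hypergeometric factorial-moment identity to recover \eqref{eqn:U-stat-var} from the $\sigma_c^2$ representation is a nice consistency check but unnecessary given that you also supply the direct counting proof.
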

These results, for instance, follow from the classical Hoeffding's canonical decomposition of a U-statistic and an inequality due to \cite{wager2018estimation}, to upper bound the non-leading term, for every $k,n,\eta_{1},\ldots,\eta_k$ not just asymptotically and for fixed $k,\eta_{1},\ldots,\eta_k$, and letting $n\to \infty$ as in Hoeffding's classical analysis. The proof is included in Appendix~\ref{app:u-stat-proof}.

Typically, due to computational considerations, one considers an incomplete version of a U-statistic, where instead of calculating all possible subsets of size $k$, we draw $B$ random subsets $\{S_1, \ldots, S_b\}$, each of size $k$, uniformly at random among all subsets of size $k$. Then we define the incomplete U-statistic approximation as:
\begin{align}
    \hat{U} = \frac{1}{B} \sum_{b=1}^B h(Z_{S_b})
\end{align}
We can also derive the following finite sample variance characterization for an incomplete U-statistic:
\begin{lemma}\label{lem:inc-U-stat}
    The variance of the incomplete U-statistic $\hat{U}$ is:
    \begin{align}
    \Var(\hat{U}) = \left(1 - \frac{1}{B}\right) \Var(U) + \frac{1}{B} \eta_k.
\end{align}
\end{lemma}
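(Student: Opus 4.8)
The plan is to compute $\Var(\hat{U})$ directly, by expanding the square, and exploiting two structural facts: the randomly drawn index sets $S_1,\dots,S_B$ are i.i.d.\ (and drawn uniformly among all size-$k$ subsets of $[n]$) and independent of the data $Z_{1:n}$, while the kernel $h$ is symmetric in its arguments.

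First I would fix notation: set $\mu \coloneqq \E[h(Z_{1:k})] = \E[U]$. Since each $S_b$ is uniform over the $\binom{n}{k}$ subsets of size $k$, symmetry gives $\E[h(Z_{S_b})] = \mu$, so $\E[\hat{U}] = \mu$ and hence $\Var(\hat{U}) = \frac{1}{B^2}\sum_{b,b'=1}^{B}\Cov\!\big(h(Z_{S_b}),\,h(Z_{S_{b'}})\big)$. I would then handle the diagonal and off-diagonal terms separately. For $b = b'$: conditioning on $Z_{1:n}$ and using symmetry of $h$ yields $\E[h(Z_{S_b})^2] = \E[h(Z_{1:k})^2]$, so $\Var(h(Z_{S_b})) = \Var(h(Z_{1:k})) = \eta_k$, where the last equality uses that $\E[h(Z)\mid Z_{1:k}] = h(Z_{1:k})$ because $h(Z_{1:k})$ is $Z_{1:k}$-measurable. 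For $b \neq b'$: since $S_b$ and $S_{b'}$ are independent given $Z_{1:n}$, we get $\E[h(Z_{S_b})h(Z_{S_{b'}})\mid Z_{1:n}] = \big(\E_S[h(Z_S)\mid Z_{1:n}]\big)^2$, and the inner conditional expectation over a uniformly random subset $S$ of size $k$ is exactly the complete U-statistic $U = \binom{n}{k}^{-1}\sum_{|S|=k} h(Z_S)$. Taking outer expectations, $\Cov(h(Z_{S_b}),h(Z_{S_{b'}})) = \E[U^2] - \mu^2 = \Var(U)$.

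Finally I would assemble the count: there are $B$ diagonal terms, each contributing $\eta_k$, and $B(B-1)$ off-diagonal terms, each contributing $\Var(U)$, so that $\Var(\hat{U}) = \frac{1}{B^2}\big(B\,\eta_k + B(B-1)\Var(U)\big) = \big(1 - \tfrac{1}{B}\big)\Var(U) + \tfrac{1}{B}\,\eta_k$, as claimed.

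There is no real analytic obstacle here; the one step worth stating carefully is the identification $\E_S[h(Z_S)\mid Z_{1:n}] = U$ together with the bookkeeping of the probability space — distinguishing the randomness in the $S_b$'s (i.i.d., and independent of everything else) from the shared randomness in the data $Z_{1:n}$ — which is what makes the off-diagonal covariance collapse to $\Var(U)$ rather than to something data-dependent.
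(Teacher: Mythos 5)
Your proof is correct. The only difference from the paper is organizational: you expand $\Var(\hat{U})$ as the double sum $\frac{1}{B^2}\sum_{b,b'}\Cov(h(Z_{S_b}),h(Z_{S_{b'}}))$ and evaluate the $B$ diagonal terms (each equal to $\eta_k$) and the $B(B-1)$ off-diagonal terms (each equal to $\Var(U)$) separately, whereas the paper applies the law of total variance twice --- first to write $\Var(\hat{U}) = \Var(U) + \E[\Var(\hat{U}\mid Z_{1:n})]$ using $\E[\hat{U}\mid Z_{1:n}]=U$, and then again to the single-draw variance $\eta_k = \Var(h(Z_S)) = \Var(U) + \E[\Var(h(Z_S)\mid Z_{1:n})]$ in order to eliminate the conditional-variance term. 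Both arguments hinge on exactly the same two facts you isolate at the end: the $S_b$ are i.i.d.\ given $Z_{1:n}$, and $\E[h(Z_S)\mid Z_{1:n}] = U$. Your route is marginally more elementary (no conditional-variance bookkeeping), while the paper's phrasing makes the ``between-data'' versus ``between-subsets'' variance split explicit, which is the form it reuses downstream; the two are otherwise interchangeable.
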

\begin{proof}
Note that $\hat{U}$ is an unbiased estimate of $U$, conditional on the samples, i.e.:
\begin{align}
\E[\hat{U}\mid Z_{1:n}] = U.
\end{align}
Moreover, the variance of an incomplete U-statistic can be decomposed by the law of total variance, as:
\begin{align}
    \Var(\hat{U}) = \Var(\E[\hat{U}\mid Z_{1:n}]) + \E[\Var(\hat{U}\mid Z_{1:n})] = \Var(U) + \E[\Var(\hat{U}\mid Z_{1:n})].\label{eqn:u1}
\end{align}
Note that, conditional on $Z_{1:n}$, the incomplete statistic $\hat{U}$ is the mean of i.i.d. samples of $h(Z_{S_b})$, where $S$ is a random subset of the original $n$ samples of size $k$. Thus:
\begin{align}
    \Var(\hat{U}\mid Z_{1:n}) = \frac{1}{B} \Var(h(Z_S)\mid Z_{1:n}). \label{eqn:u2}
\end{align}
Combining \ref{eqn:u1} and \ref{eqn:u2},
\begin{align}
    \Var(\hat{U}) = \Var(U) + \frac{1}{B} \E[\Var(h(Z_S)\mid Z_{1:n})]. 
\end{align}
Moreover, note that the un-conditional variance of the first $k$ samples, is equal to the unconditional variance of a random $k$ subset of the $n$ samples, since we can equivalently think of drawing $k$ i.i.d. samples, by first drawing $n$ samples and then choosing a random subset of them. 
\begin{align}
    \Var(h(Z_{1:k})) = \Var(h(Z_{S})) = \Var(\E[h(Z_S)\mid Z_{1:n}]) + \E[\Var(h(Z_S)\mid Z_{1:n})].
\end{align}
Moreover, note that by the definition of the random sample $S$ and the U-statistic:
\begin{align}
    \E[h(Z_S)\mid Z_{1:n}] = U \implies \Var(\E[h(Z_S)\mid Z_{1:n}]) = \Var(U).
\end{align}
Thus we get:
\begin{align}
    \Var(\hat{U}) =  \Var(U) + \frac{1}{B} \Var(h(Z_{1:k})) - \frac{1}{B} \Var(U) = \left(1 - \frac{1}{B}\right) \Var(U) + \frac{1}{B} \eta_k.
\end{align}

\end{proof}

\subsection{Variance of IS procedure}

Observe that for the full $n$ sample size, the importance sampling based estimate of $\mse$ is
\begin{align}
   \mse_n^{\texttt{IS}}&= \!\!\!\!\underset{D_n \sim \{\pi_{i}\}_{i=1}^n} {\mathbb E}\left[\left(\prod_{i=1}^n \frac{\Pr(D_n;\pi)}{\Pr(D_n;\pi_i)}\right)\text{MSE}(\theta(D_n)) \right] \overset{(a)}{=}\!\!\!\! \underset{D_n \sim \{\pi_{i}\}_{i=1}^n} {\mathbb E}\left[\left(\prod_{i=1}^n \frac{\pi(Z_i|X_i)}{\pi_i(Z_i|X_i)}\right)\text{MSE}(\theta(D_n)) \right] \label{eqn:isform}
\end{align}

where (a) follows from expanding $\Pr(D_n;\pi)$ and $\Pr(D_n;\pi_i)$ using \ref{eqn:jointprob}. Let $\mse^\texttt{IS}_k $ be the estimate of $\mse$ for a subset dataset of size $k$, obtained using importance sampling and a complete enumeration over all subsets of $n$ of size $k$. Similarly, let $\widehat{\mse}^\texttt{IS}_k$ denote it's computationally friendly approximation that draws $B$ subsets of size $k$ uniformly at random.

 Moreover, for simplicity, we will assume here, that given a sub-sample of size $k$ we can exactly calculate the MSE of the estimate that is produced from that sub-sample. Since our goal is to showcase the poor behavior of the IS approach, it suffices to showcase such a behavior in this ideal scenario.
\begin{thm}\label{thm:is-variance}
If $\|\mse\br{\theta(D_k)}\|_{L^2}=\alpha(k)$ 
then
\begin{align}
    \Var(\mse^\texttt{IS}_k) \leq~& \alpha(k)^2 \binom{n}{k}^{-1}\sum_{c=1}^k \binom{k}{c} \binom{n-k}{k-c} \rho_{\max}^c\\
    \leq~&\alpha(k)^2  \left(\frac{k^2}{n} \rho_{\max} + \frac{k^2}{n^2} \rho_{\max}^k\right). \\
    \Var(\widehat{\mse}^\texttt{IS}_k) \leq~& \alpha(k)^2  \left(\frac{k^2}{n} \rho_{\max} + \frac{k^2}{n^2} \rho_{\max}^k + \frac{1}{B} \rho_{\max}^k\right).
\end{align}
Moreover, for any deterministic data generating process without covariates, and for any deterministic policy $\pi$, the variance can be exactly characterized as:
\begin{align}
    \Var(\mse^\texttt{IS}_k) =~& \alpha(k)^2 \binom{n}{k}^{-1}\sum_{c=1}^k \binom{k}{c} \binom{n-k}{k-c} (\rho_{\max}^c - 1).\\
    \Var(\widehat{\mse}^\texttt{IS}_k) =~& \left(1 - \frac{1}{B}\right) \Var(\mse^\texttt{IS}_k) + \alpha(k)^2 \frac{1}{B} \left(\rho_{\max}^k - 1\right).
\end{align}
\end{thm}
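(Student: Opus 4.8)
The plan is to recognize $\mse_k^{\texttt{IS}}$ as a complete U-statistic of order $k$, apply Theorem~\ref{thm:U-stat}, and control the Hoeffding projection terms $\eta_c$ by $\rho_{\max}^c\,\alpha(k)^2$. Write $\rho(S_i)=\pi'(Z_i|X_i)/\pi_i(Z_i|X_i)$ for the per-sample importance ratio and, for $S\subseteq[n]$ with $|S|=k$, set $h(Z_S)\coloneqq\big(\prod_{i\in S}\rho(S_i)\big)\,\mse\br{\theta(D_S)}$. Since $\theta(D_S)$ is symmetric in its samples (assumed throughout) and the product of ratios is symmetric, $h$ is a symmetric kernel, and $\mse_k^{\texttt{IS}}=\binom{n}{k}^{-1}\sum_{|S|=k}h(Z_S)$ is exactly a U-statistic; for the variance decomposition we treat $S_1,\dots,S_n$ as i.i.d.\ draws from the (possibly mixture) behavior policy, the idealized regime under discussion. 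Theorem~\ref{thm:U-stat} then gives $\Var(\mse_k^{\texttt{IS}})=\binom{n}{k}^{-1}\sum_{c=1}^k\binom{k}{c}\binom{n-k}{k-c}\eta_c$ together with $\Var(\mse_k^{\texttt{IS}})\le\frac{k^2}{n}\eta_1+\frac{k^2}{n^2}\eta_k$, so everything reduces to bounding $\eta_c=\Var\big(\E[h(Z_{1:k})\mid Z_{1:c}]\big)$.

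The main step is the bound $\eta_c\le\rho_{\max}^c\,\alpha(k)^2$. Starting from $\eta_c\le\E\big[(\E[h\mid Z_{1:c}])^2\big]$, factor $\E[h\mid Z_{1:c}]=\big(\prod_{i\le c}\rho(S_i)\big)\,\psi(Z_{1:c})$ with $\psi(Z_{1:c})\coloneqq\E\big[\prod_{c<i\le k}\rho(S_i)\cdot\mse(\theta(D_{1:k}))\mid Z_{1:c}\big]$. Changing measure over coordinates $c+1,\dots,k$ (each ratio reweights one i.i.d.\ coordinate from behavior to target) gives $\psi(Z_{1:c})=\E_{Z_{c+1:k}\sim\pi'}[\mse(\theta(D_{1:k}))\mid Z_{1:c}]$, hence $\psi(Z_{1:c})^2\le\E_{Z_{c+1:k}\sim\pi'}[\mse(\theta(D_{1:k}))^2\mid Z_{1:c}]$ by Jensen. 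Now write $\big(\prod_{i\le c}\rho(S_i)\big)^2=\big(\prod_{i\le c}\rho(S_i)\big)\cdot\big(\prod_{i\le c}\rho(S_i)\big)$ and absorb one of the two products into a second change of measure over coordinates $1,\dots,c$:
\[
\eta_c\le\E_{Z_{1:c}\sim\pi_b}\Big[\big(\textstyle\prod_{i\le c}\rho(S_i)\big)^2\psi(Z_{1:c})^2\Big]=\E_{Z_{1:c}\sim\pi'}\Big[\big(\textstyle\prod_{i\le c}\rho(S_i)\big)\psi(Z_{1:c})^2\Big]\le\rho_{\max}^c\,\E_{Z_{1:c}\sim\pi'}[\psi(Z_{1:c})^2],
\]
and the last expectation is at most $\E_{Z_{1:k}\sim\pi'}[\mse(\theta(D_{1:k}))^2]=\alpha(k)^2$. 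Plugging $\eta_c\le\rho_{\max}^c\alpha(k)^2$ into the exact U-statistic formula yields the first displayed bound; plugging $\eta_1\le\rho_{\max}\alpha(k)^2$ and $\eta_k\le\rho_{\max}^k\alpha(k)^2$ into $\frac{k^2}{n}\eta_1+\frac{k^2}{n^2}\eta_k$ yields the second. For the incomplete statistic with $B$ random subsets, Lemma~\ref{lem:inc-U-stat} gives $\Var(\widehat{\mse}_k^{\texttt{IS}})=(1-\tfrac1B)\Var(\mse_k^{\texttt{IS}})+\tfrac1B\eta_k\le\Var(\mse_k^{\texttt{IS}})+\tfrac1B\rho_{\max}^k\alpha(k)^2$, which combined with the previous bound gives the third display.

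For the exact characterization in the deterministic, covariate-free setting, note that with no covariates and a deterministic data-generating process a sample $S_i$ is determined by $Z_i$ alone, and with a deterministic target policy $\pi'$ the kernel $h(Z_S)$ vanishes unless every selected instrument equals the single instrument $z^\star$ played by $\pi'$, in which case $D_S$ is a fixed dataset and $\mse(\theta(D_S))=\alpha(k)$ is constant; thus $h(Z_{1:k})=\alpha(k)\prod_{i=1}^k\rho(S_i)$. Since $\E_{\pi_b}[\rho(S_i)]=1$ and $\rho(S_i)\in\{0,\rho_{\max}\}$ forces $\E_{\pi_b}[\rho(S_i)^2]=\rho_{\max}$, independence gives $\E[h\mid Z_{1:c}]=\alpha(k)\prod_{i\le c}\rho(S_i)$ and therefore $\eta_c=\alpha(k)^2\big(\rho_{\max}^c-1\big)$. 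Substituting this exact value into the variance formula of Theorem~\ref{thm:U-stat} and into Lemma~\ref{lem:inc-U-stat} produces the two claimed equalities. The only delicate point is the double change-of-measure that turns the naive $\rho_{\max}^{2c}$ coming from $\big(\prod_{i\le c}\rho\big)^2$ into $\rho_{\max}^c$; everything else is bookkeeping with the known U-statistic identities.
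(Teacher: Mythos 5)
Your proposal is correct and follows essentially the same route as the paper: the same importance-weighted U-statistic kernel, the same single change-of-measure that converts $\rho_{1:c}^2$ under the behavior policy into $\rho_{1:c}$ under the target policy (which is exactly how the paper avoids the naive $\rho_{\max}^{2c}$), the same appeal to Theorem~\ref{thm:U-stat} and Lemma~\ref{lem:inc-U-stat}, and the same deterministic covariate-free construction giving $\eta_c=\alpha(k)^2(\rho_{\max}^c-1)$ exactly. Your explicit computation $\E_{\pi_b}[\rho(S_i)^2]=\rho_{\max}$ in the deterministic case is just a more spelled-out version of the paper's observation that its inequalities become equalities there.
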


Even though the lower bound formula for the complete U-statistic is hard to parse in terms of a rate, we see that for many reasonable values of $n$ and $k$, its dependence in $\rho_{\max}$ is exponential (see Figure~\ref{fig:is-variance}). Moreover, it is obvious that for any finite $B$ (which would be the typical application), the dependence in $\rho_{\max}$ is exponential in $k$, due to the extra variance stemming from the random sampling of subsets.

\begin{proof}
Let $n$ be the total number of samples. Let $k = o(n)$ be the desired number of samples. To consider analysis with all possible combination $C(n,k)$, we will construct a U-estimator. Let the $h$ be the kernel for the importance weighted base estimator, and $D_k= \{S_i\}_{i=1}^k$,
\begin{align}
    h(\{S_i\}_{i=1}^k) &=  \rho_{1:k} \mse(\theta(D_k)),
    \\
    \rho_{1:k} &= \prod_{i=1}^k \rho(S_i).
\end{align}
The U-statistic corresponding to this kernel is
\begin{align}
    U &= \frac{k! (n-k)!}{n!} \sum_{i_1< i_2< ...<i_k} h\br{\{S_{i_j}\}_{j=1}^k}.
\end{align}
For this kernel, we can derive that:
\begin{align}
    \E[h(Z)\mid Z_{1:c}] =~& \E\left[\rho_{1:k} \mse(\theta(D_k))\mid S_{1:c}\right]\\
    =~& \E\left[\rho_{c+1:k} \mse(\theta(D_k))\mid S_{1:c}\right] \rho_{1:c}\\
    =~& \E_{\pi}\left[\mse(\theta(D_k))\mid S_{1:c}\right] \rho_{1:c}.
\end{align}
Thus we have that:
\begin{align}
    \eta_c =~& \var \left(\E[h(Z)|Z_{1:c}] \right)
    \\
    =~& \E\left[\E_{\pi}\left[\mse(\theta(D_k))\mid S_{1:c}\right]^2 \rho_{1:c}^2\right] - \E_{\pi}[\mse(\theta(D_k))]^2\\
    =~& \E_{\pi}\left[\E_{\pi}\left[\mse(\theta(D_k))\mid S_{1:c}\right]^2 \rho_{1:c}\right] - \E_{\pi}[\mse(\theta(D_k))]^2 \\
    \leq~& \E_{\pi}\left[\E_{\pi}\left[\mse(\theta(D_k))^2\mid S_{1:c}\right] \rho_{1:c}\right] - \E_{\pi}[\mse(\theta(D_k))]^2. \label{eqn:180}\\
    \leq~& \E_{\pi}\left[\E_{\pi}\left[\mse(\theta(D_k))^2\mid S_{1:c}\right] \right] \rho_{\max}^c - \E_{\pi}[\mse(\theta(D_k))]^2.\label{eqn:181}\\
    =~& \E_{\pi}\left[\mse(\theta(D_k))^2\right] \rho_{\max}^c - \E_{\pi}[\mse(\theta(D_k))]^2. \label{eqn:182}
\end{align}

\paragraph{Upper bound} From this we can derive the upper bound:
\begin{align}
    \eta_c \leq~& \E_{\pi}\left[\mse(\theta(D_k))^2 \right] \rho_{\max}^c = \alpha(k)^2\rho_{\max}^c.
\end{align}
Which, by invoking Theorem~\ref{thm:U-stat} yields the upper bound:
\begin{align}
    \Var(U) \leq \alpha(k)^2 \binom{n}{k}^{-1}\sum_{c=1}^k \binom{k}{c} \binom{n-k}{k-c} \rho_{\max}^c.
\end{align}
Moreover, from the looser upper bounds in Theorem~\ref{thm:U-stat}, we also get that:
\begin{align}
    \Var(U) \leq \alpha(k)^2 \left(\frac{k^2}{n} \rho_{\max} + \frac{k^2}{n^2} \rho_{\max}^k\right) \label{eqn_varu_looser}.
\end{align}
Then, by invoking Lemma~\ref{lem:inc-U-stat}, we have that the incomplete version of our estimate with $B$ random sub-samples of size $k$ has:
\begin{align}
\Var(\hat{U}) &= \left(1 - \frac{1}{B}\right) \Var(U) + \frac{1}{B}\eta_k\\
 &\leq \left(1 - \frac{1}{B}\right) \Var(U) + \frac{1}{B}\alpha(k)^2 \rho_{\max}^k \\
&\leq \alpha(k)^2 \left(\frac{k^2}{n} \rho_{\max} + \frac{k^2}{n^2} \rho_{\max}^k  + \frac{1}{B} \rho_{\max}^k\right) .
\end{align}
where the first inequality comes by upper bounding  Equation~\ref{eqn:182} and using our assumption in the Theorem that $\E_{\pi}\left[\mse(\theta(D_k))^2 \right] = \alpha(k)^2$, and the second inequality comes from substituting in Equation~\ref{eqn_varu_looser}.

\paragraph{Lower bound} In general it is hard to characterize the lower bound. However, in the following, we will show that for a certain class of problems, the lower bound on the variance from using the importance sampling procedure can grow exponentially.  Specifically, for this theoretical analysis, we consider the case where there are no covariates, all aspects of the data generating process is deterministic, and the policy $\pi$ is also deterministic. Figure \ref{fig:samplervar} (Right) provides an empirical evidence of exponential growth of variance for a more general setting with both stochastic policy and a stochastic data generating process.

\begin{figure}
    \centering
    \includegraphics[width=0.5\textwidth]{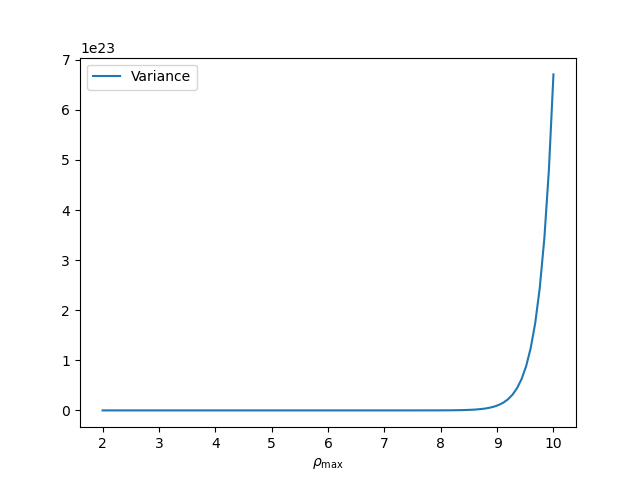}
    \caption{Behavior of complete U-statistic variance for $n=1000$, $k=100$ as a function of $\rho_{\max}$.}\label{fig:is-variance}
\end{figure}
Notice that \pref{eqn:182} follows with equality  when steps \pref{eqn:180} and \pref{eqn:181} follow with equality. This holds when the entire data generating process is deterministic and the policy $\pi$ is also deterministic.
Further, since there are no covariates and that $\pi$ is a deterministic policy, then $\rho_i=\rho_{\max}$ almost surely when $S_i$ is drawn from $\pi$. Therefore,
\begin{align}
    \eta_c 
    =~& \E_{\pi}\left[\mse(\theta(D_k))^2 \right] \rho_{\max}^c - \E_{\pi}\left[\mse(\theta(D_k) \right]^2 \label{eqn:188}\\
    =~& \E_{\pi}\left[\mse(\theta(D_k))^2 \right] \left(\rho_{\max}^c - 1\right)  \\
    =~& \alpha(k)^2 (\rho_{\max}^c - 1), \label{eqn:eta_k_lower}
\end{align}
where the last equality holds because $\pi$ is deterministic, and by our assumption in the Theorem, $\E_{\pi}\left[\mse(\theta(D_k))^2 \right] = \alpha(k)^2.$
Moreover, from Theorem~\ref{thm:U-stat}, the variance of the U-statistic is:
\begin{align}
    \Var(U) = \alpha(k)^2 \binom{n}{k}^{-1}\sum_{c=1}^k \binom{k}{c} \binom{n-k}{k-c} (\rho_{\max}^c - 1).
\end{align}
Similarly, using Lemma~\ref{lem:inc-U-stat}, the variance of the incomplete U-statistic is:
\begin{eqnarray}
    \Var(\hat{U}) &=& \left(1 - \frac{1}{B}\right) \Var(U) +  \frac{1}{B} \eta_k \\
    &=& \left(1 - \frac{1}{B}\right) \Var(U) + \alpha(k)^2 \frac{1}{B} \left(\rho_{\max}^k - 1\right),
\end{eqnarray}
where in the second equality we have used Equation~\ref{eqn:eta_k_lower}.
\end{proof}

\subsection{Guarantees for Rejection Sampling Procedure}
We will analyze the bias and variance of the estimate of the mean squared error of the rejection sampling procedure. Almost identical analysis also applies to the analysis of the bias and variance of policy gradient estimates based on the rejection sampling procedure, but we omit it for conciseness.

Let $n$
 be the total number of available samples. Let $\mse^\texttt{RS}_k $ be the estimate of $\mse$ for a subset dataset of size $k$, obtained using rejection sampling. This estimates first accepts a random subset of size $N'$ and if $N'<k$ returns $0$, otherwise if $N'\geq k$, returns the estimate that goes over all subsets of size $k$ of the $N'$ accepted samples:
 \begin{align}
     \mse^\texttt{RS}_k &= \frac{1}{c} \sum_{i=1}^c \widehat{\mse}(\theta(\{S_{i_j}\}_{j=1}^k)) & c= \binom{N'}{k} \label{eqn:msers}
     \\
      \widehat{\mse}(\theta(D_k)) &= \frac{1}{M}\sum_{i=1}^M \br{f(X_i,A_i;\theta(D_n)) -  f(X_i,A_i;\theta(D_k))}^2,
 \end{align} 
 where $\widehat{\mse}(\theta(D_k))$ is the estimate of the mean-squared error produced by the samples in $D_k$, (same as \ref{eqn:mseproxy}) and    
 where $\{X_i, A_i\}_{i=1}^M$ are held-out samples of covariates and treatments (outcomes for these are not available), which is not used in the estimation of $\theta$, i.e. is separate from $D_n$. We will assume throughout that $M>>n$.
 
 We will also consider an alternative estimate $\widehat{\mse}^\texttt{RS}_k$ that is the computationally friendly approximation that draws $B$ subsets of size $k$ uniformly at random from the $N'$ accepted samples, instead of performing complete enumeration over subsets of size $k$ done for ${\mse}^\texttt{RS}_k$ .

\begin{thm}[Consistency of Rejection Sampling]\label{thm:main} Consider the rejection sampling estimate $\mse^\texttt{RS}_k$ as defined in Equation~(\ref{eqn:msers}) and its incomplete U-statistic variant $\widehat{\mse}^\texttt{RS}_k$. For any dataset $D_n$ of iid samples of si`ze $n$ drawn from policy $\pi$, define:
\begin{align}
\alpha(n) := \|\mse\br{\theta(D_n)}\|_{L^2}
\end{align}
Assume that $k\leq n/2$ and $n$ is large enough, such that $n/\log(n)\geq 8\rho_{\max}$. Then:
\begin{align}
\var\br{\mse^\texttt{RS}_k} \leq~& O\left(\alpha(k)^2 \frac{k\rho_{\max}}{n} + \frac{1}{M} + \alpha(n)\right),\\
\textsc{Bias}\br{\mse^\texttt{RS}_k}\leq~& O\left(\alpha(k) e^{-n/8\rho_{\max}} + \frac{1}{\sqrt{M}} + \sqrt{\alpha(n)}\right),
\end{align}
and similarly for the incomplete variant:
\begin{align}
\var\br{\widehat{\mse}^\texttt{RS}_k} \leq~& O\left(\alpha(k)^2 \frac{k\rho_{\max}}{n} + \alpha(k)^2 \frac{1}{B} + \frac{1}{M} + \alpha(n)\right),\\
\textsc{Bias}\br{\widehat{\mse}^\texttt{RS}_k}\leq~& O\left(\alpha(k) e^{-n/8\rho_{\max}} + \frac{1}{\sqrt{M}} + \sqrt{\alpha(n)}\right).
\end{align}
Moreover, the probability of failure, i.e. $\Pr(N'<k)$, is bounded as: $e^{-n/(8\rho_{\max})}$.
\end{thm}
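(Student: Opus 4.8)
The plan is to control the random accepted-sample count $N'$, show that conditioning on the event that enough samples are accepted turns $\mse^{\texttt{RS}}_k$ into a U-statistic over i.i.d.\ draws from the target policy $\pi'$, and then read off bias and variance from Theorem~\ref{thm:U-stat} and Lemma~\ref{lem:inc-U-stat}, separately absorbing the held-out Monte-Carlo error (the $1/M$ terms) and the plug-in error from using $\theta(D_n)$ in place of $\theta_0$ (the $\alpha(n)$ terms). First I would note that, marginally over the data, the acceptance indicators $B_1,\dots,B_n$ are i.i.d.\ $\mathrm{Bernoulli}(1/\rho_{\max})$, since $\E_{\pi}[\rho(S)] = \E_X\sum_z \pi'(z\mid X) = 1$; hence $N' \sim \mathrm{Binomial}(n, 1/\rho_{\max})$ with mean $n/\rho_{\max}$. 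A multiplicative Chernoff bound then gives $\Pr(N' < k) \le \Pr\!\big(N' \le \tfrac12\E[N']\big) \le e^{-n/(8\rho_{\max})}$ under the stated smallness of $k$ and $\rho_{\max}$, which is exactly the claimed failure bound. Because $\mse^{\texttt{RS}}_k$ returns $0$ when $N' < k$, the failure event $\mathcal E^c = \{N' < k\}$ contributes at most $\E_{\pi'}[\mse(\theta(D_k))]\,\Pr(\mathcal E^c) \le \alpha(k)\,e^{-n/(8\rho_{\max})}$ to the bias and nothing extra to the second moment, so from here on one may work on $\mathcal E$.

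The structural heart of the argument is that rejection sampling is correct despite the unobserved confounder $U$: conditionally on being accepted, $S_i$ has the observable law induced by $\pi'$. Indeed, $\Pr(S \in \cdot \mid \text{accepted}) = \rho_{\max}\,\E_\pi[\mathbf{1}_{\{S\in\cdot\}}\,\rho(S)/\rho_{\max}] = \E_\pi[\mathbf{1}_{\{S\in\cdot\}}\,\rho(S)]$, and multiplying the factorization in \ref{eqn:jointprob} by $\rho(S) = \pi'(Z\mid X)/\pi(Z\mid X)$ turns $\Pr_\pi$ into $\Pr_{\pi'}$ before marginalizing $U$, so the purely observable ratio $\rho$ performs the correct re-weighting even though $U$ is never touched. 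By exchangeability of the $(S_i,\xi_i)$ pairs, conditional on $\mathcal E$ and on the value of $N'$ the accepted samples are i.i.d.\ draws from $\pi'$, and a uniformly random size-$k$ subset of them is an i.i.d.\ size-$k$ sample from $\pi'$. I expect this step --- making the ``independent of $U$'' and conditional-i.i.d.\ claims fully rigorous, together with the subsequent bookkeeping over the data-dependent, random $N'$ via the law of total variance --- to be the main obstacle; everything after it is an application of results already in the appendix.

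Next I would reduce $\mse^{\texttt{RS}}_k$ to a U-statistic and decompose the kernel. Conditional on $N'$, $\mse^{\texttt{RS}}_k$ is an order-$k$ U-statistic over $N'$ i.i.d.\ samples from $\pi'$ with kernel $h = \widehat{\mse}(\theta(\cdot))$ from \ref{eqn:msers}. I split $\widehat{\mse}(\theta(D_k))$ into (i) $\mse(\theta(D_k))$, (ii) the held-out Monte-Carlo fluctuation of $\tfrac1M\sum_i(\cdot)^2$, which is conditionally mean-zero with conditional variance $O(1/M)$ and contributes $O(1/\sqrt M)$ to fluctuation-type bias terms, and (iii) the plug-in error from replacing $\theta_0$ by $\theta(D_n)$: by the triangle inequality in $L^2(\mathbb{P}_{x,a})$, $\big|\sqrt{\mse'(\theta(D_k))} - \sqrt{\mse(\theta(D_k))}\big| \le \|f(\cdot;\theta(D_n)) - f(\cdot;\theta_0)\|_{L^2(\mathbb{P}_{x,a})} = \sqrt{\mse(\theta(D_n))}$, so after squaring, taking expectations and applying Cauchy--Schwarz (using $\|\mse(\theta(D_n))\|_{L^2} = \alpha(n)$ and $\mse(\theta(D_k)) = O(1)$) this term contributes $O(\alpha(n))$ to the variance and $O(\sqrt{\alpha(n)})$ to the bias.

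Finally I would apply the U-statistic bounds. For the kernel part $h \approx \mse(\theta(D_k))$ we have $\eta_1 \le \E_{\pi'}[\mse(\theta(D_k))^2] \lesssim \alpha(k)^2$ and $\eta_1 \le \eta_k/k$ by Theorem~\ref{thm:U-stat}, hence $\var(U \mid N') \le \tfrac{k^2}{N'}\eta_1 + \tfrac{k^2}{N'^2}\eta_k = O\!\big(\tfrac{k}{N'}\alpha(k)^2\big)$ on $\mathcal E$ (using $N' \ge k$); substituting the high-probability value $N' \asymp n/\rho_{\max}$, noting $\var(\E[\mse^{\texttt{RS}}_k \mid N', \mathcal E]) = 0$ because the U-statistic mean does not depend on $N'$, and absorbing the atypical-$N'$ event at cost $O(e^{-n/(8\rho_{\max})})$, yields the leading term $O\!\big(\alpha(k)^2\,\tfrac{k\rho_{\max}}{n}\big)$; adding the $1/M$ and $\alpha(n)$ contributions of Step~3 gives the stated variance for $\mse^{\texttt{RS}}_k$. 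For the incomplete variant $\widehat{\mse}^{\texttt{RS}}_k$, Lemma~\ref{lem:inc-U-stat} adds $\tfrac1B\eta_k \lesssim \tfrac1B\alpha(k)^2$. For the bias, the complete and incomplete U-statistics are both conditionally unbiased for $\E_{\pi'}[\widehat{\mse}(\theta(D_k))]$ on $\mathcal E$, so the only contributions are the failure term $O(\alpha(k)e^{-n/(8\rho_{\max})})$ from Step~1 and the $O(1/\sqrt M)$ and $O(\sqrt{\alpha(n)})$ terms from Step~3, giving the claimed $\textsc{Bias}$ bounds in both cases.
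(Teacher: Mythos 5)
Your proposal is correct and follows essentially the same route as the paper's proof: the same Chernoff control of the binomial acceptance count $N'$, the same reduction to a conditional order-$k$ U-statistic analyzed via Theorem~\ref{thm:U-stat} and Lemma~\ref{lem:inc-U-stat} through the law of total variance, and the same separation of the $1/M$ Monte-Carlo error and the $\alpha(n)$ plug-in error from substituting $\theta(D_n)$ for $\theta_0$. The only cosmetic difference is that you bound the plug-in term via the triangle inequality on $L^2(\mathbb{P}_{x,a})$ norms where the paper uses the elementary inequality $(a-b)^2-(c-b)^2\leq (a-c)^2$ plus Jensen, which lands on the same $O(\alpha(n))$ and $O(\sqrt{\alpha(n)})$ contributions.
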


\begin{proof}
Let $n$ be the total number of samples and $N'$ be the number of of samples accepted by the rejection sampler, which in expectation equals to $n':=\E[N']=n/\rho_{\max}$. Like before, let $k$ be the desired number of samples. Let $D_k=\{S_{1:k}\}$.

We first consider the difference between $\mse^\texttt{RS}_k$ from \ref{eqn:msers} and its ideal analogue (note that this ideal version is biased as it returns $0$ when $N'<k$),
\begin{align}
  U &= \frac{1}{c} \sum_{i=1}^{c} {\mse}(\theta(\{S_{i_j}\}_{j=1}^k)) & c = \binom{N'}{k}
  \\
    {\mse}(\theta(D_k)) &= 
    \E_{X,A} \brr{\br{f(X,A;\theta_0) -  f(X,A;\theta(D_k))}^2}
\end{align}
We now break the proof into three parts:
\begin{itemize}
    \item Part I: Express bias and variance of $\mse^\texttt{RS}_k$ in terms of bias and variance of $U$.
    \item Part II: Compute Variance of $U$ (for both complete and incomplete U-statistic).
    \item Part III: Compute bias of $U$ (for both complete and incomplete U-statistic).
\end{itemize}

\textbf{(Part I) Bias and variance of $\mse^\texttt{RS}_k$ in terms of that of $U$:}
We will first express the error of $\mse^\texttt{RS}_k$ in terms of the the convergence rate of $\mse(\theta(D_n))$, as $\theta(D_n)$ is used as a proxy for $\theta_0$ by $\mse^\texttt{RS}_k$. Observe that,
\begin{align}
    &\E[(\mse^\texttt{RS}_k - U)^2]   
    \\
    &=\E\brr{\br{\E_n[\widehat{\mse}(\theta(D_k)] - \E_n[\mse\br{\theta(D_k)}]}^2}
    \\
    &=  \E\brr{\br{\E_n[\widehat{\mse}(\theta(D_k) - \mse\br{\theta(D_k)}]}^2}
    \\
    &\leq \E\left[\E_n\brr{\left(\widehat{\mse}(\theta(D_k)) - \mse\br{\theta(D_k)}\right)^2}\right] \tag{Jensen's inequality}
    \\
    &=\E\left[\left(\widehat{\mse}(\theta(D_k)) - \mse\br{\theta(D_k)}\right)^2\right]. 
    \\
    &\overset{(a)}{=} \E\left[\left(\widehat{\mse}(\theta(D_k)) \pm \widetilde{\mse}(\theta(D_k))  - \mse\br{\theta(D_k)}\right)^2\right]
\\
&\overset{(b)}{\leq} 2\E\left[\left(\widehat{\mse}(\theta(D_k)) - \widetilde{\mse}(\theta(D_k)) \right)^2\right] + 2\E\left[ \left(\widetilde{\mse}(\theta(D_k))  - \mse\br{\theta(D_k)}\right)^2\right] 
\label{eqn:hattrue}
\end{align}
where $(a)$ follows from defining an  intermediate quantity $\widetilde{\mse}$ , which unlike  $\widehat{\mse}(\theta(D_k))$ in \ref{eqn:msers} computes the true expectation instead of empirical average,
\begin{align}
    \widetilde{\mse}(\theta(D_k)) = \E_{X,A}\brr{\br{f(X_i,A_i;\theta(D_n)) -  f(X_i,A_i;\theta(D_k))}^2},
\end{align}
and $(b)$ follows from $(a+b)^2\leq 2(a^2 + b^2)$. Now we bound the two terms in the RHS of \ref{eqn:hattrue}.

To bound the first term in the RHS of \ref{eqn:hattrue}, note that since predictions are uniformly bounded and the $M$ samples are independent of the samples in $D_n$, we have:
\begin{align}
    \E\left[\left(\widehat{\mse}(\theta(D_k)) -  \widetilde{\mse}(\theta(D_k)) \right)^2\right] \leq \frac{C}{M}, \label{eqn:hattilde}
\end{align}
for some constant $C$, since $ \widetilde{\mse}(\theta(D_k)) $ is the mean of $\widehat{\mse}(\theta(D_k))$, conditional on $D_n$ and $\widehat{\mse}(\theta(D_k))$ is the sum of $M$ iid variables, conditional on $D_n$. 

To bound the second term in the RHS of \ref{eqn:hattrue}, we have that:
\begin{align}
    & \E\left[\left({\mse}(\theta(D_k)) -  \widetilde{\mse}(\theta(D_k)) \right)^2\right] 
    \\
    &= \E\left[\E_{X,A} \left[\br{f(X,A;\theta_0) -  f(X,A;\theta(D_k))}^2 -  \br{f(X,A;\theta(D_n)) -  f(X,A;\theta(D_k))}^2\right]^2\right] \nonumber\\
    &\overset{(a)}{\leq} \E\left[\E_{X,A} \brr{\br{f(X,A;\theta_0) -  f(X,A;\theta(D_n))}} ^2\right] \\
    &\leq \E\left[\E_{X,A} \brr{\br{f(X,A;\theta_0) -  f(X,A;\theta(D_n))}^2}\right] \tag{Jensen's inequality}\\
    &= \E\left[\mse(\theta(D_n))\right]
    \leq \sqrt{\E\left[\mse(\theta(D_n))^2\right]} = \alpha(n), \label{eqn:truetilde}
\end{align}
where (a) follows as $ (a-b)^2 - (c-b)^2\leq (a-c)^2$. Thus, combining \ref{eqn:hattrue}, \ref{eqn:hattilde}, and \ref{eqn:truetilde}:
\begin{align}
\E[(\mse^\texttt{RS}_k - U)^2] 
&\leq {\frac{2C}{M} + 2\alpha(n)}. \label{eqn:mseU}
\end{align}
Therefore, using \ref{eqn:mseU}, bias and variance of $\mse^\texttt{RS}_k$ can be expressed using the bias and variance of the idealized estimate $U$ as the following,
\begin{align}
    \Var(\mse^\texttt{RS}_k) =~& \var(\mse^\texttt{RS}_k + U - U)
    \\
    \leq~& 2 \Var(U) + 2\var(\mse^\texttt{RS}_k - U) &\because \var(A+B) \leq 2(\var(A) + \var(B)) \nonumber\\
     \leq~& 2 \Var(U) + 2\E[(\mse^\texttt{RS}_k - U)^2] \\
    \leq~& 2\Var(U) +\frac{4C}{M} + 4\alpha(n).
    \end{align}
    \begin{align}
    \text{Bias}(\mse^\texttt{RS}_k) =~& |\E[\mse^\texttt{RS}_k] - \E[\mse(\theta(D_k))]|\\
    =~& |\E[\mse^\texttt{RS}_k] \pm \E[U] - \E[\mse(\theta(D_k))]|\\
    \leq~& |\E[U] - \E[\mse(\theta(D_k))]| + \sqrt{\E[(\mse^\texttt{RS}_k - U)^2]}\\
    \leq~& \text{Bias}(U) + \sqrt{\frac{2C}{M} + 2\alpha(n)}.
\end{align}
Thus it suffices to analyze the bias and variance of the idealized estimate $U$.

\textbf{(Part II) Analysis of variance of $U$: } Note that the idealized estimate $U$ returns $0$ if $N'<k$ and otherwise returns a U-statistic (or an incomplete version of it):
\begin{align}
    U_{N'} &= \frac{1}{c} \sum_{i=1}^c h\br{\{S_{i_j}\}_{j=1}^k} & c = \binom{N'}{k}
\end{align}
So if we let $U$ be our estimate of the MSE on $k$ samples, we have:
\begin{align}
    U = U_{N'} 1\{N'\geq k\}.
\end{align}
\paragraph{Variance of complete U-statistic} First we note that:
\begin{align}\label{eqn:decomp-rejection-variance}
    \Var(U) = \Var(\E[U\mid N']) + \E[\Var(U\mid N')]
\end{align}
The latter term, i.e. $\Var(U\mid N')$ is the variance of a U-statistic of order $k$ with $N'$ samples drawn iid from distribution $\pi$. Thus based on Theorem~\ref{thm:U-stat}, we have that for $N'\geq k$:
\begin{align}
    \Var(U\mid N') \leq~&\frac{k^2}{N'} \eta_1 +\frac{k^2}{(N')^2} \eta_k\\
    \leq~& \frac{k}{N'} \eta_k +\frac{k^2}{(N')^2} \eta_k \tag{$\because \eta_1\leq \frac{1}{k}\eta_k$}\\
    \leq~& 2 \frac{k}{N'} \Var_{\pi}(h(S_{1:k})) \tag{$k/N'\leq 1$}\\
    \leq~& 2 \frac{k}{N'} \E_{\pi}[\mse(\theta(D_k))^2] = 2 \frac{k}{N'} \alpha(k)^2.
\end{align}
Moreover, for $N'<k$, we have by the definition of $U$ that $\Var(U\mid N')=0$. Thus we get that:
\begin{align}
    \E[\Var(U\mid N')] =~& 2 k \alpha(k)^2 \E\left[\frac{1_{\{N'\geq k\}}}{N'}\right] \leq  2 k \alpha(k)^2 \E\left[\frac{1}{\max\{N', k\}}\right] 
\end{align}
Note that $N'$ is a binomially distributed random variable, $\text{Bin}(n, 1/\rho_{\max})$. Thus by a Chernoff bound, and reminding that $n'=\E[N']=n/\rho_{\max}$, we have that:
\begin{align}
    \Pr(N'\leq (1 - \delta) n') \leq e^{-\delta^2 n'/2}.
\end{align}
Therefore to upperbound $ \E\left[\frac{1}{\max\{N', k\}}\right] $, we split it into the following two cases
\begin{align}
    \frac{1}{\max\{N', k\}} \leq  \begin{cases}
         \frac{1}{(1-\delta)n'} & \text{when  } N' > (1-\delta)n'
        \\
        \frac{1}{k } & \text{otherwise}
    \end{cases}
\end{align}
where the first case holds because irrespective of the value of $N'$, the value of $1/\max\{N',k\} \leq 1/N'$. Further, when $N' > (1-\delta)n'$, then $1/N' \leq 1/{(1-\delta)n'}$. 
Thus we get that: 
\begin{align}
    \E[\Var(U\mid N')]\leq~& 2 k \alpha(k)^2 \left(\frac{\Pr(N' > (1-\delta)n')}{(1-\delta) n'} + \frac{\Pr(N' \leq (1-\delta)n')}{k} \right)\\ 
    \leq~& 2 k \alpha(k)^2 \left(\frac{1}{(1-\delta) n'} + \frac{1}{k} e^{-\delta^2 n'/2}\right)\\
    =~& 2 \alpha(k)^2 \left(\frac{k}{(1-\delta) n'} + e^{-\delta^2 n'/2}\right)\\
    =~& 2 \alpha(k)^2 \left(\frac{k \rho_{\max}}{(1-\delta) n} + e^{-\delta^2 n/(2\rho_{\max})}\right)
\end{align}
Choosing $\delta = \sqrt{\frac{2\rho_{\max} \log(n)}{n}}$, we get:
\begin{align}
    \E[\Var(U\mid N')]\leq~& 2 \alpha(k)^2 \left(\frac{k \rho_{\max}}{(1-\delta) n} + \frac{1}{n}\right) \leq 4 \alpha(k)^2 \frac{k \rho_{\max}}{(1-\delta) n}
\end{align}
For $n$ large enough such that $n/\log(n) \geq 8\rho_{\max}$, we get $\delta\leq 1/2$ and:
\begin{align}
    \E[\Var(U\mid N')]\leq~& 8 \alpha(k)^2 \frac{k \rho_{\max}}{n}
\end{align}

We now consider the first part in the variance decomposition in Equation~(\ref{eqn:decomp-rejection-variance}). Note that for $N'\geq k$:
\begin{align}
    \E[U\mid N'] = \E[h(Z_{1:k})] = \E_{\pi}[\mse(\theta(D_k))] 
\end{align}
While for $N'<k$, we have $\E[U\mid N']=0$. Thus we have:
\begin{align}
    \Var(\E[U\mid N']) =~& \Var(\E_{\pi}[\mse(\theta(D_k))] 1\{N'\geq k\})\\
    =~& \E_{\pi}[\mse(\theta(D_k))]^2  \Var(1\{N'\geq k\})\\
    \leq~& \E_{\pi}[\mse(\theta(D_k))^2]  \Var(1\{N'\geq k\})\\
    \leq~& \alpha(k)^2 \Var(1\{N'\geq k\})\\
    =~& \alpha(k)^2 \Pr(N'\geq k)\, (1 - \Pr(N'\geq k))\\
    \leq~& \alpha(k)^2 \Pr(N'< k).
\end{align}
If we let $\delta$, be such that $k=(1-\delta) n$, then we have by a Chernoff bound 
$$\Pr(N'< k)\leq e^{-\delta^2 n'/2} = e^{-\delta^2 n/(2\rho_{\max})} = e^{-(1-k/n)^2 n/(2\rho_{\max})} $$
Assuming that $k\leq n/2$, we have:
\begin{align}
\Pr(N'< k)\leq e^{-n/(8\rho_{\max})}.
\end{align}
Since, we assume that $n/\log(n) \geq 8\rho_{\max}$, we have:
\begin{align}
    \Pr(N'< k) \leq \frac{1}{n}.
\end{align}
Thus overall we have derived that:
\begin{align}
    \Var(U) \leq 9\alpha(k)^2 \frac{k\rho_{\max}}{n}.
\end{align}

\paragraph{Variance of incomplete U-statistic} With an identical analysis, if we instead use an incomplete U-statistic with $B$ random sub-samples (Lemma \ref{lem:inc-U-stat}), then we can apply the exact same analysis, albeit, when $N'\geq k$:
\begin{align}
    \Var(U\mid N') \leq~& 2 \frac{k}{N'}\alpha(k)^2 + \frac{1}{B} \Var_{\pi}(h(S_{1:k}))\\
    \leq~&  2 \frac{k}{N'}\alpha(k)^2 + \frac{1}{B} \alpha(k)^2.
\end{align}
Following then an identical analysis, we can derive:
\begin{align}
    \Var(U) \leq \alpha(k)^2 \left(\frac{9\,k\rho_{\max}}{n} + \frac{1}{B}\right).
\end{align}

\textbf{(Part III) Analysis of bias of $U$: } Finally, we also quantify the bias of the rejection sampling estimate. Note that when $N'\geq k$, we have:
\begin{align}
    \E[U\mid N'] = \E[h(S_{1:k})] = \E_{\pi}[\mse(\theta(D_k))]
\end{align}
and the estimate is un-biased. However, when $N'<k$, then we output $0$, hence we incur a bias of:
\begin{align}
    \left|\E[U\mid N'] - \E_{\pi}[\mse(\theta(D_k))]\right| = \E_{\pi}[\mse(\theta(D_k))] \leq \alpha(k).
\end{align}
Thus overall we have a bias of:
\begin{align}
    \left|\E[U] - \E_{\pi}[\mse(\theta(D_k))]\right| \leq~& \left|\E[U\mid N'] - \E_{\pi}[\mse(\theta(D_k))]\right|\Pr(N'<k)\\
    \leq~& \alpha(k) e^{-n/(8\rho_{\max})}
\end{align}
The bias of the incomplete U-statistic variant is identical, since both variants are un-biased when $N'\geq k$.

Combining the aforementioned bounds on the idealized estimate $U$ with our error bounds between $U$ and $\mse^\texttt{RS}_k$, we get the result.
\end{proof}

\subsection{Policy Ordering Consistency Guarantees}\label{app:policy-ordering}
Let:
\begin{align}
    \mathcal L_\pi(k) =~& \E_{\pi}\left[\mse\br{\theta(D_k)}\right], & 
    \alpha_{\pi}(k) =~& \sqrt{\E_{\pi}\left[\mse\br{\theta(D_k)}^2\right]}.
\end{align}
Suppose that we assume that the mean-squared-error, irrespective of sampling distribution, satisfies that:
\begin{align}
    r_n \mathcal L_\pi(n) \rightarrow~& V_{\pi}, &
    r_n \alpha_{\pi}(n) =~& \Theta(1),
\end{align}
for some random variable $V_{\pi}$ and some rate $r_n$ (e.g. $r_n=\sqrt{n}$). Then we can consider the normalized estimate:
\begin{align}
    r_k \widehat{\mse}^\texttt{RS}_k. \label{eqn:normmse}
\end{align}
Suppose we choose $M>> \frac{n}{k\alpha(k)^2}$ and $B>>\frac{n}{k}$ and we choose $k$ such that $r_k^2/r_n\to 0$ and $k/n\to 0$. Then, from Theorem \ref{thm:main}, we have that the normalized estimate of the mean-squared-error from \ref{eqn:normmse} has:
\begin{align}
    \Var(r_k \widehat{\mse}^\texttt{RS}_k) =~&  O\left(r_k^2 \alpha(k)^2 \frac{k}{n} + r_k^2 \alpha(n)\right)  =O\left(\frac{k}{n} + r_k^2/r_n\right) \to 0 \\
    r_k \textsc{Bias}\br{\widehat{\mse}^\texttt{RS}_k}\leq~& O\left(r_k \alpha(k) e^{-n/8\rho_{\max}} + r_k\sqrt{\alpha(n)}\right) = O\left(e^{-n/8\rho_{\max}} + r_k/\sqrt{r_n}\right) \to 0
\end{align}
Thus we can conclude that the normalized mean squared error of the estimate will satisfy:
\begin{align}
    r_k^2 \E[(\widehat{\mse}^\texttt{RS}_k - \mathcal L_\pi(k))^2] \to 0
\end{align}
Thus we get that:
\begin{align}
    r_k \widehat{\mse}^\texttt{RS}_k =~& r_k \mathcal L_\pi(k) + r_k (\widehat{\mse}^\texttt{RS}_k - \mathcal L_\pi(k)) = V_{\pi} + o_p(1).
\end{align}
From this we conclude that optimizing $r_k \widehat{\mse}^\texttt{RS}_k$ over policies $\pi$ approximately optimizes $V_{\pi}$, up to an asymptotically neglibible error. Thus for a large enough sample $n$, we should expect that we are choosing an approximately optimal policy $\pi$.

\begin{cor}[Approximate Policy Ordering]\label{cor:policy-ordering}
    For any dataset $D_n$ of iid samples of size $n$ drawn from policy $\pi$, define:
\begin{align}
\mathcal L_\pi(k) =~& \E_{\pi}\left[\mse\br{\theta(D_k)}\right] & 
    \alpha_{\pi}(k) =~& \sqrt{\E_{\pi}\left[\mse\br{\theta(D_k)}^2\right]}
\end{align}
Suppose that for some appropriately defined growth rate $r_n$, we have that:
\begin{align}
    |r_n \mathcal L_\pi(n) - V_{\pi}| \leq~& \epsilon_n &
    r_n \alpha_{\pi}(n) =~& O(1)
\end{align}
for some rate $r_n$ that is policy independent and for some approximation error $\epsilon_n\to 0$. Let $\widehat{\mse}^\texttt{RS}_k(\pi)$ denote the rejection sampling estimate of the mean squared error of $\pi$. Suppose that we choose $k$ such that $r_k^2/r_n\to 0$ and $k/n\to 0$ and $k\to \infty$ and $M,B$ are large enough. Then for any fixed policy $\pi$:
\begin{align}
    \E[(r_k \widehat{\mse}^\texttt{RS}_k(\pi) - V_{\pi})^2] =~&  \E[(r_k \widehat{\mse}^\texttt{RS}_k(\pi) \pm r_k \mathcal L_\pi(k) - V_{\pi})^2]\\
    \leq~&2\epsilon_k^2 + 2r_k^2 \E[ (\widehat{\mse}^\texttt{RS}_k - \mathcal L_\pi(k))^2]\\
    \overset{(a)}{\leq}~& O\left(\epsilon_k^2 + \frac{k}{n} + \frac{r_k^2}{r_n} + e^{-n/(8\rho_{\max})}\right) =: \gamma(n,k) \to 0, 
\end{align}
where (a) follows from using the bias and variance terms from Theorem \ref{thm:main} to expand the mean-squared-error $\E[ (\widehat{\mse}^\texttt{RS}_k - \mathcal L_\pi(k))^2]$.

Let $\pi_1, \pi_2$ be two candidate policies. if we choose $\pi_2$ over $\pi_1$ based on $\hat{V}_{\pi}$ (i.e., $\hat{V}_{\pi_1} < \hat{V}_{\pi_2}$),
By our assumptions on the convergence of $r_n\mathcal L_\pi(n)$, 
we have:
\begin{align}
    r_n(\mathcal L_{\pi_1}(n) - \mathcal L_{\pi_2}(n)) \leq~& V_{\pi_1} - V_{\pi_2} + 2\epsilon_n\\
    \leq~& \hat{V}_{\pi_1} - \hat{V}_{\pi_2} + 2\left(\epsilon_n + \max_{\pi\in \{\pi_1,\pi_2\}} |V_{\pi} - \hat{V}_{\pi}|\right)\\
    \leq~& 2\left(\epsilon_n + \max_{\pi\in \{\pi_1,\pi_2\}} |V_{\pi} - \hat{V}_{\pi}|\right)
\end{align}
By Markov's inequality and a union bound we have w.p. $1-\delta$:
\begin{align}
 |V_{\pi} - \hat{V}_{\pi}| \leq \frac{2}{\delta} \E[|V_{\pi} - \hat{V}_{\pi}|] \leq \frac{2}{\delta} \sqrt{\E[|V_{\pi} - \hat{V}_{\pi}|^2]} \leq \frac{2}{\delta} \sqrt{\gamma(n,k)}
\end{align}
for $\pi\in \{\pi_1, \pi_2\}$. Thus overall we get w.p. $1-\delta$:
\begin{align}
    r_n(\mathcal L_{\pi_1}(n) - \mathcal L_{\pi_2}(n))  \leq O\left(\epsilon_n + \frac{1}{\delta} \sqrt{\gamma(n,k)}\right).
\end{align}

Thus the policy that is best according to the estimated criterion $\hat{V}_{\pi}$ is also approximately best according to the normalized expected mean-squared-error with $n$ samples, i.e. $r_n\mathcal L_\pi(n)$. 

\end{cor}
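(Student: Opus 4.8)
The plan is to establish the displayed mean-squared-error bound $\E[(r_k\widehat{\mse}^\texttt{RS}_k(\pi) - V_\pi)^2] \le \gamma(n,k)$ for each fixed policy $\pi$, and then to convert it into the high-probability ordering statement by Markov's inequality together with a union bound over $\{\pi_1,\pi_2\}$. The entire argument is essentially careful bookkeeping on top of Theorem~\ref{thm:main}.

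\textbf{Per-policy consistency.} Writing $\hat V_\pi := r_k\widehat{\mse}^\texttt{RS}_k(\pi)$ and inserting $\pm\, r_k\mathcal L_\pi(k)$, I would decompose $\hat V_\pi - V_\pi = (r_k\mathcal L_\pi(k) - V_\pi) + r_k(\widehat{\mse}^\texttt{RS}_k - \mathcal L_\pi(k))$. Applying $(a+b)^2 \le 2a^2 + 2b^2$ and the convergence hypothesis instantiated at sample size $k$, namely $|r_k\mathcal L_\pi(k) - V_\pi| \le \epsilon_k$, reduces matters to bounding $r_k^2\,\E[(\widehat{\mse}^\texttt{RS}_k - \mathcal L_\pi(k))^2]$, i.e. the normalized mean-squared-error of the rejection-sampling estimator around its target $\mathcal L_\pi(k) = \E_\pi[\mse\br{\theta(D_k)}]$. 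Next I would split this into variance plus squared bias and substitute the bounds of Theorem~\ref{thm:main}: $\var(\widehat{\mse}^\texttt{RS}_k) \le O(\alpha(k)^2 k\rho_{\max}/n + \alpha(k)^2/B + 1/M + \alpha(n))$ and $\textsc{Bias}(\widehat{\mse}^\texttt{RS}_k) \le O(\alpha(k)\,e^{-n/(8\rho_{\max})} + 1/\sqrt{M} + \sqrt{\alpha(n)})$. Multiplying through by $r_k^2$ and using the normalization hypotheses $r_k\alpha(k) = O(1)$ and $r_n\alpha(n) = O(1)$ (hence $\alpha(n) = O(1/r_n)$), the leading variance term becomes $O(k/n)$, the $\alpha(n)$ term becomes $O(r_k^2/r_n)$, the squared bias becomes $O(e^{-n/(4\rho_{\max})} + r_k^2/r_n)$ plus lower-order pieces, and the $1/M$, $1/B$ contributions vanish once $M \gg r_k^2$ and $B \gg n/k$. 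Collecting terms yields $\gamma(n,k) = O(\epsilon_k^2 + k/n + r_k^2/r_n + e^{-n/(8\rho_{\max})})$, which tends to zero under the assumed rate conditions ($k\to\infty$, $k/n\to 0$, $r_k^2/r_n\to 0$); via Jensen this also gives $\E|\hat V_\pi - V_\pi| \le \sqrt{\gamma(n,k)}$.

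\textbf{From consistency to ordering.} Fix candidate policies $\pi_1,\pi_2$ with $\hat V_{\pi_1} < \hat V_{\pi_2}$. Instantiating the convergence hypothesis at sample size $n$ for both policies, $r_n(\mathcal L_{\pi_1}(n) - \mathcal L_{\pi_2}(n)) \le (V_{\pi_1} - V_{\pi_2}) + 2\epsilon_n$; adding and subtracting $\hat V_{\pi_1} - \hat V_{\pi_2}$ and using that this difference is negative bounds the right-hand side by $2(\epsilon_n + \max_{\pi\in\{\pi_1,\pi_2\}} |V_\pi - \hat V_\pi|)$. It then remains to control the maximum: Markov's inequality gives $\Pr(|V_\pi - \hat V_\pi| > \tfrac{2}{\delta}\E|V_\pi - \hat V_\pi|) \le \delta/2$, so with the bound $\E|V_\pi - \hat V_\pi| \le \sqrt{\gamma(n,k)}$ and a union bound over the two policies, with probability $1-\delta$ we have $\max_\pi|V_\pi - \hat V_\pi| \le \tfrac{2}{\delta}\sqrt{\gamma(n,k)}$. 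Combining, with probability $1-\delta$, $r_n(\mathcal L_{\pi_1}(n) - \mathcal L_{\pi_2}(n)) \le O(\epsilon_n + \tfrac{1}{\delta}\sqrt{\gamma(n,k)})$, i.e. the policy favored by the estimated criterion is, up to a vanishing gap, no worse in normalized expected MSE at the full sample size $n$.

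\textbf{Main obstacle.} The only step requiring genuine care is the bookkeeping in the per-policy bound: matching each term of the Theorem~\ref{thm:main} variance/bias estimates, after multiplication by $r_k^2$, to one of the assumed rate conditions --- in particular verifying that $r_k\sqrt{\alpha(n)} = O(r_k/\sqrt{r_n}) \to 0$ follows from $r_k^2/r_n \to 0$, and identifying how large $M$ and $B$ must be relative to $r_k$, $n$, and $k$ so that their contributions are absorbed into $\gamma(n,k)$. Everything else --- the add-and-subtract decompositions, $(a+b)^2 \le 2a^2+2b^2$, Jensen, Markov, and the union bound --- is routine.
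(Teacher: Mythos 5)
Your proposal is correct and follows essentially the same route as the paper: the add-and-subtract of $r_k\mathcal L_\pi(k)$ with $(a+b)^2\le 2a^2+2b^2$, instantiating the convergence hypothesis at sample size $k$, plugging the bias/variance bounds of Theorem~\ref{thm:main} into $r_k^2\,\E[(\widehat{\mse}^\texttt{RS}_k-\mathcal L_\pi(k))^2]$ and absorbing each term via $r_k\alpha_\pi(k)=O(1)$ and $r_n\alpha_\pi(n)=O(1)$, and then Markov plus a union bound over $\{\pi_1,\pi_2\}$ for the ordering claim. The bookkeeping you flag as the main obstacle matches the paper's term-by-term accounting, so no gap remains.
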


\subsection{Proof of Theorem~\ref{thm:U-stat}}\label{app:u-stat-proof}

For the first part of the Lemma, i.e. proving Equation~(\ref{eqn:U-stat-var}), we first observe that:
\begin{align}
    \Var(U) = \binom{n}{k}^{-2} \sum_{1\leq i_1< \ldots < i_k\leq n} \sum_{1\leq j_1< \ldots < j_k\leq n} \Cov\left(h(Z_{i_1}, \ldots, Z_{i_k}), h(Z_{i_1}, \ldots, Z_{i_k})\right)
\end{align}
Moreover, note that if $\left|\{i_1, \ldots, i_k\} \cup \{j_1, \ldots, j_k\}\right|=c$ then 
$$\Cov\left(h(Z_{i_1}, \ldots, Z_{i_k}), h(Z_{i_1}, \ldots, Z_{i_k})\right) = \eta_c,$$
with $\eta_0=0$. Furthermore, the number of subsets $\{i_1, \ldots, i_k\}$ and $\{j_1, \ldots, j_k\}$ that have intersection equal to $c$ is $\binom{n}{k}\binom{k}{c}\binom{n-k}{k-c}$. Thus we derive Equation~(\ref{eqn:U-stat-var}):
\begin{align}
    \Var(U) = \binom{n}{k}^{-1} \sum_{c=1}^k \binom{k}{c}\binom{n-k}{k-c} \eta_c.
\end{align}
The proof of the second part follows from along the lines of Hoeffding's canonical decomposition and bares resemblance to the asymptotic normality proofs in \cite{fan2018dnn,wager2018estimation}. 
Consider the following projection functions:
\begin{align}
    h_1(z_1) =~& \E[h(z_1, Z_2, \ldots, Z_k)]\,, & \tilde{h}_1(z_1) =~& h_1(z_1) - \E\bb{h}\,, \\
    h_2(z_1, z_2) =~& \E[h(z_1, z_2, Z_3, \ldots, Z_k)] \,, & \tilde{h}_2(z_1, z_2) =~& h_2(z_1, z_2) - \E\bb{h} \,, \\
    \vdots\\
    h_k(z_1, z_2, \ldots, z_k) =~& \E[h(z_1, z_2, z_3, \ldots, z_k)] \,, &
    \tilde{h}_k(z_1, z_2, \ldots, z_k) =~& h_k(z_1, z_2, \ldots, z_k) - \E\bb{h}\,,
\end{align}
where $\E\bb{h}=\E\bb{h(Z_1,\ldots, Z_k)}$. Then we define the canonical terms of Hoeffding's $U$-statistic decomposition as:
\begin{align}
    g_1(z_1) =~& \tilde{h}_1(z_1) \,, \\
    g_2(z_1, z_2) =~& \tilde{h}_2(z_1, z_2) - g_1(z_1) - g_2(z_2)\,, \\
    g_3(z_1, z_2, z_3) =~& \tilde{h}_2(z_1, z_2, z_3) - \sum_{i=1}^3 g_1(z_i) - \sum_{1\leq i<j\leq 3} g_2(z_i, z_j) \,, \\
    \vdots\\
    g_k(z_1, z_2, \ldots, z_k) =~& \tilde{h}_k(z_1, z_2, \ldots, z_k) - \sum_{i=1}^k g_1(z_i) - \sum_{1\leq i < j \leq k} g_2(z_i, z_j) - \ldots\\
    &~~~~~~~~~~~~~~~~~~~~...- \sum_{1\leq i_1 < i_2 < \ldots < i_{k-1} \leq k} g_{k-1}(z_{i_1}, z_{i_2}, \ldots, z_{i_{k-1}}) \,.
\end{align}
The key property is that all the canonical terms in the latter expression are uncorrelated and mean zero.

Subsequently the kernel of the $U$-statistic can be re-written as a function of the canonical terms:
\begin{equation*}
   \tilde{h}(z_1, \ldots, z_k) = h(z_1, \ldots, z_k) - \E\bb{h} = \sum_{i=1}^k g_1(z_i) + \sum_{1\leq i < j \leq k} g_2(z_i, z_j) + \ldots + g_k(z_1, \ldots, z_k)\,.
\end{equation*}
By the uncorrelatedness of the canonical terms we have:
\begin{equation}\label{eqn:variance-hajek}
    \Var \bb{h(Z_1, \ldots, Z_k)} = \binom{k}{1} \E\bb{g_1^2} + \binom{k}{2} \E\bb{g_2^2} + \ldots + \binom{k}{k} \E\bb{g_k^2}\,.
\end{equation}
We can now re-write the $U$ statistic also as a function of canonical terms:
\begin{align}
   U - \E\bb{U} =~& \binom{n}{k}^{-1} \sum_{1\leq i_1 < i_2 < \ldots < i_k \leq n} \tilde{h}(Z_{i_1}, \ldots, Z_{i_k})\\
   =~& \binom{n}{k}^{-1} \bigg(\binom{n-1}{k-1}\sum_{i=1}^n g_1(Z_i) + \binom{n-2}{k-2} \sum_{1\leq i < j \leq n} g_2(Z_i, Z_j) + \ldots\\
    & ~~~~~~~~~~~~~~~~~~ + \binom{n-k}{k-k} \sum_{1\leq i_1 < i_2 < \ldots < i_k \leq n} g_k(Z_{i_1}, \ldots, Z_{i_k})\bigg)\,.
\end{align}
Now we define the H\'{a}jek projection to be the leading term in the latter decomposition:
\begin{equation*}
    U_1 = \binom{n}{k}^{-1} \binom{n-1}{k-1}\sum_{i=1}^n g_1(Z_i)\,.
\end{equation*}
The variance of the Hajek projection is:
\begin{equation*}
    \sigma_1^2 = \Var\bb{U_1} = \frac{k^2}{n} \Var\bb{h_1(z_1)} = \frac{k^2}{n}\eta_1\,.
\end{equation*}
Moreover, since the canonical terms are un-correlated and mean-zero, we also have:
\begin{align}
    \Var(U) = \Var(U_1) + \Var(U-\E[U]-U_1) = \Var(U_1) + \E\bb{\left(U-\E[U]-U_1\right)^2}
\end{align}
From an inequality due to \cite{wager2017estimation}:
\begin{align}
    \E\bb{\left(U - \E\bb{U} - U_1\right)^2} =~& \binom{n}{k}^{-2} \left\{\binom{n-2}{k-2}^2 \binom{n}{2} \E[g_2^2] + \ldots + \binom{n-k}{k-k}^2 \binom{n}{k} \E[g_k^2]\right\}\\
    =~& \sum_{r=2}^k \left\{\binom{n}{k}^{-2} \binom{n-r}{k-r}^2 \binom{n}{r} \E[g_r^2]\right\}\\
    =~& \sum_{r=2}^s \left\{\frac{k! (n-r)!}{n! (k-r)!} \binom{k}{r}\E[g_r^2]\right\}\\
    \leq~& \frac{k(k-1)}{n(n-1)} \sum_{r=2}^k \binom{k}{r} \E[g_r^2]\\
    \leq~& \frac{k^2}{n^2} \Var\bb{h(Z_1, \ldots, Z_k)}\\
    \leq~& \frac{k^2}{n^2} \eta_k\,.
\end{align}

From the above we get that:
\begin{align}
    \frac{k^2}{n}\eta_1 = \Var(U_1) \leq \Var(U) \leq \Var(U_1) + \frac{k^2}{n^2}\eta_k = \frac{k^2}{n}\eta_1 + \frac{k^2}{n^2}\eta_k.
\end{align}
Moreover, from Equation~(\ref{eqn:variance-hajek}), we have that:
\begin{align}
    \eta_m = \Var(h(Z_1, \ldots, Z_k)) \geq \binom{k}{1} \E[g_1(Z_1)^2] = \binom{k}{1} \Var(h_1(Z_1)) = k\, \eta_1.
\end{align}
which completes the proof of the Lemma.

\section{Multi-Rejection Sampling}\label{app:multi-rejection}

\subsection{Review: Standard Rejection Sampling}

Recall that to simulate draws from a distribution $p$ using samples from another sequence of distribution $\{q_i\}_{i=1}^k$, the standard rejection sampling procedure accepts a sample $S_i$ if $\xi_i \leq p(S_i)/(q_i(S_i)C) $,  where $C = \max_{s,i}  p(s)/q_i(s)$ and $\xi\sim U$.
While it is well-known in the literature, for completeness, we show that the samples drawn using the above rejection sampling procedure will simulate draws from $p(x)$,
\begin{align}
    \Pr(X=x) &= \frac{1}{k}\sum_{i=1}^K\Pr(\text{Accept } x; q_i) q_i(x)
    \\
    &= \frac{1}{k}\sum_{i=1}^k\Pr\br{U \leq \frac{p(x)}{Cq_i(x)}} q_i(x)
    \\
    &=\frac{1}{k}\sum_{i=1}^k\frac{p(x)}{Cq_i(x)} q_i(x)
    \\
    &= \frac{1}{k}\sum_{i=1}^k\frac{p(x)}{C}
    \\
    &= \frac{p(x)}{C}
    \\
    &\propto p(x).
\end{align}
An instantiation of standard rejection sampling for our setup is provided in \ref{eqn:reject}.

\subsection{Proposed: Multi-Rejection Sampling}

Here, we consider discussing the proposed multiple-importance sampling procedure more generally.
Consider the case when we have samples from multiple proposal distributions $\{q_i\}_{i=1}^k$, we can either treat each of them independently to sample from $p$, as discussed above. Alternatively, as we show below, we can treat them as a \textbf{mixture jointly}, such that even if a sample $X\sim q_i$, we form the acceptance ratio as $\frac{p(X)}{C\frac{1}{k}\sum_{j}q_j(X)}$, where $C \coloneqq \max_x {p(x)}/({{\frac{1}{k}\sum_{j=1}^kq_j(x)}})$.
The following provides proof that the proposed procedure draws samples that simulates sampling from the desired distribution $p(x)$ :
\begin{align}
    \Pr(X=x) &= \frac{1}{k}\sum_{i=1}^k\Pr(\text{Accept } x; {q_i}) q_i(x)
    \\
    &= \frac{1}{k}\sum_{i=1}^k\Pr\br{U \leq \frac{p(x)}{C {\frac{1}{k}\sum_{j=1}^kq_j(x)}}} q_i(x)
    \\
    &=\frac{1}{k}\sum_{i=1}^k \frac{p(x)}{C {\frac{1}{k}\sum_{j=1}^kq_j(x)}} q_i(x)
    \\
    &= \frac{p(x)}{C {\frac{1}{k}\sum_{j=1}^kq_j(x)}}  \frac{1}{k}\sum_{i=1}^kq_i(x)
    \\
    &= \frac{p(x)}{C}
    \\
    &\propto p(x).
\end{align}
An instantiation of multi-rejection sampling for our setup is provided in \ref{eqn:multirss}.

\begin{rem}
Validity of rejection sampling requires the support assumption, that is for $\mathcal X \coloneqq \{x: p(x)>0\}$ it hods that $q_i(x)>0, \,\forall i \in \{1,...,k\},\forall x \in \mathcal X$.
    Multiple-rejection sampling replaces the support condition from \textit{all} $q_i$'s to \textbf{at least one} $q_i$. That is, we only require that  $\frac{1}{k}\sum_{i=1}^k q_i(x)>0, \,\forall x \in \mathcal X$. Alternatively, we only require that  $\forall x \in \mathcal X, \{\exists i \in \{1,...,k\},  \text{s.t. }  q_i(x)>0\}$.  
    
    For instance, if $q_1$ (the first exploration policy) is uniform, then subsequent $q_i$'s can even be \textit{deterministic} and the data from it can still be used for multiple rejection sampling. This would not have been possible if we had used single-rejection sampling. 
\end{rem}

\begin{rem}
    Multiple-rejection sampling can often also reduce the value of the normalizing constant significantly as now
    \begin{align}
        \bar \rho_{\max} \coloneqq \max_x \frac{p(x)}{{\frac{1}{k}\sum_{j=1}^kq_j(x)}}
    \end{align}
    which can reduce the failure rate of rejection sampling significantly.

\end{rem}
 It can be shown that the expected number of samples accepted using  multiple-rejection sampling will always be more than or equal to the number of samples  accepted under individual rejection sampling, irrespective of how different are the proposal distributions $\{q_i\}_{i=1}^k$ to the target $p$. To see this, recall that for individual rejection sampling, the expected number of samples accepted from $N$ samples drawn from a proposal $q$ is $N/\rho_{\max}$ as
\begin{align}
\Pr(\text{Accept} \,\,X)
& =
\Pr\left(U\leq\frac{p(X)}{\rho_{\max}\,q(X)}\right)\\
& = 
\int
\Pr\left(\left.U\leq\frac{p(x)}{\rho_{\max}\,q(x)}\right| X = x\right)q(x)\,dx\\
& = 
\int \frac{p(x)}{\rho_{\max}\,q(x)} q(x)\,dx\\
& = 
\frac{1}{\rho_{\max}}.
\end{align}
 Let there be $k$ proposal distributions, with $N$ samples drawn from each distributions.
 Therefore, the expected number of samples accepted would be
 \begin{align}
     \sum_{i=1}^k \frac{N}{\rho_{\max}^i} = N\sum_{i=1}^k \frac{1}{\rho_{\max}^i} =      N\sum_{i=1}^k \frac{1}{{\max_x} \frac{p(x)}{q_i(x)}} =      N\sum_{i=1}^k {\min}_x\br{\frac{q_i(x)}{p(x)}   }.
 \end{align}
 Contrast this with the expected number of samples accepted under multiple-rejection sampling, where all the $Nk$ samples are treated equally,
 \begin{align}
     \frac{Nk}{\bar \rho_{\max}} =     \frac{Nk}{\max_x \frac{p(x)}{{\frac{1}{k}\sum_{i=1}^kq_i(x)}}} = Nk \min_{x} \frac{{\frac{1}{k}\sum_{i=1}^kq_i(x)}}{p(x)} = N \min_{x} \br{\sum_{i=1}^k\frac{q_i(x)}{p(x)}}.
 \end{align}
 Now,
 \begin{align}
    \forall x, \quad \sum_{i=1}^k\frac{q_i(x)}{p(x)} &\geq \sum_{i=1}^k {\min}_x\br{\frac{q_i(x)}{p(x)}   }
     \\
     \therefore \min_{x} \br{\sum_{i=1}^k\frac{q_i(x)}{p(x)}} &\geq \sum_{i=1}^k {\min}_x\br{\frac{q_i(x)}{p(x)}   }
     \\
     \therefore  \frac{Nk}{\bar \rho_{\max}} &\geq  \sum_{i=1}^k \frac{N}{\rho_{\max}^i} \geq  \frac{N}{\rho_{\max}^i} \quad  \forall i \in \{1,...,k\}.
 \end{align}
\textbf{Example 1.} consider the case where $N$ samples are drawn from each of the two proposal distributions $q_1$ and $q_2$ over $\{0,1\}$, such that $q_1(x=1) = q_2(x=0) = 0.75$ and $q_1(x=0) = q_2(x=1) = 0.25$. Let target distribution be $p(x=0) = p(x=1) = 0.5$. In this case, it can be observed that individual rejection sampling will  reject $50\%$ of the samples! In contrast, multiple-rejection sampling will accept \emph{all} the samples as the average proposal distribution $(q_1 + q_2)/2 = p$.

\textbf{Example 2.} Consider the other extreme case where one of the proposal distribution is exactly the same as the target distribution and the other is completely opposite. We will see that multiple-rejection sampling stays robust against such settings as well. 

Let $N$ samples be drawn from each of the two proposal distributions $q_1$ and $q_2$ over $\{0,1\}$, such that $q_1(x=1) = q_2(x=0) \approx 0$ and $q_1(x=0) = q_2(x=1) \approx 1$ (here we use `$\approx$' instead of `$=$' to ensure that individual rejection sampling satisfies support assumption (note: multiple rejection sampling will be valid in the case of equality as well)). Let the target distribution be $p=q_1$. 

In this case, both individual rejection sampling and multiple-rejection sampling accept $50\%$ of all the samples drawn (i.e., $100\%$ of the samples drawn from $q_1$). This can be worked out by observing that (a) Since $p(x=1)=0$, no samples will be selected from the samples drawn from $q_2$. (b) The average proposal distribution is $\bar q = [0.5, 0.5]$, therefore $\max_x p(x)/\bar q(x) = 2$. Since $q_1(x=0) = 1$ only $0$'s are sampled from $q_1$. Probability of accepting zero under multiple rejection sampling is $\br{p(x=0)/\bar q(x=0)}/\br{\bar \rho_{\max}} = (1/0.5) / 2  = 1$.

\section{Algorithm}
\label{apx:algo}

\subsection{Pseudo-code}

\IncMargin{1em}
	\begin{algorithm2e}[H]
		\textbf{Input} Number of allocations $K$, Size of each allocation $n$, Subsampling factor $\alpha$
		\\
        $\pi = \texttt{uniform}$ 
        \\
        $D = \texttt{GetData}(\pi, n)$ \Comment{Sample $n$ new data using $\pi$}
        \\
        \For{$i \in [1,...,K-1]$}
        {
            $\pi = \texttt{Optimize}(D, \alpha, Kn)$
            \\
            $D = \texttt{GetData}(\pi, n)$ 
        }
        Return $\theta = \texttt{Estimate}(D)$
		\caption{DIA: Designing Instruments Adaptively}
		\label{Alg:1}  
	\end{algorithm2e}
	\DecMargin{1em} 

\IncMargin{1em}
	\begin{algorithm2e}[H]
		\textbf{Input} Dataset $D$, Sub-sampling factor $\alpha$, Max-budget $N$
		\\
        Initialize $\pi_\varphi$
        \\
        Initialize $\Theta$ \Comment{Set of $B$ parameters}
        \\
        $n = |D|$
        \\
        $\hat \theta_0 = \texttt{Estimate}(D)$ \Comment{Compute estimate on the entire data} \label{alg:theta0}
        \\
        \While{$\pi_\varphi$ not converged}
        {
         $\pi^\text{eff}(Z|X) \leftarrow \frac{n}{N} \Pr(Z|X; D) + \br{1-\frac{n}{N}} \pi_\varphi(Z|X)$ \Comment{Equation \ref{eqn:pieff}}
         \\
		\vspace{8pt}
		\nonl \textcolor[rgb]{0.5,0.5,0.5}{\#Parallel For loops}
        \\
            \For{$b \in [1,...,B]$} 
            {
                $D_k = \texttt{Multi-Reject-Sampler}(D, \pi^\text{eff}, \alpha)$ \Comment{Equation \ref{eqn:multirss}, with $k=|D|^\alpha$}
                \\
                $\Theta[b] = \texttt{Estimate}(D_k, \text{Init}=\Theta[b])$ \Comment{Optional: Init from previous solution}
                \\
                $\mse = (\Theta[b] - \hat \theta_0)^2$ \label{alg:mse}
                \\
                $\mathcal I = \texttt{GetInfluences}(\mse, \Theta[b], D_k)$ \Comment{Equation \ref{eqn:inff2}}
                \\
                $\nabla^{IF}[b] = \sum_{s \in D_k} \nabla_\varphi \log\pi^\text{eff}(s) \mathcal I(s)$ \Comment{Equation \ref{eqn:samplereinforceproxyif}}
            }
            $\pi_\varphi \leftarrow \texttt{Update}(\pi_\varphi, \nabla^{IF})$ \Comment{Any optimizer: SGD, Adam, etc.}
        }
        Return $\pi_\varphi$
		\caption{Optimize}
		\label{Alg:2}  
	\end{algorithm2e}
	\DecMargin{1em} 

We provide a pseudo code for the proposed DIA algorithm in \ref{Alg:1} and \ref{Alg:2}. In the following we provide a short description of the functions used there-in.
\begin{itemize}[leftmargin=*]
    \item $\texttt{GetData}$: Uses the given policy to sample $n$ new data samples
    \item $\texttt{Estimate}$: Uses \ref{eqn:moments} to solve for the required nuisances and the parameters of interest.
    \item $\texttt{Multi-Reject-Sampler}$: Takes as input dataset $D$ of size $n$ and returns a sub-sampled dataset of size $k=n^\alpha$ using the proposed multi-rejection sampling \ref{eqn:multirss}.  Since the data is collected by the algorithm itself, DIA has access to $\pi$ and also all the prior policies (that are require for the denominator of $\bar \rho$) and can directly estimate $\bar \rho$. In practice, the max mutli-importance ratio $\bar \rho_{\max}$ may not be known. Therefore, we use the empirical supremum \citep{caffo2002empirical}, where empirical max of multi-importance ratios are used in the place of $\bar \rho_{\max}$.
    \item $\texttt{GetInfluences}$: Uses \ref{eqn:inff2} to return influence of each of the samples $s \in D_k$ on the $\mse$.
Here we discuss how to compute it efficiently using Hessian vector products \citep{pearlmutter1994fast}.
Recall, from \ref{eqn:inff1} and \ref{eqn:inff2},
    \begin{align}
   \mathcal I^{(1)}_\mse(S_i) &=   
    \frac{\mathrm d }{\mathrm d\theta} \mse(\hat \theta)\, \mathcal I^{(1)}_\theta(S_i)  \label{eqn:infff1}
    \\
     \mathcal I^{(1)}_\theta(S_i) &= - \frac{\partial M_n}{\partial \theta}^{-1}\!\!\!\!\!\!{(\hat \theta, \hat \phi)} \brr{m(S_i, \hat \theta, \hat \phi) - { \frac{\partial M_n}{\partial \phi}{(\hat \theta, \hat \phi)}\brr{\frac{\partial Q_n}{\partial \phi}^{-1}\!\!\!\!\!\!{(\hat \phi)} q(S_i, \hat \phi)}}}. \label{eqn:infff2}
\end{align}
Note that $ \frac{\mathrm d }{\mathrm d\theta} $ can be computed directly as it is just the gradient of the $\mse$.
Below, we focus on $\mathcal I^{(1)}_\theta(S_i)$.
To begin, Let $\theta \in \R^{d_1}$ and $\phi \in \R^{d_2}$. Let $H = \frac{\partial Q_n}{\partial \phi} \in \R^{d_2\times d_2 }$ and $v = q(S_i, \hat \phi) \in \R^{d_2}$, such that we want to compute $H^{-1}v$.
Notice that this is a solution to the equation $Hp = v$, therefore we can solve for $\argmin_{p \in \R^{d_2}} \lVert Hp - v\rVert^2$ using any method (we make use of conjugate gradients).
Importantly, note that we no longer need to explicitly compute the inverse.
Further, to avoid even explicitly computing or storing $H$, we use the `stop-gradient' operator $\texttt{sg}$ in auto-diff libraries and express
\begin{align}
    Hp &= \frac{\partial Q_n}{\partial \phi} p =
    \frac{\partial}{\partial \phi} \langle Q, \texttt{sg}(p)\rangle, & \in \R^{d_2} \label{eqn:hvp}
\end{align}
which reduces to merely taking a derivative of the scalar $\langle Q, \texttt{sg}(p)\rangle \in \R$. Thus using hessian-vector product and conjugate gradients we can compute $H^{-1}v$, even for arbitrary neural networks with $d_2$ parameters.
Similarly, we can compute all the terms in $\mathcal I^{(1)}_\theta(S_i)$, by repeated use of Hessian-vector products.
Other tricks can be used to scale up this computation to millions \citep{lorraine2020optimizing} and billions of hyper-parameters \citep{grosse2023studying}.

Finally, we also observe that the higher order influence $\mathcal I^{(2)}_\theta(S_i)$ can be partially estimated using the first order $\mathcal I^{(1)}_\theta(S_i)$ terms. Recall from \ref{eqn:inf1}
\begin{align}
   \mathcal I^{(1)}_\mse(S_i) &=   
    \frac{\mathrm d }{\mathrm d\theta} \mse\br{\hat \theta_{i,\epsilon, \hat \phi_{i,\epsilon}}} \mathcal I^{(1)}_\theta(S_i) \Bigg|_{\epsilon=0}
   \quad \text{where, }\quad 
    \mathcal I^{(1)}_\theta(S_i) = \frac{\mathrm d\hat \theta_{i,\epsilon, \hat \phi_{i,\epsilon}}}{\mathrm d\epsilon}\Big|_{\epsilon=0}.
\end{align}
Therefore, 
\begin{align}
   \mathcal I^{(2)}_\mse(S_i) &=   
   \frac{\mathrm d }{\mathrm d\epsilon} \br{  \frac{\mathrm d }{\mathrm d\theta} \mse\br{\hat \theta_{i,\epsilon, \hat \phi_{i,\epsilon}}} \mathcal I^{(1)}_\theta(S_i) }\Bigg|_{\epsilon=0}
\\
&=
    \mathcal I^{(1)}_\theta(S_i)^\top  \frac{\mathrm d^2 }{\mathrm d^2\theta} \mse\br{\hat \theta_{i,\epsilon, \hat \phi_{i,\epsilon}}} \mathcal I^{(1)}_\theta(S_i) \Bigg|_{\epsilon=0}
    \\
    &\quad\quad\quad + \underbrace{
     \frac{\mathrm d }{\mathrm d\theta} \mse\br{\hat \theta_{i,\epsilon, \hat \phi_{i,\epsilon}}}  \frac{\mathrm d }{\mathrm d\epsilon} \mathcal I^{(1)}_\theta(S_i) \Bigg|_{\epsilon=0}}_{\text{Ignore}}
     \\
     &\approx   \mathcal I^{(1)}_\theta(S_i)^\top \frac{\mathrm d^2 }{\mathrm d^2\theta} \mse(\hat \theta)\, \mathcal I^{(1)}_\theta(S_i).  \label{eqn:partialsecond}
\end{align}
Hessian-vector product can now again be used to copmute this approximation of $ \mathcal I^{(2)}_\mse(S_i)$ using the already computed $\mathcal I^{(1)}_\theta(S_i)$.
    \item $\texttt{Update}$: Uses any optimizer to update the parameters of the policy.
\end{itemize}

\subsection{Influence function usage when the estimates are stochastic}
\label{apx:stochastic}

When using influence functions for deep-learning based estimators, prior works raise the caution that influence function might not provide an accurate estimate of the leave-one-out loss \citep{basu2020influence,bae2022if}. One of the reasons is that the optimization process is susceptible to converge to different (local) optimas based on the stochasticity in the training process. Therefore, the difference between an estimate of $\theta(D_n)$ and $\theta(D_{n\backslash i})$ is not only due to the sample $S_i$ but also on the other stochastic factors (e.g., what was the parameter initialization when computing $\theta(D_n)$ vs $\theta(D_{n\backslash i})$). However, computing the influence of sample $S_i$ using influence functions does not account for stochasticty in the training process. In the following, we discuss why this might be less of a concern for our use-case.

Let $\xi$ be a noise variable that governs all the stochastic factors (e.g., parameter initialization). Let $\theta(D_n, \xi)$ be a deterministic function given a dataset $D_n$ and $\xi$. Therefore, similar to \pref{eqn:loss},
\begin{align}
     \mathcal L(\pi) = \underset{D_n\sim \pi, \xi}{\E}\brr{ \mse(\theta(D_n, \xi))}.
\end{align}
Then similar to \pref{eqn:reinforceq},
 \pref{eqn:samplereinforcecv}, and \pref{eqn:samplereinforceiv}
\begin{align}
     \nabla \mathcal L(\pi) &= \underset{D_n\sim \pi, \xi}{\E} \brr{\mse\br{\theta\br{D_n, \xi}} \sum_{i=1}^n  \frac{\partial \log \pi(Z_i|X_i)}{\partial \pi}}
     \\
     &=\underset{D_n\sim \pi,\xi}{\E} \brr{\sum_{i=1}^n  \frac{\partial \log \pi(Z_i|X_i)}{\partial \pi} \br{\mse\br{\theta\br{D_n,\xi}} - \mse\br{\theta\br{D_{n\backslash i},\xi}} }} \label{eqn:xisame}
     \\
     &\approx \underset{D_n\sim \pi,\xi}{\E} \brr{\sum_{i=1}^n  \frac{\partial \log \pi(Z_i|X_i)}{\partial \pi}  \mathcal I_{\mse}(S_i, \xi)}  
\end{align}
where, $\mathcal I_{\mse}(S_i, \xi)$ is similar to
\pref{eqn:samplereinforceiv} and \pref{eqn:ifderivative}, but $\theta$ is also a function of $\xi$. Therefore, from \pref{eqn:xisame} it can be observed that, for our setting, we indeed want to compute the  difference from the leave-one-out estimate, where there is no difference due to the stochasticity from $\xi$. 

In practice, as we compute a sample estimate using a single sample of $D_n$ and $\xi$,  
\begin{align}
   \hat{\nabla}^{\texttt{IF}} \mathcal L(\pi)
     &=  \sum_{i=1}^n  \frac{\partial \log \pi(Z_i|X_i)}{\partial \pi}  \mathcal I_{\mse}(S_i, \xi),
\end{align}
the resulting procedure is essentially the same as if running the DIA algorithm considering $\theta(D_n)$ to be deterministic.
Our experimental setup uses DIA from Algorithm \ref{Alg:1} as-is for all the domains considered. For instance, TripAdvisor setting uses deep learning based estimators whose outputs are stochastic (e.g., due to parameter initialization), and the synthtetic IV domain uses closed form solutions where there is no stochasticity in the learned estimator, given the data.

Beyond the initialization difference, \citet{bae2022if} also point out three other sources of issue with practical influence function computation. (a) Linearization: or using first order influences in \pref{eqn:samplereinforceiv}. For our purpose, we formally characterize in Theorem~\ref{thm:inf-reinforce} the bias due to such approximation. Further, in \pref{eqn:partialsecond}, we discussed how second-order influence can also be estimated partially and be used to obtain a better estimate of the leave-one-out difference. (b) Non-convergence of the parameters. As experiment design is most useful when sample complexity is much more expensive than compute complexity, this issue can be mitigated with longer training time.   (c) Regularization when computing the inverse hessian vector product. For our purpose, we used conjugate gradients with hessian vector products to compute the inverse hessian vector product (see discussion around \pref{eqn:hvp}). In our experimental setup we did not use any additional regularization for this computation. 

More importantly, contributions of our proposed DIA framework is complementary to the solutions to the above mentioned challenges. Therefore, any advancements to resolve these challenges can also potentially be incorporated in DIA. 

\subsection{Proxy MSE alternatives}

Recall from \ref{eqn:loss} that under the knowledge of the oracle $\theta_0$, the $\mse$ can be expressed as,  
\begin{align}
{\mse}(\theta(D_k)) &= \frac{1}{M}\sum_{i=1}^M \br{f(X_i,A_i;{\color{black}\theta_0}) -  f(X_i,A_i;\theta(D_k))}^2. \label{eqn:ormse}
\end{align}
However, as we do not have access to either $\theta_0$, or (unbiased estimates of) the outcomes of $f(X_i,A_i;{\color{black}\theta_0})$, we cannot directly estimate \pref{eqn:ormse}. One way would be to decompose the $\mse$ is bias and variance, and only consider the variance component, 
\begin{align}
\widehat{\var}(\theta(D_k)) &= \frac{1}{M}\sum_{i=1}^M \br{\E\brr{ f(X_i,A_i;\theta(D_k))} -  f(X_i,A_i;\theta(D_k))}^2
\end{align}
Just optimizing for the variance is analogous to the objective considered by most prior work \citep{chaloner1995bayesian,rainforth2023modern} and can also be easily used in our framework.
However, instead of discarding the bias term from the $\mse$'s decomposition, we consider the proxy $\widehat{\mse}$ where $\theta_0$ is substituted with $\theta(D_n)$ (Lines \ref{alg:theta0} and \ref{alg:mse} in Algorithm \ref{Alg:2}),
\begin{align}
    \widehat{\mse}(\theta(D_k)) &= \frac{1}{M}\sum_{i=1}^M \br{f(X_i,A_i;\theta(D_n)) -  f(X_i,A_i;\theta(D_k))}^2 \label{eqn:submse}
\end{align}
Intuitively, for a consistent estimator,  $\theta(D_n)$ can be considered a reasonable approximation of $\theta_0$ for estimation of the $\mse$ of $\theta(D_k)$, where $k$ is much smaller than $n$ (for e.g., $k=o(n)$). This approach is similar to $\mse$ estimation using sub-sampling bootstrap \citep{romano1995subsampling} and we provide formal justification for this in Theorem \ref{thm:main}.

As we noticed above, there are multiple different choices of target quantity that can be used, and this work focused on studying the one in \pref{eqn:submse}. However, an important advantage of DIA is that the target is easily customizable, and one can use any desired target quantity  $f_{\color{red}?}$,
\begin{align}
    \mse_{\color{red}{?}}(\theta(D_k)) &= \frac{1}{M}\sum_{i=1}^M \br{f_{\color{red}?}(X_i,A_i) -  f(X_i,A_i;\theta(D_k))}^2.
\end{align}

\section{Additional Empirical Details}

\label{apx:results}

To demonstrate the flexibility of the proposed approach, we use DIA across various settings, with different kinds of estimators,
\begin{itemize}[leftmargin=*]
    \item IV (synthetic): For this domain, we use 2SLS based linear regression for both the nuisance and ATE parameters, where the solutions can be obtained in a closed-form. The parameters $\varphi$ of the instrument-sampling policy $\pi_\varphi$  correspond to logits for each instrument. The results for this setting are reported using 100 trials.
    \item CIV (synthetic): For this domain, we use moment conditions from \citep{hartford2017deep} and use logistic regression with linear parameters for both the nuisance and CATE parameters, and thus require gradient descent to search for the optimal parameters (no closed-form solution). Further, to account for covariates, the parameters $\varphi$ of the instrument-sampling policy $\pi_\varphi$  correspond to the parameters of a non-linear function modeled using a neural network. The results for this setting are reported using 30 trials.
    \item TripAdvisor (semi-synthetic): For this domain, we use neural network based non-linear parameterization of both $\theta$ and $\phi$, for the CATE and the nuisance functions, respectively. The parameters $\varphi$ of the instrument-sampling policy $\pi_\varphi$ also correspond to the parameters  of a non-linear function modeled using a neural network.
    The results for this setting are reported using 30 trials.
\end{itemize}

The policy parameters $\varphi$ in all the settings are searched using gradient expression in \ref{eqn:samplereinforceproxyif}. Below, we discuss details for each of these settings.

\subsection{Synthetic Domains}

For both the IV and the CIV setting, random $5\%$ (or 1, whichever is more) of the instruments had strong positive encouragement for treatment $A=0$ and $5\%$ (or 1, whichever is more) had strong positive encouragement for treatment $A=1$. The strength ($\gamma$) of other instruments was in between. Below, $|\mathcal Z|$ denotes the number of instruments.

The following provides the data-generating process for the IV setting:
\begin{align}
    \theta_0 &\in \R^2,   \quad\quad
    \gamma \in [0,1]^{|\mathcal Z|},   \quad\quad  
     \pi(\cdot) \in \Delta(\mathcal Z)
    \\
    Z &\overset{(a)}{\sim} \pi(\cdot) & \in \{0,1\}^{|\mathcal Z|}
    \\
    U &\sim \mathcal N(0, \sigma_u) & \in \R
    \\
    p &= \texttt{clip}\br{Z^\top \gamma + U, \min=0, \max=1} &\in [0,1]
    \\
    A &\sim \texttt{Bernoulli}(p) & \in \{0,1\}  
    \\
    \xi &\overset{(b)}{\sim} A \mathcal N(0, \sigma_1) + (1-A) \mathcal N(0, \sigma_0) &\in \R
    \\
    Y &\overset{}{=} [A, 1-A]^\top \theta_0 + U + \xi & \in \R
\end{align}
where (a) samples from a multinomial and $Z$ is a one-hot vector that indicates the chosen instrument, and (b) induces heteroskedastic \textit{noise} in the outcomes based on the chosen treatment.

For the IV setting, we make use of 2SLS which makes use of linear functions in both the first and the second stages of regression. The parameters for those settings can be obtained in closed form. Additionally, the influence function can also be computed in closed-form \citep{kahn2015influence}. The instrument sampling policy $\pi_\varphi(\cdot)$ is an (unconditional) distribution on $\Delta(\mathcal Z)$.

The following provides the data-generating process for the CIV setting:
\begin{align}
    \theta_0 &\in \R,   \quad\quad  X \in \R^d,   \quad\quad
    \gamma \in [0,1]^{|\mathcal Z|},   \quad\quad  
    \forall x: \,\,\, \pi(\cdot|x) \in \Delta(\mathcal Z)
    \\
    Z &\overset{(a)}{\sim} \pi(\cdot|X) & \in \{0,1\}^{|\mathcal Z|}
    \\
    U &\sim \mathcal N(0, \sigma_u) & \in \R
    \\
    C &\overset{(b)}{=} X_1 \gamma + (1 - X_1) (1 - \gamma) & \in [0,1]^{|\mathcal Z|}
    \\
    p &= \texttt{clip}\br{Z^\top C + U, \min=0, \max=1} &\in [0,1]
    \\
    A &\sim \texttt{Bernoulli}(p) & \in \{0,1\}  
    \\
    \xi &\overset{(c)}{\sim} A \mathcal N(0, \sigma_1) + (1-A) \mathcal N(0, \sigma_0) &\in \R
    \\
    Y &\overset{(d)}{=} X_2 + A \theta_0 + U + \xi & \in \R
\end{align}
where (a) samples from a multinomial and $Z$ is a one-hot vector that indicates the chosen instrument, (b) induces heteroskedastic compliance by flipping the compliance factor of the instruments based on the covariates (by construct, $X_1 \in \{0,1\}$), (c) induces heteroskedastic \textit{noise} in the outcomes based on the chosen treatment, and (d) induces heteroskedastic outcomes based on the covariates (by construct, $X_2 \in \R$).

For the CIV setting, $d=2$ and we parameterize $\theta$ such that $g(X,A;\theta)$ as a linear function of $X,A$. Similarly, we $\phi$ is linearly parametrized to estimate the nuisance function $\mathrm dP(A|X,Z;\phi)$ using logistic regression. Parameters for both $g$ and $\mathrm dP$ are obtained using gradient descent and the moment conditions in \ref{eqn:moments}. The instrument sampling policy $\pi_\varphi(\cdot|x)$ is modeled using a two-layered neural network with $8$ hidden units.

\subsection{TripAdvisor domain (semi-synthetic)}

Adaptivity and learning the distribution of instruments are most relevant when there are a lot of instruments. Our paper is developed for the setting where online sampling is feasible. Unfortunately, simulators for real-world scenarios that we are aware of and could access publicly (e.g., TripAdvisor’s  \citep{syrgkanis2019machine}) only consider a single/binary instrument. These simulator domains are modeled to represent the setting where expert knowledge was used to find optimal instruments (e.g., better/easier sign-up strategy to make the membership process easier). However, in reality, there could be multiple ways to encourage membership (emails, notifications, sign-up bonuses, etc.) that can all serve as instruments. Adaptivity is more relevant in such a setting to discover an optimal strategy for instrument selection, while minimizing the need for expert knowledge. Therefore, given the above considerations, we created a semi-synthetic TripAdvisor with varying numbers of potential instruments to simulate such settings.

The original data-generating process for TripAdvisor domain is available on \href{https://github.com/py-why/EconML/blob/main/prototypes/dml_iv/TA_DGP_analysis.ipynb}{Github}. 
The co-variates $X \in \R^d$ correspond to how many times the user visited web pages of different categories (hotels, restaurants, etc.),  how many times they visited the webpage through different modes (direct-search, ad-click, etc.), and other demographic features (geo-location, laptop operating system, etc.).
The original setting only had a binary instrument. To make it resemble realistic settings that are more broadly applicable (e.g., where there are a multitude of different instruments, each corresponding to a different way to encourage a user to uptake the treatment), we incorporated several weak instruments. The compliance strength of the instrument varies non-linearly with the covariates.
Our semi-synthetic version is available in the supplementary material.

For the TripAdvisor setting, we parameterize $\theta$ such that $g(X,A;\theta)$ as a non-linear function of $X,A$. Similarly, we $\phi$ is non-linearly parametrized to estimate the nuisance function $\mathrm dP(A|X,Z;\phi)$. In both cases, the non-linear function is a two-layered neural network with $8$ hidden nodes and sigmoid activation units. Parameters for both $g$ and $\mathrm dP$ are obtained using gradient descent and the moment conditions in \ref{eqn:moments}.  The instrument sampling policy $\pi_\varphi(\cdot|x)$ is modeled using a two-layered neural network with $8$ hidden units.

\subsection{Hyper-parameters:}

The choice of sub-sampling size $k$ is used as a hyper-parameter in our work.
We parametrize $k$ as $n^\alpha$. For our empirical results, we searched over the hyper-parameter settings of $\alpha \in [0.5, 0.8]$ and report the best-performing setting.

Auto-tuning suggestion for $\alpha$:  As the expected number of samples that get accepted via rejection sampling from a set of $n$ samples is $n/\rho_\text{max}$, it would be ideal for $\alpha$ to be such that $n^\alpha > n/\rho_\text{max}$. Empirical supremum can be used as a proxy for $\rho_\text{max}$. 

Further, doing rejection sampling in higher dimensions can be challenging as $\rho_{\max}$ can be very large. A standard technique for density ratio estimation (i.e., $\rho$ that governs acceptance-rejection ratio in \ref{eqn:reject}) in higher dimensions is through performing density ratio estimation on a lower dimensional subspace \citep{sugiyama2009dimensionality,sugiyama2011direct}. We believe that incorporating similar ideas into our framework can provide further improvements.

\subsection{Details of other plots:}

\paragraph{Figure \ref{fig:ta}: } 
This figure provides a comparison with conventional experiment design (that assumes that there is no confounding between what group (treatment/control) the user was recommended to be a part of (i.e., $Z$), and the group that the user actually became a part of (i.e., $A$)), and the proposed method DIA on the following data-generating process: 
\begin{align}
    \theta_0 &\in \R^2,    \quad\quad  
     \pi(\cdot) \in \Delta(\mathcal Z)
    \\
    Z &{\sim} \pi(\cdot) & \in \{0,1\}
    \\
    U &\sim \mathcal N(0, \sigma_u) & \in \R
    \\
    p &= \texttt{clip}\br{Z+ U, \min=0, \max=1} &\in [0,1]
    \\
    A &\sim \texttt{Bernoulli}(p) & \in \{0,1\}  
    \\
    \xi &\overset{(b)}{\sim} A \mathcal N(0, \sigma_1) + (1-A) \mathcal N(0, \sigma_0) &\in \R
    \\
    Y &\overset{}{=} [A, 1-A]^\top \theta_0 + U + \xi & \in \R
\end{align}
The confounder strength corresponds to $\sigma_u$. Observe that when $\sigma_u=0$, then the binary treatment's value is the same as that of the binary instrument's value, i.e., $A=Z$.

\paragraph{Figure \ref{fig:gradvar}: }
For the illustrative plots regarding bias and variance trade-offs of different gradient estimators and sampling strategies, we make use of the following data-generating process where the treatment $A \in \R$ is real-valued, the outcomes $Y$ are quadratic in $A$, and $|\mathcal Z|=2$,
\begin{align}
    \theta_0 &\in \R,   \quad\quad
    \gamma \in [0,1]^{|\mathcal Z|},   \quad\quad  
     \pi(\cdot) \in \Delta(\mathcal Z) \label{eqn:dgpa}
    \\
    Z &\overset{}{\sim} \pi(\cdot) & \in \{0,1\}^{|\mathcal Z|}
    \\
    U &\sim \mathcal N(0, \sigma_u) & \in \R
   \\
    A &\sim Z^\top \gamma + U + \mathcal N(0, \sigma_a) & \in \R 
    \\
    Y &\overset{}{\sim} A^2 \theta_0 + U + \mathcal N(0, \sigma_y) & \in \R \label{eqn:dgpb}
\end{align}

 This figure illustrates the bias and variance of different gradient estimators when the function approximator is mis-specified, i.e., $Y$ is modeled as a linear function of treatment $A$ but it is actually a \textit{quadratic} function of treatment. We use the standard 2SLS estimator for this setting.

\paragraph{Figure \ref{fig:samplervar}: } This domain also uses a 2SLS estimator and binary instrument with the data generating process in \pref{eqn:dgpa}-\pref{eqn:dgpb}. \textbf{(Left)} Illustration of how the policy orderings are preserved when using different values of $\alpha$ in $k=n^\alpha$. $\pi_0(1) = 0.2$, $\pi_1(1) = 0.5$, and $\pi_2(1) = 0.8$, where the data collection policy is $\beta(1)=0.5$. \textbf{(Right)} Variance of evaluating $\mse$ under different sampling strategies. The data collection policy $\beta(1)=0.5$ and the target policy $\pi(1)=0.6$. Sub-sample size $k = o(n) = n^{0.75}$. To better visualize the difference in the variance for small sample sizes, we have also included a log-scale version of the plot in Figure \ref{fig:logscale}

\begin{figure}
    \centering
    \includegraphics[width=0.4\textwidth]{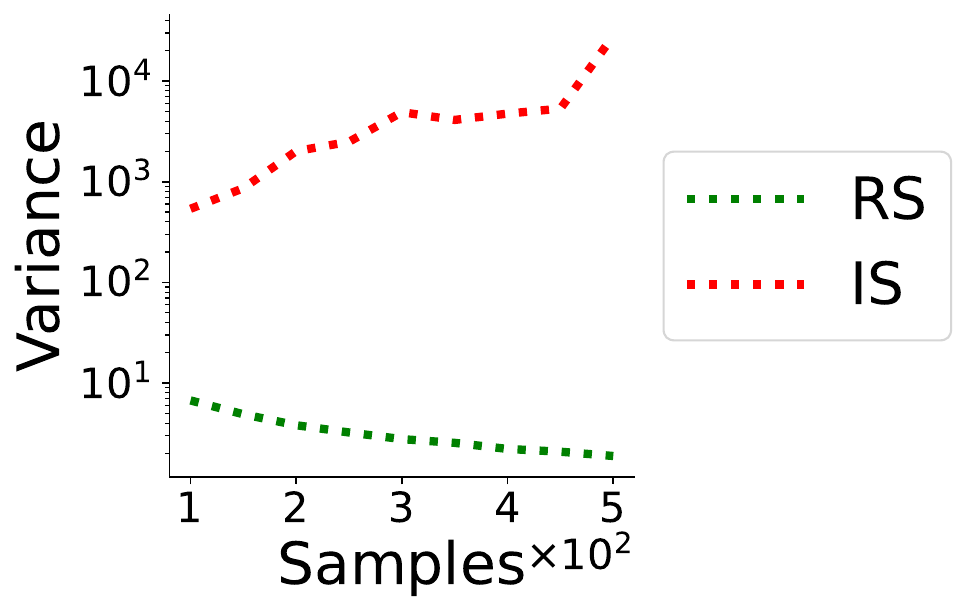}
    \caption{Log-scale version of Figure \ref{fig:samplervar} to better visualize the difference in variances at smaller values of $n$.}
    \label{fig:logscale}
\end{figure}

\end{document}